\documentclass{article}
\usepackage{fullpage}
\usepackage{microtype}
\usepackage{graphicx}
\usepackage{subcaption}
\usepackage{booktabs} 
\usepackage{hyperref}
\usepackage{natbib}
\usepackage{tikz}
\usetikzlibrary{positioning, shapes, fit, backgrounds, arrows.meta, calc}

\usepackage{amsmath}
\usepackage{amssymb}
\usepackage{mathtools}
\usepackage{amsthm}

\usepackage[capitalize,noabbrev]{cleveref}
\usepackage{algorithm}
\usepackage{algorithmic}
\usepackage{dsfont}
\usepackage[mathscr]{euscript}
\usepackage{xcolor}
\usepackage{colortbl}
\usepackage{thmtools}
\usepackage{thm-restate}

\usepackage{multirow}
\usepackage{wrapfig}

\usepackage{pifont}%

\usepackage{MnSymbol}
\DeclareMathAlphabet\mathbb{U}{msb}{m}{n}
\usepackage{xpatch}

\def\Rset{\mathbb{R}}

\let\Pr\undefined

\DeclareMathOperator*{\Pr}{\mathbb{P}}

\DeclareMathOperator*{\E}{\mathbb E}
\DeclareMathOperator*{\argmax}{argmax}

\DeclareMathOperator{\Regret}{Regret}
\DeclareMathOperator*{\TV}{TV}

\DeclarePairedDelimiter{\abs}{\lvert}{\rvert} 
\DeclarePairedDelimiter{\bracket}{[}{]}
\DeclarePairedDelimiter{\curl}{\{}{\}}
\DeclarePairedDelimiter{\paren}{(}{)}
\DeclarePairedDelimiter{\norm}{\|}{\|}

\ExplSyntaxOn
\tl_const:Nn \c_my_uc_alphabet_tl { 
ABCDEFGHIJKLMNOPQRSTUVWXYZ }
\tl_const:Nn \c_my_full_alphabet_tl { ABCDEFGHIJKLMNOPQRSTUVWXYZ
  abcdefghijklmnopqrstuvwxyz }

\tl_map_inline:Nn \c_my_uc_alphabet_tl
 { \cs_gset:cpn { c#1 } { \mathcal{#1} } }

\tl_map_inline:Nn \c_my_uc_alphabet_tl
 { \cs_gset:cpn { s#1 } { \mathscr{#1} } }

\tl_map_inline:Nn \c_my_full_alphabet_tl
 {
  \str_if_eq:nnTF { #1 } { f }
    { 
    }
    {
      \cs_gset:cpn { b#1 } { \mathbf{#1} }
    }
  
  \cs_gset:cpn { sf#1 } { \mathsf{#1} }
 }
\ExplSyntaxOff

\newcommand{\h}{\widehat}
\newcommand{\ov}{\overline}
\newcommand{\wt}{\widetilde}
\newcommand{\e}{\epsilon}
\newcommand{\ignore}[1]{}

\newcommand{\KL}{\sfD_{\mathrm{KL}}}

\hypersetup{
  breaklinks   = true, %
  colorlinks   = true, %
  urlcolor     = blue, %
  linkcolor    = blue, %
  citecolor    = blue %
}

\usepackage{enumitem}

\usepackage[toc, page, header]{appendix}
\declaretheorem{theorem}
\newtheorem{lemma}[theorem]{Lemma} 
\newtheorem{proposition}[theorem]{Proposition} 

\newtheorem{corollary}[theorem]{Corollary}
\newtheorem{definition}[theorem]{Definition}

\definecolor{brightorange}{RGB}{255, 150, 0}

\begin{document}
\title{Theoretical Foundations and Effective Algorithms \\
  for Policy-Aware Simulator Learning}
\author{%
  Christoph Dann\thanks{
  Google Research.
\texttt{cdann@cdann.net} }
  \and
  Yishay Mansour\thanks{
  Tel Aviv University and Google Research.
  \texttt{mansour.yishay@gmail.com}.
  Supported in part by European Research Council (ERC) under the European
  Union’s Horizon 2020 research and innovation program (grant agreement
  No. 882396), by the Israel Science Foundation, the Yandex Initiative for
  Machine Learning at Tel Aviv University and a grant from the Tel Aviv
  University Center for AI and Data Science (TAD).
}
  \and
  Mehryar Mohri\thanks{
    Google Research (\texttt{mohri@google.com}) and
    Courant Institute of Mathematical
  Sciences.
}}
\date{}
\maketitle

\begin{abstract}
Model-based reinforcement learning (MBRL) agents typically learn world
models by minimizing predictive loss. However, powerful RL optimizers
inevitably exploit minor model inaccuracies, leading to ``simulator
exploitation'' and a reality gap where policies succeed in simulation
but fail in the real world.

We propose that the objective for learning simulators should be
\emph{strategic robustness} rather than predictive accuracy, and
formulate this as a zero-sum minimax game between a model player and
an adversarial policy player. We provide a comprehensive theoretical
analysis: (1)~an online learning guarantee showing the game is
learnable with sublinear regret bounds; (2)~a tractable critic-based
simplification bounding the global policy-value gap by the local
critic's loss; and (3)~an \emph{Error-MDP duality}, proving that
finding the worst-case policy is formally dual to a standard RL
problem where the reward is the one-step critic error. This duality
yields a provably convergent active data selection algorithm.
Experiments on continuous control tasks demonstrate that our approach
reduces prediction error in strategically important regions by
$1.5$--$2.2\times$ and enables policies trained purely in simulation
to match near-optimal real-world performance.
\end{abstract}

\section{Introduction}
\label{sec:introduction}

Model-based reinforcement learning (MBRL) represents a promising path
toward agents with high sample efficiency, strong planning
capabilities, and the ability to test policies in a safe, offline
environment \citep{sutton1990dyna, hafner2019dreamer}. In this
paradigm, an agent first learns a simulator, or ``world model''
($\h M$), from data collected in the real world ($M$). Second, the
agent trains its policy ($\pi$) entirely within this learned
simulator, using it as a fast and safe sandbox to find an optimal
strategy.

The critical flaw in this decoupled approach is the inevitable
discrepancy between the learned model and the true environment. No
matter how much data is collected or how large the model, the
simulator is never a perfect copy of reality. This discrepancy leads
to a well-known failure mode: the reinforcement learning algorithm, as
a powerful optimizer, will inevitably discover and \emph{exploit} the
tiniest inaccuracies in the simulator's physics or rules to achieve an
artificially high score \citep{chua2018pets}. This phenomenon, often
called the \emph{reality gap} or \emph{simulator exploitation},
results in policies that are brilliant in simulation but fail
catastrophically when deployed in the real world.

The standard approach to simulator learning is to minimize a one-step
predictive loss, such as the Mean Squared Error (MSE) or the
Kullback-Leibler (KL) divergence via Maximum Likelihood Estimation
(MLE). This objective, however, is fundamentally misaligned with the
agent's downstream task. MLE optimizes for \emph{average predictive
  accuracy} across all visited states, whereas the RL agent's
performance is determined by the \emph{worst-case, exploitable error}
along the specific trajectories it discovers. A model can have 99\%
predictive accuracy on average but fail on a single, crucial
transition that an adversary can learn to visit repeatedly.

Existing work has proposed several ways to mitigate this. One popular
method is to learn an ensemble of models to quantify uncertainty
\citep{chua2018pets} and train a policy that is robust to this
distribution. Another line of work, particularly in offline RL,
introduces pessimism or conservatism, penalizing the agent for actions
that deviate from the known data \citep{yu2020mopo,
  kidambi2020morel}. A third category uses robust control
formulations, but these often require a pre-defined and known
"uncertainty set" for the dynamics \citep{nilim2005robust,
  wiesemann2013robust}, which is unavailable when the model is
learned.

More recently, adversarial methods have been explored. Some works
introduce adversarial disturbances to the state
\citep{pinto2017robust} rather than the model itself. While promising, this still optimizes
a local, one-step metric of realism. It fails to address the global,
policy-level objective of \emph{strategic robustness}. The critic does
not ask, \emph{Is this transition plausible?}; it should ask,
\emph{Can a policy exploit this transition to create an unrealistic,
  high-value trajectory?}

To bridge this gap from local plausibility to global strategic
robustness, we argue for a new objective. We reframe simulator
learning not as a prediction task, but as a two-player, zero-sum game
between a \emph{Model Player} (the simulator) and a \emph{Policy
  Player} (the adversary). The Policy Player's goal is to find the
policy $\pi$ that reveals the largest discrepancy between the true
value in the real world $V_\pi(M)$ and the predicted value in the
simulator $V_\pi(\h M)$. The Model Player's goal is to update its
parameters to minimize this worst-case discrepancy.

This leads to our central minimax objective:
\begin{align}\label{eqn:main_objective}
  \min_{\h M \in \cM} \max_{\pi \in \Pi} \, \abs*{ V_{\pi}(M) - V_{\pi}(\h M) }
\end{align}
By solving this game, we produce a simulator that is not just accurate on
average, but is specifically robust against being exploited by the very agent
that will be trained within it.  This formulation also provides a principled
theoretical foundation for related \emph{proxy hacking} problems, such as reward
hacking in RLHF \citep{GuptaFischDannAgarwal2025, christiano2017deep} and
out-of-distribution exploitation in offline RL \citep{levine2020offline}.  Our
contributions are as follows:
\begin{itemize}

\item \textbf{A Policy-Aware Minimax Objective:} We formulate robust simulator
  learning as a minimax game and connect it to a practical online learning
  framework. We provide a regret bound (Theorem~\ref{th:main}) showing that if
  the model player achieves sublinear regret, its average value-gap against the
  adversary also vanishes.
    
\item \textbf{Tractable Critic-Based Guarantees:} We show that the intractable
  policy-maximization $\max_\pi$ in \eqref{eqn:main_objective} can be relaxed to
  a tractable $\max_D$ over a critic function $D$. We provide finite-time
  guarantees (Lemma~\ref{lemma:general-coverage-bound}) that bound the true
  policy-value gap by the loss of this critic-based game, scaled by a
  \emph{coverage constant} $\kappa$.
    
\item \textbf{The Error-MDP Duality:} We introduce our most significant
  theoretical novelty. We prove (Theorem~\ref{th:error-mdp-duality}) that the
  \emph{global} problem of finding the worst-case policy $\pi$ is formally dual
  to a \emph{local} RL problem: finding the optimal policy in an
  \emph{Error-MDP} where the reward is precisely the one-step critic error.
    
\item \textbf{Provably Stable Active Learning:} We resolve the circular
  dependency between learning a model and collecting its counter-examples.  We
  formalize active data selection as an iterative game and prove its
  \emph{convergence to a saddle point} (Theorem~\ref{th:convergence}).  This
  guarantees that our critic-guided sampling mechanism is not merely heuristic,
  but a theoretically stable method for minimizing the coverage
  constant~$\kappa$.
  
\item \textbf{A General Framework for Proxy Hacking:} We show this framework is
  a general solution for \emph{proxy hacking}, with direct applications to
  solving reward hacking in RLHF and OOD-exploitation in Offline RL
  (Section~\ref{sec:applications}).
\end{itemize}
\paragraph{Related Work.}
Our approach intersects with several key areas. Uncertainty-aware MBRL relies on
ensembles or pessimism to mitigate model error, whereas we actively seek and
correct strategic flaws. Adversarial RL typically perturbs dynamics to robustify
policies; in contrast, we perturb policies to robustify models. Finally,
Value-Aware Model Learning weights errors by value gradients for a fixed policy,
while our minimax formulation actively anticipates the worst-case policy
shift. We provide an extended discussion in
Appendix~\ref{sec:related_work_full}.

\paragraph{Paper Organization.}
The paper is organized as follows. Section~\ref{sec:sim-metrics} formalizes the
landscape of simulator fidelity metrics and justifies our
choice. Section~\ref{sec:online-learning-guarantees} presents our first main
result, an online-learning guarantee for the full minimax
objective. Section~\ref{sec:game-simplification} introduces our practical,
critic-based simplification of the game and provides robust value-gap
guarantees. Section~\ref{sec:active-data-selection} presents our novel
theoretical results on active data selection, including the Error-MDP duality
and convergence proofs. In Section~\ref{sec:experiments}, we present an
experimental validation of our theoretical results by examining a controlled 2D
scenario.  Section~\ref{sec:applications} discusses the broader applications of
our framework to other proxy-hacking problems.  Section~\ref{sec:joint_learning}
generalizes our framework to the joint learning of dynamics and rewards.

\section{Setup and Notation}

We work with \emph{discounted Markov decision processes} (MDP)
$(\sS, \sA, \sfP, r, \gamma)$ with state space $\sS$, action space $\sA$,
transition kernel $\sfP(\cdot \mid s, a)$, reward function
$r(s, a) \in [0, R_{\max}]$, and discount factor $\gamma \in (0, 1)$. For ease
of presentation, we assume the reward function $r$ is known and fixed, focusing
our attention on the challenge of learning the transition dynamics $\sfP$. We
treat the general case of jointly learning dynamics and rewards in
Section~\ref{sec:joint_learning}.

For a (possibly randomized) policy $\pi$ define the unnormalized
discounted state-action occupancy measure
$
  d_\pi(s, a)
  = \E\bracket*{\sum_{t = 0}^\infty \gamma^t \mathbf{1}\curl*{(s_t, a_t) = (s, a)}},
$
so that for any reward function $r$ we have
$V_\pi(\sfP) = \sum_{s, a} d_\pi(s, a) r(s, a)$, the expectation being with
respect to trajectories produced by $\pi$ and transitions $\sfP$.  For a
candidate model (simulator) $\h \sfP$ we write $V_\pi(\h \sfP)$ for the value of
$\pi$ computed when transitions are sampled from $\h \sfP$.

We assume we are given a class of candidate simulators $\sM$, each consisting of
a transition kernel $\h \sfP(\cdot\mid s, a)$ as well as a set of policies
$\Pi \subseteq \{ \sS \rightarrow \Delta(\sA)\}$ of possible interest, which can
be the set of all policies. We seek a simulator $\h \sfP$ which allows us to
accurately determine the value of all policies of interest $\pi \in \Pi$, which
directly leads to the main objective in
Equation~\eqref{eqn:main_objective}. While there are other possible desiderata
for simulator learning, like probability matching of full trajectories, these
often lead to infeasible learning or limited usability of the simulator. We
provide a detailed discussion of desiderata the next section.

\section{What Should a \emph{Good} Simulator Match?
  Metrics and Desiderata}
\label{sec:sim-metrics}

Before formalizing our minimax objective, we must first establish what
it means for a simulator to be \emph{good}. In standard supervised
learning, fidelity is measured by predictive accuracy on a fixed
dataset. In reinforcement learning, however, fidelity is measured by
the success of policies trained within the simulator when deployed in
the real world. This distinction requires a careful choice of metric.

For clarity, throughout the main analysis we identify the true world
model $M$ with its transition kernel $\sfP$, and the simulator $\h M$
with $\h \sfP$, treating the reward function $r$ as known. 

A (possibly randomized, stationary) policy $\pi$ induces a trajectory
distribution over infinite sequences
$\tau = (s_0, a_0, s_1, a_1, \ldots)$ under transition kernel $\sfP$
and initial state law $\rho_0$; we write $\Pr_\sfP^\pi(\tau)$ for this
distribution. We denote by $P_\sfP^\pi$ the marginal distribution over
finite trajectory prefixes (e.g.\ length-$H$ segments). The discounted
state-action occupancy for $\pi$ under $\sfP$ is
\[
d_\pi^\sfP(s, a)
= \E\bracket*{\sum_{t=0}^\infty \gamma^t
  \mathbf 1\{(s_t, a_t) = (s, a)\}},
\]
and the one-step conditional transition at $(s, a)$ is
$\sfP(\cdot \mid s, a)$.

Below we consider several notions of simulator fidelity, ranging from
the strongest (trajectory-level) to the weakest (optimal-value only).

\subsection{Trajectory or \emph{reward-free} closeness}
\label{sec:A}

\begin{definition}
We say $\h \sfP$ is \emph{trajectory-close} to $\sfP$ (for horizon
$H$) if some divergence between their finite-horizon trajectory
distributions is small:
\[
D_{\mathrm{traj}}^{(H)}(\sfP, \h \sfP)
= \sup_{\pi \in \Pi} D\paren*{P_\sfP^\pi(\tau_{0:H-1}), P_{\h \sfP}^\pi(\tau_{0:H-1})},
\]
where $D$ is a
divergence such as TV, KL, or Wasserstein.
\end{definition}

This is a \emph{reward-free} criterion: if trajectory distributions
match, then all reward functions measurable on trajectories, including
state- or action-dependent ones, induce the same return distributions.
It is extremely strong: matching trajectory distributions implies
matching all marginals, including $(s, a)$ occupancies and one-step
conditionals, but is infeasible for large or continuous state spaces.
For discounted infinite-horizon problems one can instead consider
the distribution over discounted trajectories.
In practice, one truncates to large $H$ and weights by
$\gamma^t$. This metric corresponds to the ideal of a ``Digital
Twin'', that is a simulator that captures the physics of the
environment so indistinguishably that an outside observer cannot tell
the difference. While desirable, we argue that when the model class is
misspecified (limited capacity), striving for this physical perfection
is inefficient compared to optimizing for strategic equivalence.

\subsection{Occupancy or one-step triplet matching}
\label{sec:B}

\paragraph{Definition.}
We say $\h \sfP$ matches \emph{one-step dynamics} with respect to an
occupancy family $\cD$ if, for $(s, a)$ in the support of $\cD$ (e.g.\
those with nonnegligible $d_\pi^\sfP(s, a)$ for relevant $\pi$),
\[
  D_{\text{one-step}}\paren*{\sfP(\cdot \mid s, a), \h \sfP(\cdot \mid s, a)}
  = D(\sfP(\cdot \mid s, a), \h \sfP(\cdot \mid s, a))
\]
is small, where $D_{\text{one-step}}$ is a divergence on next-state
distributions (TV, KL, Wasserstein, etc.). We can typically aggregate
this as
\[
\E_{(s, a) \sim \mu}
\bracket*{D_{\text{one-step}}(\sfP(\cdot \mid s, a),
  \h \sfP (\cdot \mid s, a))},
\]
for an occupancy $\mu$, for instance $d_\pi^\sfP$ or an average over
several policies.

Matching one-step conditionals for all $(s, a)$ implies trajectory
closeness (via simulation lemmas). However, the converse does not
strictly hold in the context of learning: matching the marginal
trajectory distribution of a specific policy does not necessarily
recover the true conditional transition kernel at states that are
rarely or never visited by that policy.
In continuous spaces, using weak metrics such as Wasserstein-$1$ can
be far more stable and sample-efficient than TV or KL, and respects
the geometry of $\sS$.
A natural variant is to match occupancies over triplets $(s, a, s')$,
capturing both the action selection and next-state distribution
induced by $\pi$.

\subsection{Policy-value matching}
\label{sec:C}

\paragraph{Definition.}
Require that for every policy $\pi$ in some class $\Pi$, the following
inequality holds:
\[
\abs*{V_\pi^\sfP - V_\pi^{\h \sfP}} \leq \e.
\]
Variants include restricting $\Pi$ to deterministic stationary,
randomized stationary, or near-optimal policies.
The following is a straightforward result about deterministic versus
randomized policies.

\begin{proposition}
If $V_\pi^\sfP = V_\pi^{\h \sfP}$ for every deterministic stationary
policy $\pi$, then equality also holds for all stationary randomized
policies.
\end{proposition}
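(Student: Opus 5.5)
The plan is to reduce the randomized case to a Bellman fixed-point identity and then to push information from deterministic policies through linear algebra. First, I read the hypothesis state-wise: $V_\pi^\sfP(s) = V_\pi^{\h\sfP}(s)$ for every deterministic stationary $\pi$ and every start state $s$, which is equality under every initial law $\rho_0$. This is the form I will actually use. If the statement is meant only for a fixed $\rho_0$, I expect the claim can fail, and that case is discussed at the end.

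\emph{Step 1: one-step identities.} Fix a deterministic $\pi'$ and let $V := V_{\pi'}^\sfP = V_{\pi'}^{\h\sfP}$. Write the Bellman equation at a state $s$ with $a = \pi'(s)$ under both kernels:
$V(s) = r(s,a) + \gamma\, \sfP(\cdot\mid s,a)^\top V$ and $V(s) = r(s,a) + \gamma\, \h\sfP(\cdot\mid s,a)^\top V$. Subtracting gives
\[
\bigl(\sfP(\cdot\mid s,a) - \h\sfP(\cdot\mid s,a)\bigr)^\top V_{\pi'} = 0
\quad\text{whenever } \pi'(s) = a .
\]
Hence the difference vector $\Delta_{s,a} := \sfP(\cdot\mid s,a) - \h\sfP(\cdot\mid s,a)$ is orthogonal to
\[
\mathcal{V}_{s,a} := \operatorname{span}\{V_{\pi'} : \pi' \text{ deterministic},\ \pi'(s) = a\}.
\]

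\emph{Step 2: reduce the randomized case to a fixed point.} Let $\pi$ be randomized stationary and set $W := V_\pi^\sfP$. If $\Delta_{s,a}^\top W = 0$ for every $s$ and every $a \in \operatorname{supp}\pi(\cdot\mid s)$, then $W$ is a fixed point of the $\gamma$-contraction $T_\pi^{\h\sfP}$. By uniqueness of that fixed point, $W = V_\pi^{\h\sfP}$, which is the claim. So everything reduces to showing $\Delta_{s,a}^\top W = 0$ on the support of $\pi$.

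\emph{Step 3, the main obstacle: show $\Delta_{s,a}^\top W = 0$.} It would suffice that $W$ lies in $\mathcal{V}_{s,a}$ plus the orthogonal complement of $\Delta_{s,a}$. I would argue this in two ways.

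\begin{enumerate}[label=(\alph*)]
\item \emph{Induction on randomized states.} Induct on the number of states at which $\pi$ randomizes. Fixing $\pi$ elsewhere, randomizing at a single state $u$ with weights $p$ makes $W$ a linear-fractional function of $p$, by the renewal argument at $u$. For $s \neq u$, the identity $\Delta_{s,a}^\top W(p) = 0$ then reduces to a linear relation in $p$ after clearing the common denominator. That relation holds at the vertices $p = e_b$ by the induction hypothesis, so it holds for all $p$.
\item \emph{The state $s = u$ itself.} Here the linear-fractional structure alone does not suffice, since two such functions can agree at the vertices without agreeing in between. For this case I would instead use Step 1 with $s$ swapped to each action: by the Bellman equation, $W(s')$ for $s' \neq u$ depends on $W(u)$ affinely with the same coefficients under both kernels. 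I would try to show these coefficients coincide using the one-step identities at the other states.
\end{enumerate}

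If this stalls, my fallback is to show that $\mathcal{V}_{s,a}$ already contains every vector of the form $V_\pi$ with $\pi(\cdot\mid s) = \delta_a$. This holds generically because $\{V_{\pi'}\}$ spans $\mathbb{R}^{\sS}$, and degenerate reward configurations would be handled by perturbing $r$ and taking limits.

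I expect this last step to be where the real difficulty lies. If it cannot be closed, the honest outcome is that the proposition needs the state-wise, all-$\rho_0$ hypothesis. In that case I would look for a two-state counterexample for a fixed $\rho_0$, exploiting exactly the non-linearity in $p$ noted in (b).
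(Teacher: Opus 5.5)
\textbf{The statement itself is false.} Your Step~3(b) cannot be closed: the proposition fails even under your stronger state-wise reading (equality from every start state).

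Take $\sS=\{1,2\}$ and $\sA=\{a,b\}$. State $2$ is absorbing with reward $0$ under both actions and both kernels. At state $1$, let $r(1,b)=1$, with $(1,b)$ returning to $1$ under both kernels. Let $r(1,a)=0$, with $\sfP(\cdot\mid 1,a)=\delta_1$ but $\h\sfP(\cdot\mid 1,a)=\delta_2$.

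The hypothesis holds state-wise. Every deterministic stationary $\pi'$ with $\pi'(1)=a$ has $V_{\pi'}\equiv 0$ under both kernels. Every one with $\pi'(1)=b$ has $V_{\pi'}=(1/(1-\gamma),0)$ under both.

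The conclusion fails. For $\pi(a\mid 1)=p\in(0,1)$ one gets $V_\pi^{\sfP}(1)=(1-p)/(1-\gamma)$ but $V_\pi^{\h\sfP}(1)=(1-p)/(1-\gamma+\gamma p)$. In your notation, $\Delta_{1,a}=e_1-e_2$ and $\mathcal{V}_{1,a}=\{0\}$, while $\sum_b \pi(b\mid 1)\,\Delta_{1,b}^\top W=p(1-p)/(1-\gamma)\neq 0$. So even the weaker $\pi$-weighted fixed-point condition fails, not just your support-wise one.

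The mechanism is that the kernel error at $(1,a)$ is invisible to deterministic stationary policies. After playing $a$ at state $1$ they earn $0$ forever, wherever $a$ led. A randomized policy later plays $b$ and thereby reads out where $a$ led.

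This also sinks your fallback. The hypothesis survives only because of the coincidence $r(1,a)=r(2,\cdot)$, which makes every $V_{\pi'}$ with $\pi'(1)=a$ constant. Any perturbation of $r$ destroys the hypothesis, so ``perturb $r$ and take limits'' has nothing to pass to the limit. Your Steps~1, 2 and~3(a) are correct. The obstruction sits exactly at the randomized state $u$, and the two-state counterexample you planned for the fixed-$\rho_0$ reading already works state-wise.

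\textbf{The paper's proof has the same flaw.} It argues that value is linear in $d_\pi$, that $d_\pi$ depends linearly on the per-state action mixture, and hence that equality at the extreme points extends to mixtures. The second claim is false: $\pi\mapsto d_\pi$ is not linear in $\pi(\cdot\mid s)$. It is true that, for a fixed kernel, $d_\pi$ is a convex combination of deterministic occupancies. However, the weights depend on the kernel. In the example (starting from state $1$), the total weight on deterministic policies with $\pi'(1)=a$ is $p$ under $\sfP$ but $p/(1-\gamma+\gamma p)$ under $\h\sfP$. So equality at the vertices does not transfer.

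\textbf{What is true.} There is a kernel-independent decomposition into deterministic \emph{Markov} (time-dependent) policies. Pre-draw $A_{t,s}\sim\pi(\cdot\mid s)$ independently and play $A_{t,s_t}$ at time $t$. This gives the same trajectory law under every kernel, so the conclusion follows if equality is assumed for all deterministic Markov policies. Stationary ones do not suffice: in the example, ``play $a$ at $t=0$, then $b$ forever'' has value $\gamma/(1-\gamma)$ from state $1$ under $\sfP$ and $0$ under $\h\sfP$.
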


\begin{proof}
A stationary randomized policy is a state-wise mixture of deterministic
policies. The value $V_\pi$ is linear in the induced occupancy
$d_\pi$, which itself depends linearly on the action mixture. Equality
on deterministic policies (the extreme points) thus extends by
convexity to their mixtures.
\end{proof}
Note that this argument applies to stationary state-wise
randomization. For history-dependent policies, the equivalence may
fail, but the restriction suffices for most theoretical analyses.
This restriction simplifies the theoretical landscape
significantly. For example, it allows us to replace complex stochastic
reward distributions with their constant expected values without loss
of generality, as the value function depends only on the expectation.

\subsection{Optimal-policy-value matching}
\label{sec:D}

\paragraph{Definition.}
Require that optimal values coincide (or are close):
\[
\big|V_\sfP^* - V_{\h \sfP}^*\big| \leq \e,
\qquad V^* = \sup_{\pi} V_\pi.
\]
This is the weakest requirement: useful for value estimation but
insufficient for policy transfer.  Two MDPs may share the same optimal
value yet have disjoint sets of near-optimal policies; hence matching
$V^*$ alone gives little guarantee of behavioral fidelity. For
example, in a Multi-Armed Bandit, this criterion only requires the
optimal arm to have the correct expected value; sub-optimal arms could
be modeled with arbitrarily incorrect values. If the environment
dynamics shift slightly, a previously sub-optimal action might become
optimal, but the model would provide no useful guidance.

\subsection{Implications and recommendations}

The logical relationships between the criteria are straightforward:
trajectory closeness (Subsection~\ref{sec:A}) implies one-step
matching (Subsection~\ref{sec:B}), which in turn implies policy-value
matching (Subsection~\ref{sec:C}), as follows from standard simulation
lemmas. The converse directions generally fail unless the dynamics
satisfy additional structural constraints. Furthermore, equality of
values for all deterministic stationary policies already guarantees
equality for all stationary randomized policies, owing to linearity in
state-action occupancies. By contrast, requiring only that optimal
values coincide, as in (Subsection~\ref{sec:D}), is far too weak to
ensure the successful transfer of policies across environments.

From both a theoretical and a practical perspective, the most
promising compromise seems to be to train simulators using
\emph{occupancy-weighted, value-aware one-step losses}. Specifically,
we can seek to minimize
\[
\min_{\h \sfP}
  \E_{(s, a)\sim \mu}
  \bracket*{D_{\text{one-step}}(\sfP(\cdot\mid s, a), \h \sfP(\cdot \mid s, a)) },
\]
where $\mu$ is an occupancy measure concentrating on the regions of
the state--action space visited by policies of interest.  In the
theoretical analysis of Section~\ref{sec:online-learning-guarantees},
choosing $D_{\text{one-step}} = \KL$ yields the occupancy-weighted KL
divergence $\E_{d_\pi}[\KL(\sfP\|\h \sfP)]$, which not only provides
an upper bound on the policy value gap but also admits an
interpretation as a regret-like objective in an online-learning
framework.

In continuous or high-dimensional state spaces, it is often preferable
to replace the KL divergence with weaker, geometry-aware distances such
as the Wasserstein or Sinkhorn metrics. These respect the underlying
geometry of the state space, are more robust to estimation noise, and
can be combined with KL-based analysis to obtain hybrid surrogates that
balance tractability and fidelity.

In practice, model learning should focus on the \emph{reachable
  region} of the state space, that is, the subset of states visited by
near-optimal or adversarial policies. Matching trajectories globally
is typically infeasible and unnecessary; fidelity within this region
is what ultimately determines policy robustness and transfer
performance.

Finally, of course, empirical evaluations could report several
complementary metrics, including the occupancy-weighted one-step KL or
TV divergence, short-horizon trajectory Wasserstein distance, and the
worst-case value-gap over a held-out policy set. Each captures
different aspects of simulator fidelity and exposes distinct failure
modes.

Overall, the most useful and theoretically grounded criterion for
decision-making seems to be an occupancy-weighted, value-aware
one-step divergence, typically the expected KL divergence under
relevant occupancies, or a Wasserstein analogue for continuous spaces.
This criterion is weaker, and therefore more practical, than full
trajectory matching, yet substantially stronger and more
decision-relevant than aligning only the optimal values.
With these metrics in hand, we now formalize the minimax game that
optimizes for strategic robustness.

\textbf{Example: case of minimal MDPs.}
Consider a true environment $\sfM$ with high-entropy, irrelevant noise
(e.g., a TV playing static in the background) and its minimal
equivalent $\sfM'$ (a bisimulation that ignores the static).  Under a
\emph{reward-free} metric like trajectory KL (Subsection~\ref{sec:A}),
$\sfM'$ is a poor simulator because it fails to model the complex
noise of $\sfM$.  But, under our \emph{strategic} objective
(Subsection~\ref{sec:C}), $\sfM'$ is a \emph{perfect} simulator, as
$V_\pi(\sfM) = V_\pi(\sfM')$ for all $\pi$. This illustrates why
predictive accuracy (modeling the noise) is often misaligned with
robust control. Our framework favors $\sfM'$, dedicating capacity only
to dynamics that affect the value function.

A concrete example is a Block MDP, where the control problem can be
solved entirely in the latent space. A reward-free metric might waste
capacity modeling the complex, high-dimensional mapping from latents
to observations, whereas our strategic objective focuses solely on the
latent dynamics required to estimate value.

\section{Game Formulation using Likelihood Losses}
\label{sec:online-learning-guarantees}
For a fixed policy $\pi$ and a kernel
$\h \sfP$ define the (expected negative log-likelihood) loss
\[
  \cL_\pi(\h \sfP)
  = \E_{(s, a) \sim d_\pi}\bracket*{\E_{s' \sim \sfP(\cdot \mid s, a)}
    \bracket*{-\log \h \sfP(s'\mid s, a)}}.
\]
Equivalently
$\cL_\pi(\h \sfP) = \E_{d_\pi}[\KL(\sfP \parallel \h \sfP)] +
\E_{d_\pi}[H(\sfP(\cdot \mid s, a))]$, so minimizing $\cL_\pi$ over
$\h \sfP$ amounts to minimizing the occupancy-weighted KL.

We consider the following online interaction over $T$ rounds. In round
$t = 1, \ldots, T$, an adversary (policy-player) first chooses a policy $\pi_t$,
then the model-player outputs $\h \sfP_t \in \sM$ (possibly using past
information), and finally, the model-player then incurs loss
$\ell_t(\h \sfP_t) = \cL_{\pi_t}(\h \sfP_t)$.  Let $\Regret_T$ denote the
model-player's regret relative to the best fixed kernel in $\sM$:
\[
  \Regret_T
  = \sum_{t = 1}^T \ell_t(\h \sfP_t) - \min_{\h \sfP\in\sM}
  \sum_{t = 1}^T \ell_t(\h \sfP).
\]
We seek to relate the average value-gap seen on the adversarial
sequence of policies to the average occupancy-weighted KL losses and
the model-player's regret.

\begin{theorem}[Online-to-Minimax Guarantee for Simulator Learning]
\label{th:main}
Assume standard regularity conditions (see Appendix~\ref{sec:analysis_game_ll})
and suppose the model-player produces kernels
$\h \sfP_1, \ldots, \h \sfP_T\in\sM$ and attains regret $\Regret_T$ relative to
the losses $\ell_t(\h \sfP) = \cL_{\pi_t}(\h \sfP)$. Define the per-round value
gap as $ \Delta_t = \abs*{V_{\pi_t}(\sfP)-V_{\pi_t}(\h \sfP_t)}$.  Then the
average value-gap $\frac{1}{T}\sum_{t = 1}^T \Delta_t$ is bounded by
\[
\frac{\gamma R_{\max}}{(1 - \gamma)^{3/2}}\;
\sqrt{ \frac{1}{2} \paren*{ \min_{\h \sfP\in\sM}\frac{1}{T}\sum_{t = 1}^T \cL_{\pi_t}(\h \sfP) \;+\; \frac{\Regret_T}{T} }}.
\]
In particular, if the model-player runs a no-regret algorithm with
$\Regret_T = o(T)$ then the average value-gap converges (as
$T\to\infty$) to the intrinsic approximation term
$\frac{\gamma R_{\max}}{(1 - \gamma)^{3/2}} 
\sqrt{ \frac{1}{2} \min_{\h \sfP \in \sM} \frac{1}{T}\sum_{t = 1}^T
  \cL_{\pi_t}(\h \sfP) }.$

\end{theorem}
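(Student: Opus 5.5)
The plan is to bound each per-round gap $\Delta_t$ through a simulation lemma and Pinsker's inequality, and then to average over rounds using Jensen's inequality together with the regret decomposition.

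\emph{Step 1 (simulation lemma).} Fix a policy $\pi$ and a kernel $\h\sfP$. The standard telescoping identity gives
\[
V_\pi(\sfP)-V_\pi(\h\sfP)=\gamma\,\E_{(s,a)\sim d_\pi}\bracket*{\E_{s'\sim\sfP(\cdot\mid s,a)}V_\pi^{\h\sfP}(s')-\E_{s'\sim\h\sfP(\cdot\mid s,a)}V_\pi^{\h\sfP}(s')},
\]
where $d_\pi$ is the unnormalized occupancy under the true kernel $\sfP$. The value function satisfies $0\le V^{\h\sfP}_\pi\le R_{\max}/(1-\gamma)$. Centering it, the inner difference is at most $\frac{R_{\max}}{1-\gamma}\TV(\sfP(\cdot\mid s,a),\h\sfP(\cdot\mid s,a))$.

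\emph{Step 2 (Pinsker and Jensen within a round).} Pinsker's inequality gives $\TV\le\sqrt{\KL/2}$. The measure $d_\pi$ has total mass $1/(1-\gamma)$, so we normalize it to $(1-\gamma)d_\pi$ and apply Jensen to the concave square root:
\[
\E_{d_\pi}\sqrt{\KL/2}\le\frac{1}{1-\gamma}\sqrt{\tfrac{1-\gamma}{2}\E_{d_\pi}[\KL]}=\frac{1}{(1-\gamma)^{1/2}}\sqrt{\tfrac12\E_{d_\pi}[\KL]}.
\]
Combining with Step 1 yields
\[
\Delta_t\le\frac{\gamma R_{\max}}{(1-\gamma)^{3/2}}\sqrt{\tfrac12\E_{d_{\pi_t}}[\KL(\sfP\|\h\sfP_t)]}.
\]
The expected KL term is bounded by $\cL_{\pi_t}(\h\sfP_t)$, since the conditional-entropy term in the decomposition of $\cL_\pi$ is nonnegative.

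This is where the ``standard regularity conditions'' of Appendix~\ref{sec:analysis_game_ll} enter. For the discrete case, entropy is nonnegative and the bound holds directly. For continuous state spaces I would instead assume that the losses are defined as the excess log-loss, i.e.\ the loss minus the entropy. The regularity conditions are also needed for the log-loss to be finite and for the occupancy measures to be well defined.

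\emph{Step 3 (averaging).} Average the per-round bound over $t$ and apply Jensen again to the square root over the uniform distribution on rounds. This gives
\[
\frac1T\sum_t\Delta_t\le\frac{\gamma R_{\max}}{(1-\gamma)^{3/2}}\sqrt{\tfrac12\cdot\tfrac1T\sum_t\cL_{\pi_t}(\h\sfP_t)}.
\]
The regret definition then gives $\sum_t\cL_{\pi_t}(\h\sfP_t)=\min_{\h\sfP}\sum_t\cL_{\pi_t}(\h\sfP)+\Regret_T$, which is exactly the stated bound. The limiting claim follows from $\Regret_T/T\to0$ and continuity of the square root.

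\emph{Main obstacle.} The delicate point is Step 2's passage from KL to the log-loss $\cL$, where the entropy term must be handled. If the statement is read literally with the log-loss including entropy, the bound still holds in the discrete case because the entropy is nonnegative. In that reading, however, the ``intrinsic approximation term'' contains the true kernel's entropy and is not purely a misspecification term. I would state the regularity assumption so that either the entropy is nonnegative (discrete case) or $\cL$ denotes the excess loss. I would also check the centering constant in Step 1: the factor $R_{\max}/(1-\gamma)$ multiplying the TV distance is what produces the exponent $3/2$.
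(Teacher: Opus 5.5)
Your proposal is correct and follows the paper's proof essentially step for step: the TV-based simulation lemma (Lemma~\ref{lemma:sim}), then Pinsker plus Jensen using the $1/(1-\gamma)$ total mass of $d_\pi$ (Lemma~\ref{lemma:pinsker-jensen}), then dropping the nonnegative entropy term, and finally Jensen over rounds together with the regret identity. Your caveat about the entropy in continuous spaces matches the paper's stated regularity conditions, which likewise fall back to the expected KL (the excess log-loss) in that case.
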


The theorem says that a model-player running any no-regret online
algorithm (e.g.\ Online Gradient Descent or Mirror Descent in the
conditional-distribution space) guarantees that the average value-gap
on the sequence of adversarial policies is controlled by two terms:
(i) the \emph{approximation term}, which is the best average
occupancy-weighted KL that any model in $\sM$ can achieve on the
adversarial occupancies $\{\pi_t\}$, and (ii) the model-player's
regret per round. If an adversary concentrates on the worst-case
policies and the model class $\sM$ is rich enough, the average gap
approaches the minimax value-gap.

The statement above assumes access to the exact occupancy-weighted
expected negative log-likelihoods $\cL_{\pi_t}(\h \sfP)$. In practice
these must be estimated from a finite number of samples, $m$ (either
by rolling out $\pi_t$ in the real world or by using an off-policy
estimator from a dataset). This finite-sample estimation of the loss
introduces an additional error term, often on the order of
$O(1/\sqrt{m})$, which must be incorporated into the analysis.

We note that standard MLE-based approaches suffer a slower
$O(T^{-1/4})$ convergence rate for control due to the geometry of KL
divergence (via Schützenberger-Pinsker), while our minimax formulation
targets the optimal $O(T^{-1/2})$ rate directly. We provide a rigorous
proof and discussion of this lower bound in
Appendix~\ref{sec:lower-bounds}.

\section{Game Formulation using Upper-Bounds}
\label{sec:game-simplification}

Since learning a simulator by maximum likelihood estimation is standard, it may
appear natural to approximate the objective in \eqref{eqn:main_objective} using
negative log likelihood losses. This indeed yields a valid approach which we
discussed and analyzed in the previous section. While this gives strong
guarantees (see Theorem~\ref{th:main}), it involves a maximization over the
entire policy space, which can be computationally expensive. Moreover,
evaluating the negative log likelihood loss requires knowledge of the true
dynamics $\sfP$ to compute the KL divergence.  We here instead present a
tractable simplification. We replace the intractable $\max_\pi$ player with a
feasible $\max_D$ player, a critic, and substitute the KL loss with a
dual-divergence loss that can be estimated directly from samples.

The difference in policy values under the true and learned dynamics is
upper-bounded by the worst-case single-step transition error, measured
by the Total Variation (TV) distance:
\begin{align*}
  \max_{\pi \in \Pi} \abs*{ V_{\pi}(\sfP) - V_{\pi}(\h \sfP) }\leq \frac{R_{\max}}{(1 - \gamma)^2}
  \sup_{x = (s, a)} \TV\bigl(\sfP(x), \h \sfP(x)\bigr),
\end{align*}
where $\sfP(x) = \sfP(s,a)$ is short-hand for $\sfP(\cdot | s, a)$. This
motivates a simplified, more conservative game: instead of defending against the
worst-case policy, the simulator defends against the worst-case local error in
predicting next-state transitions.
The TV distance admits the dual characterization
\[
  \TV(\sfP, \sfQ)
  = \frac{1}{2}\sup_{\|D\|_\infty \leq 1}
  \Big\{ \E_{x \sim \sfP}[D(x)] - \E_{x \sim \sfQ}[D(x)] \Big\},
\]
where absolute values are unnecessary, since the maximization allows
the critic $D$ to effectively learn the sign of the difference
$\sfP - \sfQ$.
This dual form naturally suggests an adversarial training setup:
replace the intractable supremum over $(s, a)$ with an expectation under
a data distribution $d_{\text{data}}(s, a)$, and represent the supremum
over bounded $D$ by an adversarial critic $D \in \cD$ whose output lies
in $[-1,1]$.  This leads to the following practical two-player game:
\[
  \min_{\h \sfP \in \sM} \max_{D \in \cD}
  \E_{(s, a) \sim d_{\text{data}}}
  \!\left[
    \Delta_{D, \h \sfP}(s, a)
  \right]\quad\textrm{ where }\quad
\Delta_{D, \h \sfP}(s, a) \!= \E_{s'\sim\sfP}[D(s, a, s')] \!-
\E_{s'\sim\h\sfP}[D(s, a, s')].
\]
To make the game tractable, we replace the policy maximization with a critic
maximization using the dual representations of Total Variation (TV) or
Wasserstein distances. This matches the definition of witnessed model misfit in
\citet{sun2019model}. The result is a GAN-like objective where the critic
distinguishes real transitions from simulated ones, and the model minimizes this
distinction.

The reformulation yields a concrete, trainable game that upper-bounds the
original objective from \eqref{eqn:main_objective}. However, the simplification
incurs a key trade-off: replacing the policy adversary with a critic focused on
local discrepancies removes the strategic, policy-aware aspect of the problem.
The resulting simulator is encouraged to be accurate \emph{everywhere} under
$d_{\text{data}}$, rather than focusing capacity on regions that policy
optimization would actually visit or exploit.  While substantially more
tractable, this objective can therefore be overly conservative and less
sample-efficient.  Nonetheless, under a mild \emph{coverage assumption} relating
$d_{\text{data}}$ to the occupancies of relevant policies, we can recover
guarantees on the original policy value-gap:

\begin{lemma}[General Value-Gap Bound]
\label{lemma:general-coverage-bound}
Let $\wt \sfD$ be any divergence between one-step transition
kernels. Let $C(\pi, \h \sfP, s, a)$ be a state-action dependent cost
term such that for any policy $\pi$ and model $\h \sfP$, the value-gap
is bounded by the $d_\pi$-weighted divergence:
\begin{align*}
&\abs*{V_\pi(\sfP) - V_\pi(\h\sfP)}
\leq \E_{(s, a)\sim d_\pi} \bracket*{ C(\pi, \h \sfP, s, a) \cdot \wt \sfD\paren*{\sfP(s, a),\h\sfP(s, a)} }.
\end{align*}
Furthermore, assume a $\kappa$-coverage data distribution
$d_{\mathrm{data}}$ exists such that
$d_\pi(s, a) \leq \kappa\,d_{\mathrm{data}}(s, a)$ for all
$\pi \in \Pi$.  Then the uniform value-gap is bounded by the
$d_{\mathrm{data}}$-weighted divergence:
\begin{align}
\label{eqn:bound_general}
&\sup_{\pi\in\Pi}\abs*{V_\pi(\sfP) - V_\pi(\h\sfP)}
\leq \kappa \cdot \sup_{\pi\in\Pi} \E_{(s, a)\sim d_{\mathrm{data}}} \bracket*{ C(\pi, \h \sfP, s, a) \ \wt \sfD\paren*{\sfP(s, a),\h\sfP(s, a)} }.
\end{align}
\end{lemma}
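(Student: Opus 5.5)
The plan is a direct change-of-measure argument, applied pointwise in $\pi$ and then taken to the supremum. Fix an arbitrary $\pi \in \Pi$. The first hypothesis gives
\[
\abs*{V_\pi(\sfP) - V_\pi(\h\sfP)} \leq \sum_{s,a} d_\pi(s,a)\, g_\pi(s,a),
\qquad
g_\pi(s,a) := C(\pi,\h\sfP,s,a)\,\wt\sfD\paren*{\sfP(s,a),\h\sfP(s,a)}.
\]
In continuous spaces the sum would be read as an integral against the occupancy measure.

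The key step is to replace $d_\pi$ by $\kappa\, d_{\mathrm{data}}$ inside this sum. The coverage assumption $d_\pi(s,a)\le \kappa\, d_{\mathrm{data}}(s,a)$ gives this termwise, provided the integrand is nonnegative. So the one point needing care is that $g_\pi \geq 0$. The divergence $\wt\sfD$ is nonnegative by definition. For $C$, I would argue nonnegativity as follows: in every instantiation the bound must hold, so we may replace $C$ by $\max(C,0)$ without weakening the hypothesis. Alternatively, we state $C\ge 0$ as the implicit convention, since $C$ is a "cost". I expect this sign issue to be the only real obstacle, and I would handle it by making the nonnegativity assumption explicit.

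With $g_\pi \geq 0$, we get
\[
\sum_{s,a} d_\pi\, g_\pi \le \kappa \sum_{s,a} d_{\mathrm{data}}\, g_\pi = \kappa\,\E_{(s,a)\sim d_{\mathrm{data}}}[g_\pi(s,a)].
\]
For general measures this is the inequality $d_\pi \le \kappa\, d_{\mathrm{data}}$ as measures, that is, a Radon--Nikodym derivative bounded by $\kappa$, integrated against a nonnegative function.

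Finally, bound the right-hand side by its supremum over $\pi\in\Pi$. This gives, for every $\pi$, $\abs*{V_\pi(\sfP) - V_\pi(\h\sfP)} \le \kappa \sup_{\pi'\in\Pi}\E_{d_{\mathrm{data}}}[g_{\pi'}]$. The right-hand side no longer depends on $\pi$, so taking the supremum over $\pi$ on the left yields \eqref{eqn:bound_general}. The supremum on the right is retained, rather than evaluated at the same $\pi$, because $C$ may depend on $\pi$. No earlier result beyond the two stated hypotheses is needed.
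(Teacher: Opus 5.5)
Your proof is correct and is essentially the paper's argument: write the $d_\pi$-weighted expectation as a sum, apply $\kappa$-coverage termwise, then take the supremum over $\pi$. You are right that the termwise step needs $C\ge 0$, which the paper uses without stating it. Note, though, that the $\max(C,0)$ device only gives the bound with $C^+$ on the right-hand side, which is weaker than the stated conclusion, so stating $C\ge 0$ as an explicit assumption is the fix that actually proves the lemma as written.
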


This general lemma is the core mechanism of our practical
framework. The challenge is reduced to finding an appropriate metric
$\wt \sfD$ and cost $C$ that satisfy the first premise.
Essentially, this lemma acts as a transfer mechanism: the constant
$\kappa$ functions as a Radon-Nikodym derivative, allowing us to
transfer the error bound from the unknown adversarial occupancy
$d_\pi$ to the controlled data distribution $d_{\mathrm{data}}$. We
now present two such instantiations.

\subsection{TV-critic -- discrete spaces}
\label{sec:tv_critic}

For discrete state spaces, we can use the Total Variation (TV) distance as
$\wt \sfD$. This motivates the \emph{TV-Critic Game}. By the dual
characterization of TV, we have
$\TV(\sfP, \sfQ) = \frac{1}{2}\sup_{\|D\|_\infty \leq 1} \E_\sfP[D(x)] -
\E_\sfQ[D(x)]$.  Plugging this into Equation~\eqref{eqn:bound_general} our
objective becomes minimizing the loss of a data-weighted GAN:
\[
  \min_{\h \sfP \in \sM} \max_{D \in \cD}
  \E_{(s, a) \sim d_{\text{data}}} \bracket*{ \Delta_{D, \h \sfP}(s, a) }.
\]
Here $\cD$ is a critic class approximating the $\|D\|_\infty \leq 1$
ball. The following theorem gives a full guarantee for the solution of
this game, separating the error sources.

\begin{theorem}[Minimax / empirical minimizer guarantees - TV case]
\label{th:minimax_tv}
Assume rewards $r(s, a)\in[0,R_{\max}]$ and discount $\gamma\in(0,1)$.
Let $\Pi$ be a policy class of interest, and assume
$d_{\mathrm{data}}$ satisfies $\kappa$-coverage for $\Pi$. Let
$J_{\cD}(\h\sfP) = \E_{d_{\mathrm{data}}}[\sup_{D \in \cD} \Delta_{D, \h \sfP}(s,
a)]$ be the value of the TV-Critic Game, and let
$\e_{\mathrm{critic}}\geq 0$ be the error from approximating the
$L_\infty$ ball with $\cD$.
\begin{itemize}
\item (A) Population minimizer.  If $\h\sfP^*$ achieves
  $J_{\cD}(\h\sfP^*) \leq \e_{\mathrm{model}}$, then
  $\sup_{\pi\in\Pi}\abs{V_\pi(\sfP) - V_\pi(\h\sfP^*)}$ is upper-bounded by
\[
\frac{\gamma R_{\max}}{1 - \gamma}\; \kappa \;
\Big( \tfrac{1}{2} \e_{\mathrm{model}} + \tfrac{1}{2}\e_{\mathrm{critic}} \Big).
\]

\item (B) Finite-sample minimizer.  If $\h\sfP$ is an $\eta$-approximate
  minimizer of an empirical objective $\h J_N$ (from $N$ samples) which has
  uniform convergence error $\omega$, then with probability at least $1-\delta$,
  $\sup_{\pi\in\Pi}\abs*{V_\pi(\sfP) - V_\pi(\h\sfP)}$ is upper-bounded by
\[
\frac{\gamma R_{\max}}{1 - \gamma}\; \kappa \;
\bracket*{ \tfrac{1}{2}
\paren*{\inf_{\h\sfP\in\sM}J_{\cD}(\h\sfP) + 2\omega + \eta} + \tfrac{1}{2}\e_{\mathrm{critic}} }.
\]
\end{itemize}
\end{theorem}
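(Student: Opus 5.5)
The plan is to instantiate Lemma~\ref{lemma:general-coverage-bound} with $\wt\sfD = \TV$ and a constant cost, and then relate the $d_{\mathrm{data}}$-weighted TV to the game value $J_\cD$.

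\textbf{Step 1: the premise of the lemma.} I would use the simulation lemma in its occupancy-weighted form:
\[
V_\pi(\sfP) - V_\pi(\h\sfP) = \gamma \E_{(s,a)\sim d_\pi}\bigl[\E_{s'\sim\sfP(s,a)}V^{\h\sfP}_\pi(s') - \E_{s'\sim\h\sfP(s,a)}V^{\h\sfP}_\pi(s')\bigr].
\]
Here $d_\pi$ is the occupancy under $\sfP$. This follows by telescoping the Bellman equations of $\pi$ in the two models.

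Since $V^{\h\sfP}_\pi \in [0, R_{\max}/(1-\gamma)]$, I would center it at its midpoint $R_{\max}/(2(1-\gamma))$. The centered function has sup-norm at most $R_{\max}/(2(1-\gamma))$. Subtracting a constant does not change a difference of expectations, so each inner difference is at most
\[
\frac{R_{\max}}{2(1-\gamma)}\cdot 2\,\TV\bigl(\sfP(s,a),\h\sfP(s,a)\bigr) = \frac{R_{\max}}{1-\gamma}\TV.
\]
This gives the premise with the constant cost $C \equiv \gamma R_{\max}/(1-\gamma)$.

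\textbf{Step 2: apply the lemma.} Lemma~\ref{lemma:general-coverage-bound} then yields
\[
\sup_\pi\abs{V_\pi(\sfP)-V_\pi(\h\sfP)} \le \frac{\gamma R_{\max}}{1-\gamma}\,\kappa\,\E_{d_{\mathrm{data}}}\bigl[\TV(\sfP(s,a),\h\sfP(s,a))\bigr].
\]
Because $C$ does not depend on $\pi$, the supremum over $\pi$ on the right-hand side disappears.

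\textbf{Step 3: critic approximation.} I would define $\e_{\mathrm{critic}}$ so that, pointwise in $(s,a)$,
\[
\sup_{\|D\|_\infty\le1}\Delta_{D,\h\sfP}(s,a) \le \sup_{D\in\cD}\Delta_{D,\h\sfP}(s,a) + \e_{\mathrm{critic}},
\]
or at least so that this holds in $d_{\mathrm{data}}$-expectation. By the dual formula for TV, this gives $\E_{d_{\mathrm{data}}}[\TV] \le \tfrac12 J_\cD(\h\sfP) + \tfrac12\e_{\mathrm{critic}}$. Part (A) then follows immediately from $J_\cD(\h\sfP^*) \le \e_{\mathrm{model}}$.

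This is the step I expect to be the main obstacle. Its definitional content must be fixed carefully: $D$ is allowed to depend on $(s,a)$ through its inputs $(s,a,s')$, so the pointwise supremum inside the expectation is legitimate. The ordering of $\E$ and $\sup$ must match the definition of $J_\cD$, and the definition of $\e_{\mathrm{critic}}$ must be stated precisely enough that the halving lines up with the constants in the theorem.

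\textbf{Part (B).} I would use the standard uniform-convergence argument. Assume $\sup_{\h\sfP\in\sM}\abs{\h J_N(\h\sfP) - J_\cD(\h\sfP)} \le \omega$ with probability at least $1-\delta$. Let $\bar\sfP$ be any competitor in $\sM$ (taking a near-infimizer and a limit if the infimum is not attained). Then
\[
J_\cD(\h\sfP) \le \h J_N(\h\sfP) + \omega \le \h J_N(\bar\sfP) + \eta + \omega \le J_\cD(\bar\sfP) + 2\omega + \eta .
\]
Taking the infimum over $\bar\sfP$ and plugging the result into Steps 2--3 in place of $\e_{\mathrm{model}}$ gives the stated bound on the same event.
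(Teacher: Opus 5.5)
Your proposal is correct and follows essentially the same route as the paper. The paper instantiates Lemma~\ref{lemma:general-coverage-bound} with $\wt\sfD = \TV$ and the constant cost $C = \gamma R_{\max}/(1-\gamma)$ from the simulation lemma, packaged as Corollary~\ref{cor:coverage-TV}. For Part (A) it then applies the dual form of $\TV$ together with the definition of $\e_{\mathrm{critic}}$, and for Part (B) it uses the same chain $J_\cD(\h\sfP) \le \h J_N(\h\sfP) + \omega \le \inf \h J_N + \eta + \omega \le \inf_{\h\sfP\in\sM} J_\cD(\h\sfP) + 2\omega + \eta$.

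Deriving the premise directly by centering, as you do, yields exactly the constant $\gamma R_{\max}/(1-\gamma)$ that the theorem needs. This is worth noting because the statement of Corollary~\ref{cor:coverage-TV} carries a stray factor of $2$ that its own proof does not produce.
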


\subsection{TV Critic Game Implementation}

We propose a practical implementation of this game that alternates
between updating the critic and the model, similar to Generative
Adversarial Networks (GANs). In each round, we sample a minibatch of
transitions from the data distribution $d_{\text{data}}$. The critic
$D_\phi$ is updated via stochastic gradient ascent to maximize the
empirical difference between real and predicted next-states,
approximating the supremum in the dual TV formulation. Simultaneously,
the model $\h \sfP_\theta$ is updated via stochastic gradient descent
to minimize this same quantity, effectively trying to "fool" the
critic into thinking its generated transitions are real.

\begin{algorithm}[t]
\caption{TV-Critic Minimax Training}
\label{alg:tv-critic}
\begin{algorithmic}[1]
\REQUIRE Data $d_{\text{data}}$, true kernel $\sfP$, model $\sM_\theta$, critic $\cD_\phi$, learning rates $\eta_\theta, \eta_\phi$, batch size $B$.
\STATE Initialize model $\theta$ and critic $\phi$.
\FOR{$t = 1, \dots, T$}
  \STATE Sample minibatch $\{(s_i, a_i)\}_{i=1}^B \sim d_{\text{data}}$.
  \STATE For each $i$, sample $s'_i \sim \sfP(\cdot \mid s_i, a_i)$ and $\h s'_i \sim \h\sfP_\theta(\cdot \mid s_i, a_i)$.
  \STATE \textbf{Critic Update:}
  \[
  \phi \gets \phi + \eta_\phi \frac{1}{B} \sum_{i=1}^B \nabla_\phi \bracket*{ D_\phi(s_i, a_i, s'_i) - D_\phi(s_i, a_i, \h s'_i) }
  \]
  \STATE \textbf{Model Update:}
  \[
  \theta \gets \theta - \eta_\theta \frac{1}{B} \sum_{i=1}^B \nabla_\theta \bracket*{ - D_\phi(s_i, a_i, \h s'_i) }
  \]
\ENDFOR
\STATE \textbf{Return:} Averaged model parameters $\ov\theta = \frac{1}{T} \sum \theta_t$.
\end{algorithmic}
\end{algorithm}

\subsection{Wasserstein-critic -- continuous spaces}
\label{sec:w1_critic}

For continuous state spaces, TV is too strong and $R_{\max}/(1 - \gamma)$ is a
loose, uniform bound. Furthermore, standard divergences like KL or TV often
suffer from vanishing gradients when the support of the learned model $\h \sfP$
and true dynamics $\sfP$ are disjoint \citep{arjovsky2017wasserstein}. A
geometry-aware metric like the 1-Wasserstein distance ($W_1$) is preferable as
it provides meaningful gradients even for disjoint distributions, ensuring the
model player receives a learning signal everywhere.  By Kantorovich-Rubinstein
duality, the $W_1$ distance is the supremum over 1-Lipschitz critics. The
resulting \emph{Wasserstein-Critic Game} objective is:
\[
  \min_{\h \sfP \in \sM} \max_{D \in \cD_{\text{Lip}}}
  \E_{(s, a) \sim d_{\text{data}}}
  \!\left[\Delta_{D, \h \sfP}(s,a)
  \right],
\]
where $\cD_{\text{Lip}}$ is a critic class approximating the 1-Lipschitz ball.

\begin{theorem}[Minimax guarantees - Wasserstein case]
\label{th:minimax_wasserstein}
Let $(\sS, d)$ be a metric state space.  Suppose for every policy $\pi \in \Pi$,
the expected value of the next state, denoted
$f(s, a) = \E_{s' \sim \h \sfP(\cdot \mid s, a)}[V_\pi^{\h \sfP}(s')]$, is
$L_v$-Lipschitz with respect to state $s$. Let
$J_{\cD_{\text{Lip}}}(\h\sfP) = \E_{d_{\mathrm{data}}}[\sup_{D \in
  \cD_{\text{Lip}}} \Delta_{D, \h \sfP}(s, a)]$ be the value of the
Wasserstein-Critic Game, and let $\e_{\mathrm{critic}} \geq 0$ be the error from
approximating the 1-Lipschitz ball with $\cD_{\text{Lip}}$.  If $\h\sfP^*$
achieves $J_{\cD_{\text{Lip}}}(\h\sfP^*) \leq \e_{\mathrm{model}}$, then:
\[
\sup_{\pi\in\Pi}\abs{V_\pi(\sfP) - V_\pi(\h\sfP^*)}
\le
\gamma L_v \kappa \paren*{ \e_{\mathrm{model}} + \e_{\mathrm{critic}} }.
\]
\end{theorem}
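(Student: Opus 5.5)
We would prove Theorem~\ref{th:minimax_wasserstein} by instantiating Lemma~\ref{lemma:general-coverage-bound} with $\wt\sfD = W_1$ and the constant cost $C(\pi,\h\sfP,s,a) = \gamma L_v$. This parallels the TV case of Theorem~\ref{th:minimax_tv}, except that the uniform factor $R_{\max}/(1-\gamma)$ is replaced by the Lipschitz constant of the next-state value.

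\textbf{Step 1 (simulation lemma).} Write $V = V_\pi^{\sfP}$ and $\h V = V_\pi^{\h\sfP}$ for the state-value functions. Subtracting the two Bellman equations (the reward is shared) gives
\[
V(s)-\h V(s) = \E_{a\sim\pi}\bigl[\gamma\,\E_{s'\sim\sfP}[V(s')-\h V(s')] + \gamma\paren*{\E_{s'\sim\sfP}[\h V(s')]-\E_{s'\sim\h\sfP}[\h V(s')]}\bigr].
\]
Unrolling this recursion along trajectories of $\pi$ under $\sfP$ from $\rho_0$ yields
\[
V_\pi(\sfP)-V_\pi(\h\sfP) = \gamma\,\E_{(s,a)\sim d_\pi}\bracket*{\E_{s'\sim\sfP(s,a)}[\h V(s')] - \E_{s'\sim\h\sfP(s,a)}[\h V(s')]}.
\]
Only the simulator's value $\h V$ appears here. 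This is convenient because the Lipschitz hypothesis is stated for $\h\sfP$.

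\textbf{Step 2 (Kantorovich--Rubinstein).} We bound the inner difference by $L_v\,W_1(\sfP(s,a),\h\sfP(s,a))$, which requires $\h V$ to be $L_v$-Lipschitz as a function of the next state $s'$. This is the main obstacle. The stated hypothesis concerns $f(s,a)=\E_{s'\sim\h\sfP}[\h V(s')]$ as a function of $s$, not $\h V$ itself. My first attempt would be to read the assumption as the intended regularity on $\h V$. Concretely, since $\h V(s') = \E_{a'\sim\pi}[r(s',a') + \gamma f(s',a')]$, Lipschitzness of $\h V$ follows if $r$ and $\pi$ are regular and $f$ is $L_v$-Lipschitz, possibly at the cost of a constant. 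If needed, I would state explicitly the interpretation that $\h V$ is $L_v$-Lipschitz, absorbing the reward and policy regularity into $L_v$. With that granted, $\h V/L_v$ is 1-Lipschitz and the premise of Lemma~\ref{lemma:general-coverage-bound} holds with $C\equiv\gamma L_v$ and $\wt\sfD=W_1$.

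\textbf{Step 3 (coverage and game value).} Apply Lemma~\ref{lemma:general-coverage-bound} with $\kappa$-coverage. Since $C$ does not depend on $\pi$, the supremum over $\pi$ on the right-hand side disappears, giving
\[
\sup_\pi\abs*{V_\pi(\sfP)-V_\pi(\h\sfP^*)}\le \gamma L_v\kappa\,\E_{d_{\mathrm{data}}}\bracket*{W_1(\sfP(s,a),\h\sfP^*(s,a))}.
\]
By duality, $W_1(\sfP(s,a),\h\sfP^*(s,a))=\sup_{\mathrm{Lip}(D)\le1}\Delta_{D,\h\sfP^*}(s,a)$, where critics may depend on $(s,a)$ because the supremum is taken pointwise. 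Using the definition of $\e_{\mathrm{critic}}$ as the gap between this supremum and the supremum over $\cD_{\text{Lip}}$, taken pointwise or in $d_{\mathrm{data}}$-expectation, we get $\E_{d_{\mathrm{data}}}[W_1]\le J_{\cD_{\text{Lip}}}(\h\sfP^*)+\e_{\mathrm{critic}}\le\e_{\mathrm{model}}+\e_{\mathrm{critic}}$. This gives the claimed bound.
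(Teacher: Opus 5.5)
Your proposal is correct and follows the paper's proof essentially step for step. The paper invokes Corollary~\ref{cor:coverage-wasserstein}, which is Lemma~\ref{lemma:general-coverage-bound} instantiated with $\wt\sfD = W_1$ and $C \equiv \gamma L_v$ (obtained from the simulation-lemma decomposition and Kantorovich--Rubinstein), and then passes from $\E_{d_{\mathrm{data}}}[W_1]$ to $J_{\cD_{\text{Lip}}}(\h\sfP^*) + \e_{\mathrm{critic}} \le \e_{\mathrm{model}} + \e_{\mathrm{critic}}$, exactly as you do. You are also right to flag the hypothesis: the paper's argument silently uses that $V_\pi^{\h\sfP}$ itself is $L_v$-Lipschitz in the next state (as assumed in Theorem~\ref{th:error-mdp-duality}), not the stated condition on $f$, so adopting that reading explicitly is what makes the proof go through.
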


\textbf{Practical remarks and caveats.}
The guarantees for the Wasserstein game mirror those in
Theorem~\ref{th:minimax_tv}, but with the pre-factor
$\gamma L_v \kappa$ replacing
$\frac{\gamma R_{\max}}{1 - \gamma}\kappa$. The Value Lipschitz constant
$L_v$ can be bounded in terms of reward and transition Lipschitzness
(see Appendix~\ref{app:lipschitz} for a proof that Lipschitzness can
be derived from standard smoothness assumptions about the reward
function and transition dynamics). In all cases, the \emph{coverage
  constant $\kappa$} remains the critical bottleneck: if the data
distribution $d_{\mathrm{data}}$ does not cover the states an
adversary will visit, the guarantee is void. This motivates a method
for actively selecting $d_{\mathrm{data}}$.

While simple Empirical Risk Minimization (ERM) minimizes average
error, it may waste capacity on noise. In contrast, the Minimax
formulation serves as a hard-negative mining mechanism, focusing model
capacity specifically on dynamics that are currently being exploited.

\paragraph{Algorithmic solutions to TV critic and Wasserstein critic game.} The
objective of both games can be solved efficiently by alternate updates of the
critic and the model, similar to Generative Adversarial Networks. We provide
details on this in the appendix.

\subsection{Wasserstein Guarantee}

\begin{corollary}[Wasserstein Coverage Guarantee]
\label{cor:coverage-wasserstein}
Assume the setup of Lemma~\ref{lemma:general-coverage-bound}. Let $(\sS, d)$ be
a metric state space.  Suppose for every policy $\pi \in \Pi$, the expected
value of the next state, denoted
$f(s, a) = \E_{s' \sim \h \sfP(\cdot \mid s, a)}[V_\pi^{\h \sfP}(s')]$, is
$L_v$-Lipschitz with respect to state $s$.
Then, the following inequality holds:
\[
\sup_{\pi\in\Pi} \abs*{V_\pi(\sfP) - V_\pi(\h\sfP)}
\leq \gamma \, L_v \, \kappa \, \E_{(s, a)\sim d_{\mathrm{data}}}
\bracket*{W_1(\sfP(\cdot \mid s, a), \h\sfP(\cdot\mid s, a))}.
\]
\end{corollary}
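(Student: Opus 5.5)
The plan is to instantiate Lemma~\ref{lemma:general-coverage-bound} with $\wt\sfD = W_1$ and the constant cost $C(\pi,\h\sfP,s,a) \equiv \gamma L_v$. Once the first premise of the lemma is verified for this pair, the coverage assumption $d_\pi \le \kappa\, d_{\mathrm{data}}$ gives
\[
\sup_{\pi\in\Pi}\abs*{V_\pi(\sfP) - V_\pi(\h\sfP)} \le \kappa\,\E_{(s,a)\sim d_{\mathrm{data}}}\bracket*{\gamma L_v\, W_1\paren*{\sfP(s,a),\h\sfP(s,a)}}.
\]
Since the cost no longer depends on $\pi$, the outer supremum over $\pi$ in \eqref{eqn:bound_general} disappears, and this is exactly the stated corollary. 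So the whole proof reduces to establishing the premise.

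The premise comes from the standard simulation lemma, with care about which value function appears. I would write the Bellman equations for $V_\pi^{\sfP}$ and $V_\pi^{\h\sfP}$ and subtract them. Telescoping along trajectories of $\pi$ in the \emph{true} kernel $\sfP$ then gives
\[
V_\pi(\sfP) - V_\pi(\h\sfP) = \gamma\, \E_{(s,a)\sim d_\pi^{\sfP}}\bracket*{\E_{s'\sim\sfP(s,a)}[V_\pi^{\h\sfP}(s')] - \E_{s'\sim\h\sfP(s,a)}[V_\pi^{\h\sfP}(s')]}.
\]
Here $d_\pi$ is unnormalized, so it absorbs the $1/(1-\gamma)$ factor. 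This is the right orientation: the occupancy is the real-world one, which is what the coverage assumption controls, and the value function is the simulator's $V_\pi^{\h\sfP}$, which is what the Lipschitz hypothesis refers to.

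Next I bound the inner difference by Kantorovich--Rubinstein duality. If $g := V_\pi^{\h\sfP}$ is $L$-Lipschitz on $(\sS,d)$, then $\abs{\E_{\sfP(s,a)} g - \E_{\h\sfP(s,a)} g} \le L\, W_1(\sfP(s,a),\h\sfP(s,a))$. Taking absolute values inside the expectation then yields the premise with $C=\gamma L_v$.

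The main obstacle is matching the hypothesis to what duality actually needs. The statement assumes that $f(s,a)=\E_{s'\sim\h\sfP(s,a)}[V_\pi^{\h\sfP}(s')]$ is $L_v$-Lipschitz in $s$, whereas duality requires Lipschitzness of $V_\pi^{\h\sfP}$ as a function of the next state $s'$. My first approach is to read the hypothesis as the intended regularity of the simulator's value function, and to note that under the smoothness conditions of Appendix~\ref{app:lipschitz} both properties hold with the same constant. Failing that, I would state explicitly that $L_v$ bounds the Lipschitz constant of $s'\mapsto V_\pi^{\h\sfP}(s')$, which is what the argument uses. A secondary point is that $W_1$ must be finite, i.e.\ the kernels have first moments; this is implicit in the corollary's right-hand side.
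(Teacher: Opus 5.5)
Your proposal is correct and matches the paper's proof: instantiate Lemma~\ref{lemma:general-coverage-bound} with $\wt\sfD = W_1$ and $C \equiv \gamma L_v$. The premise comes from the simulation-lemma identity (real-world occupancy, simulator value function) plus Kantorovich--Rubinstein duality, and the supremum over $\pi$ vanishes because $L_v$ is uniform in $\pi$.

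The hypothesis mismatch you flag is real, and the paper's proof resolves it as your fallback does, by silently using that $s' \mapsto V_\pi^{\h\sfP}(s')$ is $L_v$-Lipschitz. Drop the claim that the appendix conditions give both properties with the same constant: $L_v$-Lipschitzness of $V_\pi^{\h\sfP}$ only gives $f$ the constant $L_P^s L_v$.
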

See the appendix for the proof that Lipschitz dynamics plus rewards
imply Lipschitz Values.

\begin{proof}
  This is an instantiation of
  Lemma~\ref{lemma:general-coverage-bound}.  We start from the
  single-step error decomposition from the proof of
  Lemma~\ref{lemma:sim}:
  $\abs{V_\pi(\sfP) - V_\pi(\h \sfP)} \leq \gamma \E_{(s, a) \sim
    d_\pi} \bracket*{ \abs{\e(s, a)} }$, where
  $\e(s, a) = \E_{s' \sim \sfP}[V_\pi^{\h \sfP}(s')] - \E_{s' \sim \h
    \sfP}[V_\pi^{\h \sfP}(s')]$.  By Kantorovich-Rubinstein duality,
  since $V_\pi^{\h \sfP}$ is $L_v$-Lipschitz, this error is bounded:
  $\abs{\e(s, a)} \leq L_v \cdot W_1(\sfP(\cdot|s, a), \h
  \sfP(\cdot|s, a))$.  This gives us
  $\wt \sfD = W_1(\sfP, \h \sfP)$ and a state-independent
  (but policy-dependent) cost $C(\pi, \h \sfP, s, a) = \gamma L_v$.
  Plugging into Lemma~\ref{lemma:general-coverage-bound}:
\begin{align*}
\sup_{\pi\in\Pi}\abs*{V_\pi(\sfP) - V_\pi(\h\sfP)}
& \leq \kappa \cdot \sup_{\pi\in\Pi} \E_{(s, a)\sim d_{\mathrm{data}}} \bracket*{ \gamma L_v \cdot W_1(\sfP, \h\sfP) } \\
& = \gamma \kappa \sup_{\pi\in\Pi} \paren*{ L_v \E_{(s, a)\sim d_{\mathrm{data}}} \bracket*{ W_1(\sfP, \h\sfP) } }.
\end{align*}
Assuming $L_v$ is a uniform bound across all $\pi \in \Pi$ (a standard
assumption), we can move the $\sup$ inside, and since the expectation
no longer depends on $\pi$, the $\sup$ vanishes, giving the result.
\end{proof}

\subsection{Wasserstein critic implementation}

Implementing the Wasserstein game requires constraining the critic to
be 1-Lipschitz. We adopt standard techniques from Wasserstein GANs,
such as Gradient Penalty (GP) or spectral normalization, to enforce
this constraint softly. A key practical difference from the TV case is
the need to train the critic to near-optimality to ensure accurate
estimation of the $W_1$ distance before updating the
model. Algorithm~\ref{alg:wasserstein-critic} reflects this by
including an inner loop (repeated $n_D$ times) for the critic update.

\begin{algorithm}[H]
\caption{Wasserstein-Critic Minimax Training}
\label{alg:wasserstein-critic}
\begin{algorithmic}[1]
\REQUIRE Data distribution $d_{\text{data}}$, true kernel $\sfP$,
  model class $\sM_\Phi$, critic class $\cD_\Theta$, GP coefficient
  $\lambda$, learning rates $\eta_\phi,\eta_\theta$, critic steps
  $n_D$, total iterations $T$.

\STATE Initialize $\phi_1$ (model params), $\theta_1$ (critic params).

\FOR{$t=1,\dots,T$}

\STATE Sample minibatch $\{(s_i,a_i)\}_{i = 1}^B\sim d_{\text{data}}$.

\STATE For each $(s_i,a_i)$ sample real
$s'_i\sim\sfP(\cdot\mid s_i,a_i)$ and generated
$\h s'_i\sim\h\sfP_{\phi_t}(\cdot\mid s_i,a_i)$.

\FOR{$k = 1,\dots,n_D$}

\STATE Sample interpolation
$\tilde s_i = \alpha s'_i + (1-\alpha)\h s'_i$ with
$\alpha\sim\mathrm{Unif}(0,1)$.

\STATE Critic gradient step on
\[
  L_D(\theta)
  = -\frac{1}{B}\sum_{i = 1}^B
  \big( D_\theta(s_i,a_i, s'_i) - D_\theta(s_i,a_i,\h s'_i) \big)
  + \lambda \frac{1}{B}\sum_{i = 1}^B
  \bracket[\Big]{ \norm*{\nabla_{\tilde s_i} D_\theta(s_i,a_i,\tilde s_i)}_2 - 1}^2.
\]

\STATE $\theta \leftarrow \theta - \eta_\theta \nabla_\theta L_D(\theta)$.

\ENDFOR

\STATE Model update: minimize critic score on generated samples
\[
  L_M(\phi) = \frac{1}{B}\sum_{i = 1}^B D_{\theta}(s_i,a_i,\h s'_i)
  \quad\text{(optionally }
  + \lambda_{\text{lik}} \, \wt \cL_{\text{MLE}}(\phi)\text{)}.
\]

\STATE $\phi \leftarrow \phi - \eta_\phi \nabla_\phi L_M(\phi)$.

\ENDFOR

\STATE Return averaged model parameters
$\ov\phi = \tfrac{1}{T}\sum_{t=1}^T \phi_t$ and critic
$\ov\theta$.
\end{algorithmic}
\end{algorithm}

\subsection{Robustness to model misspecification}
\label{sec:misspecification}

A critical advantage of the minimax formulation arises when the model
class $\sM$ is \emph{misspecified}, meaning it does not contain the
true dynamics $\sfP$ (i.e.,
$\inf_{\h \sfP \in \sM} \KL(\sfP \parallel \h \sfP) > 0$). Standard
maximum likelihood estimation seeks the projection of $\sfP$ onto
$\sM$ that minimizes the global KL divergence. However, this
projection may sacrifice accuracy in strategic regions to fit
irrelevant stochasticity elsewhere, as discussed in the Minimal MDP
example.

In contrast, our framework serves as a \emph{selection rule} that
prioritizes value-relevant dynamics. We can formalize this as an
oracle inequality.

\begin{proposition}[Strategic Oracle Inequality]
  Let $\h \sfP_{\text{MM}}$ be the solution to the population minimax
  game (e.g., TV-critic) and let $\h \sfP_{\text{MLE}}$ be the
  asymptotic solution to maximum likelihood estimation. There exist
  environments $\sfP$ and model classes $\sM$ such that:
\[
  \sup_{\pi \in \Pi} \abs*{V_\pi(\sfP) - V_\pi(\h \sfP_{\text{MM}})} 
\ll \sup_{\pi \in \Pi} \abs*{V_\pi(\sfP) - V_\pi(\h \sfP_{\text{MLE}})}.
\]
\end{proposition}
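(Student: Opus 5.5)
The statement is existential, so I will prove it by building one explicit environment and one misspecified model class. In that example, MLE spends its capacity on value-irrelevant noise, while the minimax game spends it on the value-relevant transition. I will use a Minimal-MDP-style construction in the spirit of the ``TV static'' example. Take a state $(z,n)$, where $z\in\{g,b\}$ is a value-relevant ``good/bad'' bit and $n\in\{0,1\}^K$ is a noise component. The reward depends only on $z$: $r=R_{\max}$ at $g$ and $0$ at $b$. There are two actions. Under $a_1$, the true kernel sends $z$ to $g$ with probability $p$, and under $a_2$ with probability $1-p$, for some $p$ far from $1/2$. In both cases $n$ is resampled i.i.d.\ from a high-entropy but non-uniform distribution $q^\star$ on $\{0,1\}^K$, independent of $z$. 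The model class $\sM$ is a two-element class, which avoids any optimization subtleties:
\begin{itemize}
\item $\h\sfP_A$ reproduces the $z$-dynamics exactly but models $n$ as uniform;
\item $\h\sfP_B$ models $n$ exactly as $q^\star$ but uses the wrong $z$-dynamics, sending $z$ to $g$ with probability $1/2$ under both actions.
\end{itemize}

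\textbf{Step 1 (MLE picks $\h\sfP_B$).} For any data occupancy $d_{\mathrm{data}}$, compute the occupancy-weighted KL of each candidate. Because the $z$ and $n$ components factorize, KL is additive. This gives $\KL(\sfP\|\h\sfP_A)=\KL(q^\star\|\mathrm{Unif})$ and $\KL(\sfP\|\h\sfP_B)=\KL(\mathrm{Ber}(p)\|\mathrm{Ber}(1/2))\le\log 2$. Choosing $q^\star$ so that $\KL(q^\star\|\mathrm{Unif})$ grows linearly in $K$ (for example, i.i.d.\ $\mathrm{Ber}(1/4)$ coordinates) and taking $K$ large makes $\h\sfP_B$ the MLE projection.

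\textbf{Step 2 (minimax picks $\h\sfP_A$).} Compare the values of the TV-critic game, with $\cD$ taken to be the full $L_\infty$ unit ball, so that $\e_{\mathrm{critic}}=0$ and $\sup_D\Delta_{D,\h\sfP}=2\TV$. TV of a product does not add linearly the way KL does, so this step needs care. The remedy is to let the noise error of $\h\sfP_A$ be small in TV but large in KL. I will therefore set $q^\star=(1-\e)\,\mathrm{Unif}+\e\,\delta_{n_0}$. Then $\TV(\sfP,\h\sfP_A)\approx\e$, while $\KL(q^\star\|\mathrm{Unif})\approx \e K\log 2$, which is still large once $K\gg 1/\e$. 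Meanwhile $\TV(\sfP,\h\sfP_B)=|p-1/2|$, a constant. So the game value of $\h\sfP_A$ is $O(\e)$, that of $\h\sfP_B$ is $|p-1/2|$, and the minimax solution is $\h\sfP_A$. Step 1 still holds because $K$ is large.

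\textbf{Step 3 (value gaps).} Since rewards and $z$-dynamics ignore $n$, the model $\h\sfP_A$ is a bisimulation of $\sfP$ on the value-relevant component. Hence $V_\pi(\h\sfP_A)=V_\pi(\sfP)$ for all $\pi$, which is the Minimal MDP argument. For $\h\sfP_B$, take $\pi\equiv a_1$. Its true value puts stationary mass about $p$ on $g$, while $\h\sfP_B$ puts mass $1/2$ on $g$. Hence the gap is $\Omega(\gamma R_{\max}|p-1/2|/(1-\gamma))$. So the left side equals $0$ and the right side is bounded below by a constant, which is ``$\ll$'' in the strongest sense.

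The main obstacle is Step 2: making the TV-based game and the KL-based MLE disagree. Because TV is bounded by $1$ while KL is unbounded, the construction must exploit the gap between the two. This is exactly what the mixture $q^\star$ with huge $K$ does, since Pinsker cannot be reversed. Two points need checking: that the $d_{\mathrm{data}}$ weighting is irrelevant because the errors are state-independent, and that the policies in $\Pi$ considered for MLE include the constant policy $a_1$. If a strict-$\ll$ reading is wanted, the scaling of $p$ versus $\e$ can be made explicit, with the ratio of the two sides tending to $0$.
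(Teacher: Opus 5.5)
Your construction starts from the same idea as the paper, value-irrelevant noise as in the minimal-MDP example, but the argument is genuinely different and much more explicit.

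\textbf{Comparison with the paper's route.} The paper only sketches. It takes a limited-capacity class of deterministic models and asserts two things: MLE distorts the relevant dynamics to fit the noise, and the minimax solution ``ignores the noise'' because the noise does not affect $V_\pi$. That reasoning suits the value-gap objective \eqref{eqn:main_objective}. The TV-critic game, however, compares full next-state distributions and does penalize noise mismatch. Taken literally, with deterministic models and high-entropy noise, every candidate has TV near $1$ and infinite KL.

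You correctly identify this as the crux and resolve it with the mixture $q^\star$. The noise error of $\h\sfP_A$ costs $\TV(q^\star,\mathrm{Unif})=\e(1-2^{-K})$ but about $\e K\log 2$ in KL. By contrast, $\h\sfP_B$ pays $|p-1/2|$ in TV and at most $\log 2$ in KL. Your calculations are right: tensoring with a common factor leaves both divergences unchanged, and since all errors are state- and action-independent, $d_{\mathrm{data}}$ plays no role. Hence, for $\e$ small and $K$ a large multiple of $1/\e$, MLE selects $\h\sfP_B$. The TV game, and also the value-gap game \eqref{eqn:main_objective}, selects $\h\sfP_A$.

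Your route gives a checkable separation for the game actually named in the statement. The price is that the mechanism is the KL/TV (Pinsker) mismatch, as in Proposition~\ref{prop:lower_bound}, rather than the critic discarding value-irrelevant structure.

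\textbf{A step to repair.} In Step 3, the equality $V_\pi(\h\sfP_A)=V_\pi(\sfP)$ holds only for policies that ignore $n$. Suppose $\Pi$ contains $\pi(z,n)=a_1$ if $n=n_0$ and $a_2$ otherwise. From $t\ge 1$ on, this policy plays $a_1$ with probability $q^\star(n_0)=\e+(1-\e)2^{-K}$ under $\sfP$, but with probability $2^{-K}$ under $\h\sfP_A$. Its value gap is therefore $\frac{\gamma^2 R_{\max}}{1-\gamma}|2p-1|\,\e(1-2^{-K})\neq 0$.

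There are two ways to fix this.
\begin{enumerate}
\item Take $\Pi$ to be the policies depending on $z$ only. This class still contains $\pi\equiv a_1$, and it is the implicit restriction behind the paper's own claim $V_\pi(\sfM)=V_\pi(\sfM')$. The left side is then exactly $0$.
\item Keep all stationary policies and use Lemma~\ref{lemma:sim} to bound the left side by $\frac{\gamma R_{\max}}{(1-\gamma)^2}\e$. The ratio of the two sides is then at most $\e/((1-\gamma)|p-1/2|)$, which tends to $0$ as $\e\to 0$ with $\e K\to\infty$.
\end{enumerate}

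With that fix, and with $q^\star$ set to the mixture from the start rather than the i.i.d.\ $\mathrm{Ber}(1/4)$ choice in Step 1, the argument is complete.
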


\begin{proof}[Sketch]
  Consider the Minimal MDP case (Section~\ref{sec:sim-metrics}). Let
  $\sfP$ be a system with high-entropy irrelevant noise (e.g., a
  background TV screen) and deterministic relevant dynamics. Let $\sM$
  be a class of deterministic models with limited capacity.
\begin{itemize}

\item $\h \sfP_{\text{MLE}}$ attempts to match the high entropy of
  $\sfP$. Since $\sM$ is deterministic, $\h \sfP_{\text{MLE}}$ might
  oscillate or predict the mean of the noise, potentially distorting
  the relevant dynamics to minimize the global KL/MSE, leading to high
  strategic error.

\item $\h \sfP_{\text{MM}}$ minimizes the worst-case value gap. Since
  the noise does not affect $V_\pi$, the minimax solution will ignore
  the noise and perfectly fit the relevant deterministic dynamics,
  achieving zero value gap despite high KL divergence.
\end{itemize}
Thus, under misspecification, the minimax objective selects the model
that is most useful rather than most likely.
\end{proof}

\section{Active Data Selection for Improved Coverage Guarantees}
\label{sec:active-data-selection}

The coverage-based guarantees in Theorem~\ref{th:minimax_tv} and
\ref{th:minimax_wasserstein} are powerful, but they depend critically on the
coverage constant $\kappa$ of the data distribution $d_{\mathrm{data}}$ relative
to the policies of interest $\Pi$. This raises a chicken-and-egg problem: to get
a good guarantee, one must sample from a distribution $d_{\mathrm{data}}$ that
already covers the occupancies $d_\pi$ of the very policies $\pi$ that might
exploit the simulator.

This suggests an active learning approach, where the sampling
distribution $d_t$ is updated iteratively to find these exploitative
policies.
We now present our most significant theoretical insight.  We show that
the \emph{global} problem of finding the worst-case policy $\pi$ is
formally equivalent to a \emph{local} reinforcement learning problem:
finding the optimal policy in an \emph{Error-MDP} where the reward
signal is precisely the one-step modeling error.

We first define a new \emph{Error-MDP}, $\sfM_{\text{err}}$, which has
the same state space $\sS$, action space $\sA$, and \emph{true}
transition dynamics $\sfP$ as the real environment, but with a
different reward function.  The concept of using model error as an
intrinsic reward has been explored empirically in active exploration
methods like MCX \citep{swift2024mcx}; here, we provide the formal
duality proving that this strategy is not just a heuristic, but the
optimal response in a minimax robustness game.

Define the \emph{error-reward}
$r_{\text{err}}$ as the local, one-step modeling error, as measured by
our Wasserstein-based critic $r_{\text{err}}(s, a)
  = W_1\big(\sfP(s, a), \h \sfP(s, a)\big)$.
Let $V_{\pi}(\sfM_{\text{err}})$ be the expected (unnormalized)
discounted return for a policy $\pi$ in this $\sfM_{\text{err}}$
(the expectations are over $(s, a) \sim d_\pi$):
\begin{align*}
V_{\pi}(\sfM_{\text{err}})
= \E \bracket*{ r_{\text{err}}(s, a) }
= \E \bracket*{ W_1\big(\sfP(s, a),
  \h \sfP(s, a)\big) }.
\end{align*}
The following theorem proves that the worst-case policy value-gap (our
original objective) is directly upper-bounded by the optimal value of
this Error-MDP.

\begin{theorem}[Adversarial Policy as Error-Seeking Agent]
\label{th:error-mdp-duality}
Assume
$V_\pi^{\h \sfP}$ is $L_v$-Lipschitz for all $\pi \in \Pi$.  Let
$V^*(\sfM_{\text{err}}) = \sup_{\pi \in \Pi} V_{\pi}(\sfM_{\text{err}})$
be the optimal value of the Error-MDP $\sfM_{\text{err}}$.
Then, the worst-case policy value-gap is bounded by:
\[
\sup_{\pi \in \Pi} \abs*{V_\pi(\sfP) - V_\pi(\h \sfP)}
\leq \gamma \, L_v \, V^*(\sfM_{\text{err}}).
\]
\end{theorem}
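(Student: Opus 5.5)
The plan is to reuse the single-step decomposition behind the simulation lemma (Lemma~\ref{lemma:sim}), exactly as in the proof of Corollary~\ref{cor:coverage-wasserstein}, but without the coverage transfer step.

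Fix $\pi\in\Pi$. First, write the value difference through a telescoping argument. Bellman's equation under $\h\sfP$ gives $V_\pi^{\h\sfP}(s) = \E_{a\sim\pi}[r(s,a) + \gamma \E_{s'\sim \h\sfP(s,a)} V_\pi^{\h\sfP}(s')]$. Taking expectations of $V_\pi^{\h\sfP}(s_t)-\gamma V_\pi^{\h\sfP}(s_{t+1})$ along real trajectories (transitions $\sfP$), and summing against the occupancy $d_\pi = d_\pi^{\sfP}$, yields
\[
V_\pi(\sfP) - V_\pi(\h\sfP) = \gamma\,\E_{(s,a)\sim d_\pi}\bracket*{\e(s,a)},
\qquad
\e(s,a) = \E_{s'\sim\sfP(s,a)}[V_\pi^{\h\sfP}(s')] - \E_{s'\sim\h\sfP(s,a)}[V_\pi^{\h\sfP}(s')].
\]
Here $d_\pi$ is unnormalized, and boundedness of rewards, hence of $V_\pi^{\h\sfP}$, justifies exchanging sum and expectation. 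Taking absolute values then gives $\abs{V_\pi(\sfP) - V_\pi(\h\sfP)} \le \gamma\,\E_{d_\pi}\abs{\e(s,a)}$.

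Second, bound the local error. Since $V_\pi^{\h\sfP}$ is $L_v$-Lipschitz, the function $V_\pi^{\h\sfP}/L_v$ is $1$-Lipschitz. Kantorovich--Rubinstein duality then gives $\abs{\e(s,a)} \le L_v\, W_1(\sfP(s,a),\h\sfP(s,a)) = L_v\, r_{\text{err}}(s,a)$. Substituting,
\[
\abs{V_\pi(\sfP) - V_\pi(\h\sfP)} \le \gamma L_v \E_{(s,a)\sim d_\pi}[r_{\text{err}}(s,a)] = \gamma L_v V_\pi(\sfM_{\text{err}}).
\]
The last equality holds because $\sfM_{\text{err}}$ uses the true dynamics $\sfP$, so its occupancy is exactly $d_\pi^{\sfP}$.

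Finally, $V_\pi(\sfM_{\text{err}}) \le V^*(\sfM_{\text{err}})$ for each $\pi\in\Pi$, and the right-hand side does not depend on $\pi$. Taking the supremum over $\pi$ gives the claim.

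The main point needing care is the first step: the occupancy must be the one under the true $\sfP$, and the value function inside $\e$ must be the one under $\h\sfP$. Only with this ordering of the telescoping does the weighting measure coincide with the Error-MDP occupancy, and does the Lipschitz assumption fall on $V_\pi^{\h\sfP}$ as stated. The reverse ordering would need Lipschitzness of $V_\pi^{\sfP}$ together with $\h\sfP$-occupancies, so I would check that the decomposition cited from Lemma~\ref{lemma:sim} indeed takes this form.
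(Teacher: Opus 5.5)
Your proposal is correct and follows the paper's proof. The paper also reuses the single-step decomposition from the proof of Lemma~\ref{lemma:sim}, with the occupancy taken under the true $\sfP$ and the error measured on $V_\pi^{\h\sfP}$, which is the ordering you flagged; it bounds the local error by $L_v W_1$ via Kantorovich--Rubinstein as in Corollary~\ref{cor:coverage-wasserstein}, identifies the resulting expectation as $V_\pi(\sfM_{\text{err}})$, and takes the supremum over $\pi \in \Pi$.
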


The theorem implies that for any policy $\pi$, the performance gap
between the true MDP and the simulator is upper-bounded (up to a
constant) by the value of the corresponding Error-MDP.  Consequently,
a policy that achieves a large Error-MDP value is likely to also
induce a large discrepancy in returns.  This observation motivates
treating the Error-MDP as a proxy optimization problem for identifying
policies that expose weaknesses in the learned model, a standard RL
problem we can solve using familiar algorithms.

The Error-MDP provides a principled mechanism to identify where the
learned model $\h \sfP$ is inaccurate.  Its optimal policy naturally
seeks trajectories that accumulate large modeling errors, thereby
revealing parts of the state–action space that lack sufficient
coverage.  Using this policy for active sampling directs exploration
toward these high-error regions, improving model accuracy and
coverage, and thus tightening the worst-case performance bound
guaranteed by the theorem.

Note that our Error-MDP framework provides a theoretical justification for
\emph{intrinsic motivation} \citep{chentanez2005intrinsically}. While methods
like random network distillation (RND) \citep{burda2018exploration} use
prediction error as a heuristic for novelty, our
Theorem~\ref{th:error-mdp-duality} proves that maximizing model error is the
optimal adversarial strategy for robust simulator learning. This also suggests
that off-the-shelf exploration algorithms (like PPO with intrinsic rewards) can
be directly plugged in as the Policy Player in our game.

\subsection{Iterative critic-guided learning and its guarantees}

The Error-MDP duality justifies an iterative algorithm where we
alternate between (1) estimating the error-reward $r_{\text{err}}$
(the critic-game) and (2) sampling from policies $\pi$ that achieve
high returns in $\sfM_{\text{err}}$ to generate data $d_t$ for the
next round of model training.

Iterative Algorithm (Algorithm~\ref{alg:active-wasserstein}).
Leveraging the Error-MDP duality, we propose a practical iterative active
learning scheme. The algorithm maintains a current model $\h\sfP_t$ and updates
a Wasserstein critic $D_t$ to estimate the local error frontier
$r_{\text{err}}(s,a)$. It then uses an adversarial sampling policy to collect
new data $d_{t+1}$ specifically from regions where this error-reward is high. By
aggregating this hard data and retraining, the model is forced to resolve its
most critical "blind spots" in each round, progressively reducing the maximal
error available to the adversary.  While the critic objective is local, our
Error-MDP theory (Theorem~\ref{th:error-mdp-duality}) formally bounds the global
policy-value gap using this local critic’s loss. By optimizing the Error-MDP
reward, sampling distribution actively seeks state-action pairs that accumulate
these local errors, fundamentally connecting local transition errors to global
exploitability.

\begin{algorithm}[H]
\caption{Iterative Wasserstein Critic-Guided Model Learning}
\label{alg:active-wasserstein}
\begin{algorithmic}[1]
  \STATE \textbf{Input:} Initial model $\h \sfP_0$, $L_v$-Lipschitz
  critic class $\cD_{L_v}$, iterations $T$, initial data
  $d_{\mathrm{init}}$

\FOR{$t = 1$ to $T$}

\STATE Sample batch of transitions $(s, a, s')$ from current data
distribution $d_t$

\STATE Update critic $D_t \in \cD_{L_v}$ to maximize
$\E_{(s, a)\sim d_t} \big[ \Delta_{D, \h \sfP_{t-1}}(s,a) \big]$

\STATE Update model $\h \sfP_t$ to minimize
$\E_{(s, a)\sim d_t} \bracket*{ \E_{s'\sim \h \sfP_{t}}[-D_t(s, a, s')] }$
\STATE Define critic-guided distribution
$d_{t+1}(s, a) \propto \sup_{D(s, a,\cdot)\in\cD_{L_v}} 
  \Big| \Delta_{D, \h \sfP_t}(s,a) \Big|$
\ENDFOR

\STATE \textbf{Output:} Averaged model $\ov{\h \sfP}_T
= \frac{1}{T}\sum_{t=1}^T \h \sfP_t$
\end{algorithmic}
\end{algorithm}

This iterative process creates a feedback loop. We now provide two
guarantees for this algorithm. First, a finite-time bound on the
averaged model, and second, an asymptotic convergence guarantee.

\begin{theorem}[Finite-Time Guarantee for Iterative Learning]
\label{th:finite_time_active}
Assume the active data distributions $d_t$ from
Algorithm~\ref{alg:active-wasserstein} satisfy a uniform
$\kappa$-coverage for $\Pi$, and the value functions
$V_\pi^{\h \sfP_t}$ are uniformly $L_v$-Lipschitz. Let
$\e_{\mathrm{critic}}$ be the approximation error of $\cD_{L_v}$ to
the true $L_v$-Lipschitz ball. Then the time-averaged model
$\ov{\h \sfP}_T$ satisfies:
\[
\sup_{\pi\in\Pi} \abs*{V_\pi(\sfP) - V_\pi(\ov{\h \sfP}_T)}
\leq
\gamma L_v \kappa \paren*{\ov{J}_{\cD_{L_v}} + \e_{\mathrm{critic}}},
\]
where 
$\ov{J}_{\cD_{L_v}}
  \!\!\!= \frac{1}{T} \sum_{t=1}^T \E_{(s, a)\sim d_t}
  \bracket*{\sup_{D\in\cD_{L_v}} \curl*{
  \Delta_{D, \h \sfP_t}(s,a)}}$ is the time-averaged critic objective.
\end{theorem}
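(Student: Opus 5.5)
The plan is to combine three ingredients: the single-step simulation decomposition already used in Corollary~\ref{cor:coverage-wasserstein}, the convexity of $W_1$ in its second argument, and the per-round coverage $d_\pi \le \kappa\, d_t$.

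\emph{Step 1 (simulation decomposition for the averaged kernel).} Write $\ov{\h\sfP}_T(\cdot\mid s,a)=\frac1T\sum_{t}\h\sfP_t(\cdot\mid s,a)$ and fix $\pi\in\Pi$. The decomposition from the proof of Lemma~\ref{lemma:sim} gives
\[
\abs{V_\pi(\sfP)-V_\pi(\ov{\h\sfP}_T)} \le \gamma\,\E_{(s,a)\sim d_\pi}\abs{\e(s,a)},
\]
where $\e(s,a)$ is the one-step discrepancy of $V_\pi^{\ov{\h\sfP}_T}$ between $\sfP(s,a)$ and $\ov{\h\sfP}_T(s,a)$. Assuming $V_\pi^{\ov{\h\sfP}_T}$ is $L_v$-Lipschitz, Kantorovich--Rubinstein duality gives $\abs{\e(s,a)}\le L_v W_1(\sfP(s,a),\ov{\h\sfP}_T(s,a))$. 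Since $W_1$ is convex in each argument (it is a supremum of functionals that are linear in the measure), this is at most $L_v\frac1T\sum_t W_1(\sfP(s,a),\h\sfP_t(s,a))$.

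\emph{Step 2 (coverage, applied round by round).} Exchange the finite sum with the expectation under $d_\pi$. For each $t$, the uniform coverage hypothesis $d_\pi\le\kappa\, d_t$ together with nonnegativity of $W_1$ gives
\[
\E_{d_\pi}\bigl[W_1(\sfP,\h\sfP_t)\bigr]\le\kappa\,\E_{d_t}\bigl[W_1(\sfP,\h\sfP_t)\bigr].
\]
This is exactly Lemma~\ref{lemma:general-coverage-bound} with $C\equiv\gamma L_v$ and $\wt\sfD=W_1$, used once per round. The right-hand side no longer depends on $\pi$, so the $\sup_\pi$ passes through as in Corollary~\ref{cor:coverage-wasserstein}.

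\emph{Step 3 (critic approximation).} By duality, $W_1(\sfP(s,a),\h\sfP_t(s,a))$ is the supremum of $\Delta_{D,\h\sfP_t}(s,a)$ over the true (normalized) Lipschitz ball. By the definition of $\e_{\mathrm{critic}}$, this is at most $\sup_{D\in\cD_{L_v}}\Delta_{D,\h\sfP_t}(s,a)+\e_{\mathrm{critic}}$. I will take the scaling convention implicit in the statement, in which the critic class is normalized so that the stated constants come out as $\gamma L_v$; otherwise an extra factor of $L_v$ has to be absorbed. Averaging over $t$ then gives $\ov J_{\cD_{L_v}}+\e_{\mathrm{critic}}$, and chaining Steps 1--3 yields the claim.

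\emph{Main obstacle.} The delicate point is Step 1. The hypothesis gives Lipschitzness of $V_\pi^{\h\sfP_t}$ for each $t$, but what is needed is Lipschitzness of $V_\pi^{\ov{\h\sfP}_T}$, the value under the mixture kernel, which is \emph{not} the average of the $V_\pi^{\h\sfP_t}$. My first attempt would be to read the hypothesis as ``uniformly over the models produced,'' which includes the convex hull $\mathrm{conv}\{\h\sfP_t\}$. Failing that, I would invoke the Appendix~\ref{app:lipschitz} argument: if each $\h\sfP_t$ is Lipschitz in $W_1$ with a common constant, so is the mixture, by convexity of $W_1$, and hence the value of the mixture is Lipschitz with the same $L_v$. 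A fallback is to prove the bound for a model drawn uniformly at random from $\{\h\sfP_t\}$ in expectation, where only per-$t$ Lipschitzness is needed.
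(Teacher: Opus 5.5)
Your proposal is correct and follows the paper's proof essentially step for step: the Wasserstein simulation bound applied to $\ov{\h\sfP}_T$, convexity of $W_1$ in its second argument, the per-round coverage $d_\pi \le \kappa\, d_t$, and then the critic-approximation step followed by averaging over $t$. The Lipschitzness of $V_\pi^{\ov{\h\sfP}_T}$ that you flag as the main obstacle is not justified in the paper either: its first step invokes the simulation bound for the averaged kernel directly, so it implicitly assumes exactly what your first reading does, namely that the Lipschitz hypothesis also covers the averaged model.
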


This theorem provides a finite-time bound but relies on a strong
uniform coverage assumption. We now show that this game, when
regularized, converges to a meaningful saddle point, providing a
justification for the algorithm's stability.

\begin{figure}[t]
  \centering
    \includegraphics[width=0.66\textwidth]{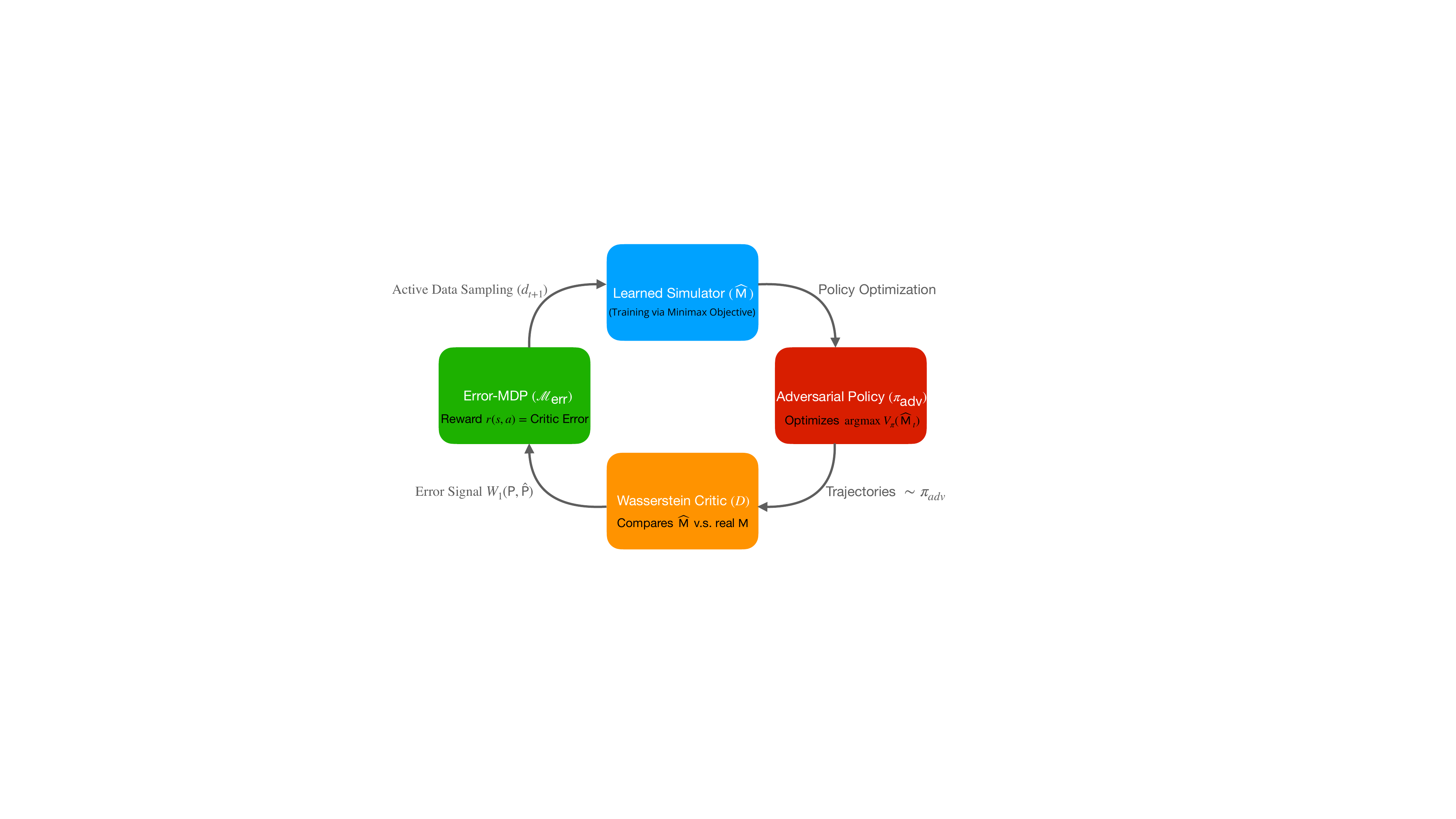}
    \caption{\textbf{The Error-MDP Feedback Loop.} Unlike standard
    active learning which uses heuristic uncertainty, our framework
    formally casts the model-error, estimated by the Critic, as the
    reward signal for a distinct RL problem, the Error-MDP, producing
    data that targets the specific strategic weaknesses of the current
    model. This visualizes the cycle described in
    Algorithm~\ref{alg:active-wasserstein}.}
    \label{fig:error_mdp_loop}
\end{figure}

\begin{theorem}[Convergence of Iterative Active Sampling Game]
\label{th:convergence}
Let the regularized payoff be
$L(\h\sfP, d) = \E_{d} [ W_1(\sfP, \h \sfP) ] - \lambda \, \KL(d
\parallel u)$, where $\lambda > 0$ and $u$ is a prior.  If the
Model-Player and Distribution-Player follow no-regret online
algorithms with regrets $\Regret_M(T)$ and $\Regret_D(T)$ respectively
against this payoff, then the time-averaged strategies
$(\ov{\h\sfP}_T, \ov{d}_T)$ converge to an approximate saddle point of
$L$:
\[
\max_{d \in \Delta} L(\ov{\h\sfP}_T, d) - \min_{\h\sfP \in \sM} L(\h\sfP, \ov{d}_T)
\leq \frac{\Regret_M(T) + \Regret_D(T)}{T}.
\]
As $T \to \infty$, if the average regrets vanish (i.e.,
$\Regret(T)/T \to 0$), the averaged strategies converge to the optimal
value of the regularized game.
\end{theorem}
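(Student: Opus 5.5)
The plan is the standard online-to-batch / no-regret-dynamics argument for convex--concave games. I will use convexity of $L$ in the model and concavity in the distribution. I will interpret the game so that the Model-Player minimizes $L$ and the Distribution-Player maximizes it. I will assume $\sM$ is convex, i.e.\ closed under mixtures of kernels, so that $\ov{\h\sfP}_T=\frac1T\sum_t \h\sfP_t \in \sM$. For fixed $d$, the map $\h\sfP \mapsto \E_d[W_1(\sfP(s,a),\h\sfP(s,a))]$ is convex. This follows because $W_1(\sfP,\cdot)$ is a supremum of linear functionals via Kantorovich--Rubinstein duality, and a mixture kernel is linear in its components. For fixed $\h\sfP$, the map $d\mapsto \E_d[W_1]-\lambda\KL(d\parallel u)$ is concave, since the first term is linear in $d$ and $\KL(\cdot\parallel u)$ is convex. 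By Jensen, these two facts give
\[
L(\ov{\h\sfP}_T,d)\le \tfrac1T\textstyle\sum_t L(\h\sfP_t,d),
\qquad
L(\h\sfP,\ov d_T)\ge \tfrac1T\textstyle\sum_t L(\h\sfP,d_t)
\]
for every $d\in\Delta$ and every $\h\sfP\in\sM$.

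Next I write out the two regret definitions against the realized sequence of play. The first is $\Regret_M(T)=\sum_t L(\h\sfP_t,d_t)-\min_{\h\sfP}\sum_t L(\h\sfP,d_t)$. The second is $\Regret_D(T)=\max_{d}\sum_t L(\h\sfP_t,d)-\sum_t L(\h\sfP_t,d_t)$. Adding them makes the cross terms $\sum_t L(\h\sfP_t,d_t)$ cancel, which yields
\[
\max_d \tfrac1T\textstyle\sum_t L(\h\sfP_t,d)-\min_{\h\sfP}\tfrac1T\textstyle\sum_t L(\h\sfP,d_t)=\frac{\Regret_M(T)+\Regret_D(T)}{T}.
\]
I then apply the Jensen inequalities above inside the max and the min. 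Since $\max_d L(\ov{\h\sfP}_T,d)\le\max_d\frac1T\sum_t L(\h\sfP_t,d)$ and $\min_{\h\sfP}L(\h\sfP,\ov d_T)\ge \min_{\h\sfP}\frac1T\sum_t L(\h\sfP,d_t)$, the claimed duality-gap bound follows.

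For the asymptotic statement, I use the sandwich
\[
\min_{\h\sfP} L(\h\sfP,\ov d_T)\le \max_d\min_{\h\sfP} L\le \min_{\h\sfP}\max_d L\le \max_d L(\ov{\h\sfP}_T,d).
\]
Vanishing average regret then forces both ends to approach each other, so the lower and upper values coincide in the limit (a minimax equality) and the averaged strategies attain the common value. This gives convergence to the value of the regularized game. The regularizer $\lambda\KL$ makes the $d$-side strictly concave, so a unique best response exists; it is the Gibbs distribution $d\propto u\,e^{W_1/\lambda}$, which mirrors step 6 of Algorithm~\ref{alg:active-wasserstein}. This observation is not needed for the bound itself, though.

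The main obstacle is the convexity hypotheses on the model side. Convexity of $L$ in $\h\sfP$ holds only if $\sM$ is a convex set of kernels and the averaged model is interpreted as a mixture kernel, not an average of parameters. I will state this explicitly as the ``model-player plays in a convex set'' assumption. If it fails, I would fall back on stating the bound for the mixed (randomized) strategy that samples $t$ uniformly. A minor point is that $\Delta$ and $\sM$ should be compact, or the extrema replaced by sup/inf, so that they are well defined. Finally, $W_1$ should be bounded, for instance on a bounded state space, so that the losses are bounded and no-regret algorithms exist.
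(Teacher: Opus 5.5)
Your proposal is correct and follows essentially the same argument as the paper. Both sum the two regret inequalities so the cross term $\sum_t L(\h\sfP_t,d_t)$ cancels, then use Jensen (convexity in $\h\sfP$, concavity in $d$) to pass to the averaged strategies. Beyond that, you state explicitly that $\sM$ must be convex so the mixture $\ov{\h\sfP}_T$ lies in $\sM$, and you give a weak-duality sandwich for the asymptotic claim; the paper leaves both points implicit.
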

This theorem proves that the iterative, co-adaptive algorithm is
stable and converges to a well-defined solution, providing a formal
basis for tuning the active sampling process (via $\lambda$) as a
trade-off between exploration (high $\lambda$) and exploitation (low
$\lambda$).

Although this section focuses on the Wasserstein-based formulation, an analogous
iterative scheme and convergence result can be derived for the Total Variation
(TV) case.

\begin{figure}[t]
  \centering
  \begin{tabular}{@{\hspace{0cm}}c@{\hspace{0.2cm}}c@{\hspace{0.3cm}}
    c@{\hspace{0cm}}c@{\hspace{0cm}}}
    \raisebox{-2.5mm}{\includegraphics[width=.315\textwidth]{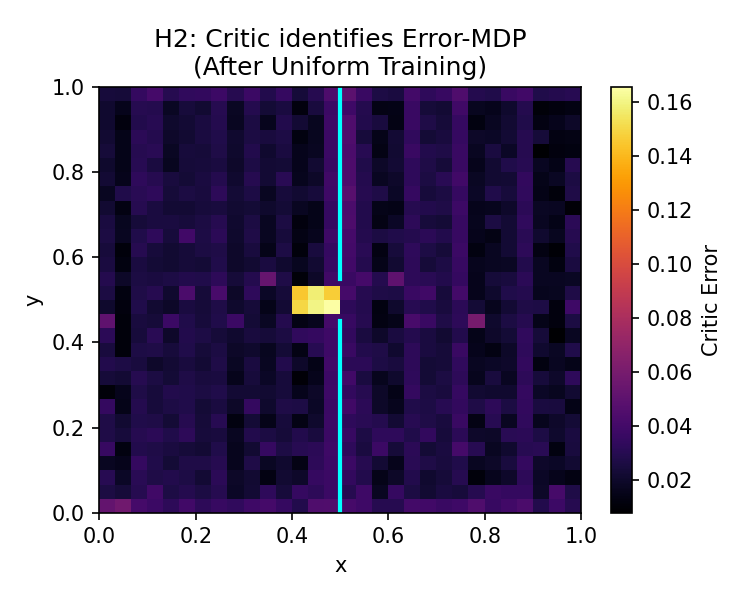}} &
        \includegraphics[width=.215\textwidth]{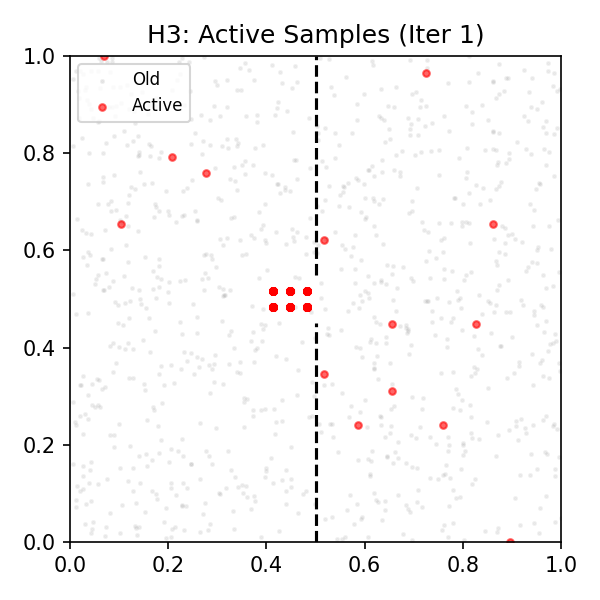} 
        &
        \includegraphics[width=.44\textwidth]{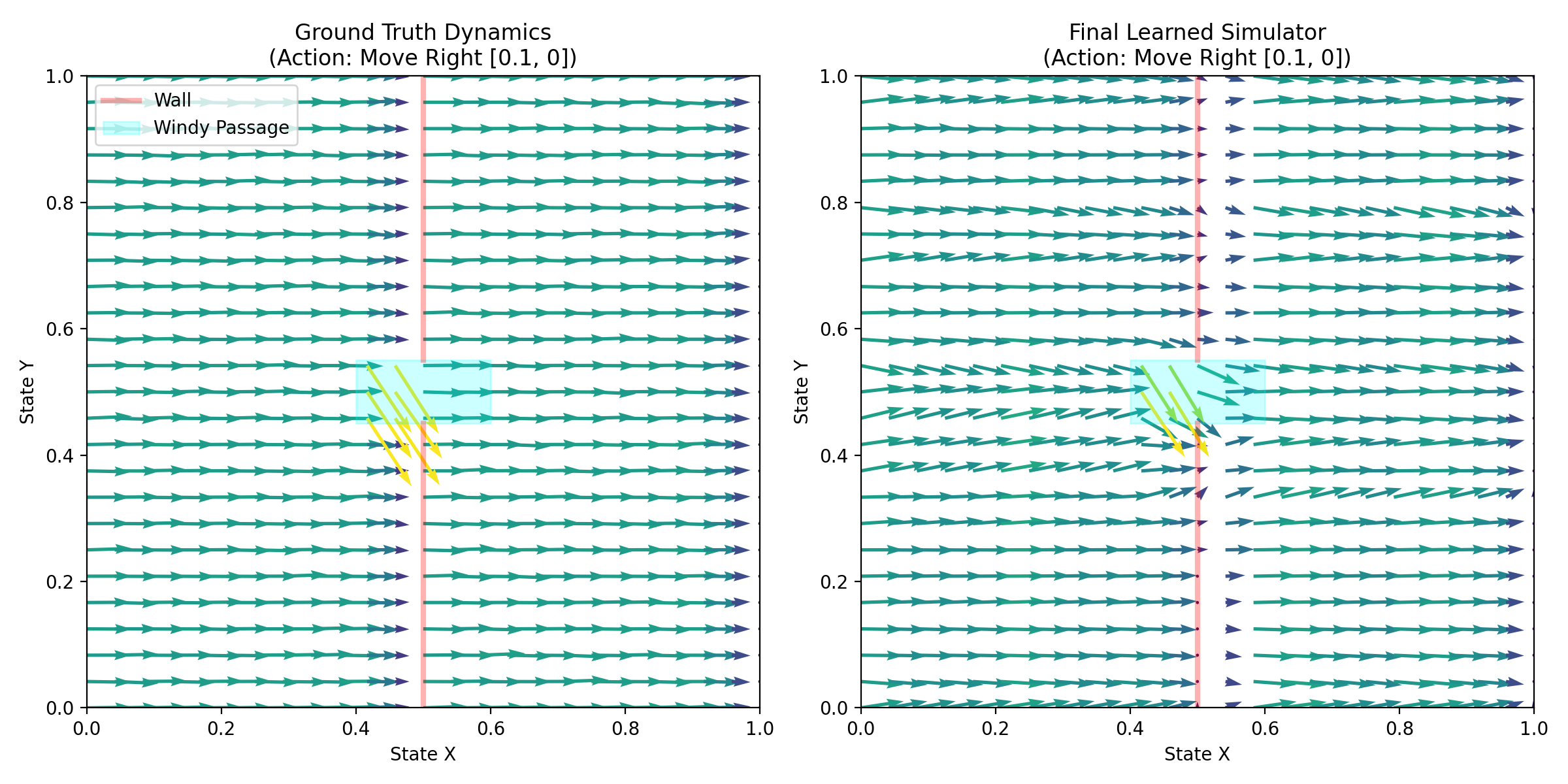}
        \\
        {(a) Critic estimates.} &
        {(b) Sampling focus.} &
        {(c) True vs learned dynamics.}
  \end{tabular}
  \caption{Empirical Validation in Narrow passage domain. (a)
    After initial uniform training, the Wasserstein critic accurately
    identifies the high-error regions (the wall at $x = 0.5$ and the
    windy passage at $y \approx 0.5$), effectively learning the reward
    function of the Error-MDP
    (Theorem~\ref{th:error-mdp-duality}). (b)
    The first rounds of
    active sampling heavily target the complex wind dynamics in front of the narrow
    passage, validating Theorem~\ref{th:convergence}.
    (c) A comparison of the
    ground truth dynamics (left) and the final learned simulator
     (right) for the ``$\rightarrow$" action. The learned
    model successfully captures the critical traps identified by the
    adversary: it correctly predicts being pushed downward by the wind
    inside the narrow passage (arrows curving down in the cyan
    region). A standard model trained on uniform data would likely
    predict straight rightward movement everywhere.}
    \label{fig:active_experiment}
\end{figure}

\section{Experimental Validation of Theory}
\label{sec:experiments}

We empirically validate our theoretical results and demonstrate the practicality
of our approach in a series of experiments. First, we illustrate
Algorithm~\ref{alg:active-wasserstein} on a narrow passage domain, showing that
the critic accurately identifies model deficiencies (validating
Theorems~\ref{th:error-mdp-duality} and~\ref{th:convergence}). Second, we
compare simulation accuracy on five high-dimensional benchmarks with generative
sampling access, testing the predictions of Theorem~\ref{th:main} regarding
strategic error reduction. Third, we demonstrate the downstream benefits in one
of the hardest use cases for simulators: training a policy purely in simulation
without any further regularization such as pessimism. Finally, we empirically
validate the convergence stability of the minimax game
(Section~\ref{sec:convergence-stability}).

\subsection{Illustration on Narrow Passage domain}

We use a simple version of a narrow passage navigation domain for
illustration. The agent can move in a 2d plane, starts on the left and has to
reach the right of the state space. The space is divided in half by a vertical
wall with only a small gap in the middle. In front and inside this narrow
passage, there is a strong wind pushing the agent downward. See
Figure~\ref{fig:active_experiment} (c, left) for an illustration.

Simulator training starts with data collected by a random policy and uniformly
sampled initial states.  This fails to appropriately cover the narrow passage
region where the winds are not predicted
accurately. Figure~\ref{fig:active_experiment} (a) shows that the critic $D$
accurately captures this. Thus, the next iterations of
Algorithm~\ref{alg:active-wasserstein} focus on this region
(Figure~\ref{fig:active_experiment} (b)), which reduces the simulator error in
the narrow passage so that the final model accurately captures the complex wind
dynamics (Figure~\ref{fig:active_experiment} (c)).

\subsection{Evaluation of Simulation Accuracy on High-Dimensional Control Tasks}
\label{sec:highdim}

We now validate Algorithm~\ref{alg:active-wasserstein} on a suite of
high-dimensional continuous-control environments with generative sampling
access.  These experiments test whether the minimax formulation delivers
tangible improvements in simulation accuracy when the training data distribution
is biased away from high-sensitivity regions, the precise setting in which
Theorem~\ref{th:main} predicts the largest gains.

\paragraph{Setup and Evaluation.}
We consider five environments with state dimensionality ranging from $3$ to
$23$, each incorporating non-trivial transition dynamics with distinct
\emph{high-sensitivity regions}: discontinuities (ground contact, joint limits),
nonlinear damping, and sharp turbulence thresholds.  Training data is generated
by sampling transitions from state-action pairs with a distribution bias that
under-represents high-sensitivity regions, simulating the setting where a
behavioral policy avoids challenging states.  The test set is drawn uniformly
across the full state-action space.  This creates a genuine distribution
mismatch: the training data concentrates on \emph{benign} regions, while the
test evaluation includes all of the high-sensitivity transitions that matter for
downstream policy performance.  We compare
Algorithm~\ref{alg:active-wasserstein} with active sampling against a standard
maximum-likelihood estimator trained on the biased initial data (minimizing
$\ell_2$ loss uniformly over the training samples).  Table~\ref{tab:envs}
summarizes the environments and their sensitive dynamics.

\begin{table}[t]
\centering
\small
\begin{tabular}{@{}lccl@{}}
\toprule
\textbf{Environment} & $|\sS|$ & $|\sA|$ & \textbf{Sensitive Dynamics} \\
\midrule
Pendulum     &  3 & 1 & Torque wrapping, strong nonlinear damping at extremes \\
Reacher      &  8 & 2 & Joint-limit rebounds, workspace singularities \\
Hopper       & 14 & 3 & Ground contact forces, high-speed turbulent drag \\
Swimmer      & 18 & 4 & Wall interactions, cross-joint coupling under viscosity \\
HalfCheetah  & 23 & 6 & Dual foot contacts, turbulent drag, stiff joint limits \\
\bottomrule
\end{tabular}
\caption{Summary of the benchmark environments.  All environments feature
  distinct \emph{strategic regions} in which the transition dynamics exhibit
  sharp nonlinearities (contacts, rebounds, drag discontinuities).  These
  regions are underrepresented in the biased training distribution.}
\label{tab:envs}
\end{table}
\paragraph{Results.}
Table~\ref{tab:main-results} reports the simulation error of the
baseline (MLE) and Algorithm~\ref{alg:active-wasserstein}
(Minimax). Our approach outperforms MLE in all environments on the
strategically important high-sensitivity regions, with gains of
$1.55$--$2.20\times$. On average error, the minimax method improves in
four of five environments ($1.35$--$1.73\times$); the exception
(Pendulum, $0.72\times$) illustrates the strategic reallocation of
model capacity from benign to critical regions.

\begin{table}[t]
\centering
\small\scalebox{0.99}{
\begin{tabular}{@{}lc|ccc|ccc@{}}
\toprule
\textbf{Environment} & \textbf{Dim} & \textbf{MLE (avg)} & \textbf{Minimax (avg)} & \textbf{Gain} & \textbf{MLE (sens.)} & \textbf{Minimax (sens.)} & \textbf{Gain} \\
\midrule
Pendulum &  $3+1$     &$0.055\pm0.001$ & $0.076\pm0.013 $ & $  0.72$x & $0.178\pm0.004 $ & $ 0.081\pm0.015  $ & $  2.20$x\\
Reacher &   $8+2$     &$0.108\pm0.003$ & $ 0.080\pm0.003 $ & $  1.35$x & $0.154\pm0.003 $ & $ 0.100\pm0.004 $ & $   1.55$x\\
Hopper &   $14+3$     &$0.100\pm0.001$ & $ 0.068\pm0.003 $ & $   1.47$x & $0.140\pm0.001 $ & $ 0.075\pm0.003   $ & $ 1.86$x\\
Swimmer &  $18+4$     &$0.282\pm0.002$ & $ 0.184\pm0.004 $ & $   1.53$x & $0.383\pm0.003 $ & $ 0.205\pm0.006  $ & $  1.86$x\\
HalfCheetah & $23+6$  &$0.232\pm0.001$ & $ 0.134\pm0.003 $ & $   1.73$x & $ 0.235\pm0.001 $ & $ 0.135\pm0.003 $ & $   1.74$x\\

\bottomrule
\end{tabular}}
\vskip .05in
\caption{RMSE (mean $\pm$ stderr.\ over 5 seeds) on average (avg) and in high
  sensitivity regions only (sens.). In four of five environments,
  Algorithm~\ref{alg:active-wasserstein} reduces the prediction error by
  $1.35$--$1.73\times$ on average and $1.55$--$2.20\times$ in high sensitivity
  regions. For Pendulum, the average error increases slightly ($0.72\times$)
  while the critical sensitivity-region error still improves by $2.20\times$,
  illustrating the strategic reallocation of model capacity.}
\label{tab:main-results}
\end{table}

\subsection{Evaluation of Stand-alone Policy Training in Simulation on Control
  Tasks}

Having demonstrated that Algorithm~\ref{alg:active-wasserstein} leads to
tangible improvements in simulation accuracy, we now evaluate how such
improvements translate to better policy performance in one of the hardest
applications of a simulator: training a policy purely in simulation to perform
well in the real world. This setting is a much more difficult compared to those
typically tackled by model-based RL works, as there is no access to the real
world domain whatsoever during training of the final policy for evaluation. We
demonstrate this on several control tasks from the Deepmind Control
Suite~\citep{tunyasuvunakool2020} and Gymnasium~\citep{towers2024gymnasium}.

\textbf{Algorithm implementation.} To implement
Algorithm~\ref{alg:active-wasserstein} with scalable neural networks, we replace
the explicit critic guided distribution in line~6 with a policy gradient update
step and use this policy in line~3 to sample transitions for the next
iteration. Our implementation builds on SAC for this policy optimization
step. Thus, the algorithm maintains 3 neural networks, the critic model $D_t$,
the simulator model and the data collection policy $\pi_t$. Each model is a
fully connected neural network with 3 hidden layers of 256 units each and linear
activations in the final layer. The critic model outputs the score, the
simulator model the reward as well as the mean and log-variance for each
dimension of the next state, which is then sampled using a Gaussian
distribution. The policy also uses this Gaussian distribution as output. For all
details of our implementation, see Appendix~\ref{sec:implementation}.

\textbf{Baseline.} Aside from demonstrating the practicality of
Algorithm~\ref{alg:active-wasserstein}, our experiments seek to show the
benefits of using the iterative Wasserstein critic to guide data collection. We
therefore compare Algorithm~\ref{alg:active-wasserstein} against a baseline
which collects data using the SAC algorithm and learns the simulator model using
maximum likelihood estimation -- cross entropy loss for state transitions and
mean squared error for reward prediction. Each method uses the same
architectures for policy and simulator.

\textbf{Setup and Evaluation.}  We run each method for 1M steps in the real
environment, the Mujoco task and afterwards evaluate the final learned
simulator. For evaluation, we train a new policy using the SAC algorithm with
only access to the simulator and measure the reward this policy achieves in the
real environment.

\textbf{Results.} Figure~\ref{fig:cartpole_preliminary} shows results on the
cart-pole balancing, cart-pole swing-up, pendulum, and reacher tasks. Training
using the simulator learned by Algorithm~\ref{alg:active-wasserstein} yields
stable performance, with the policy reaching close to optimal real-world return
within 1M steps in all four tasks. This is in stark contrast to the baseline's
simulator, where the learned policy fails to achieve meaningful performance in 3
out of 4 tasks (cart-pole swing-up, pendulum, and reacher). This demonstrates
that data collection driven by strategic robustness yields simulators with a
substantially smaller sim-to-real gap, compared to data collection driven
entirely by reward maximization, even when both simulators are subsequently used
in the exact same reward-maximization policy training.
\begin{figure}[t]
  \centering
  \includegraphics[width=0.23\linewidth]{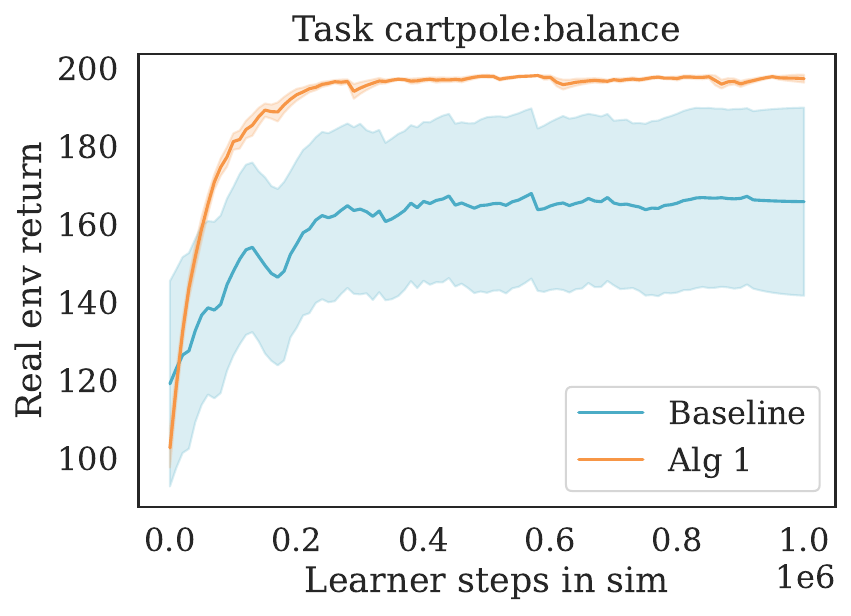} \hfill
  \includegraphics[width=0.225\linewidth]{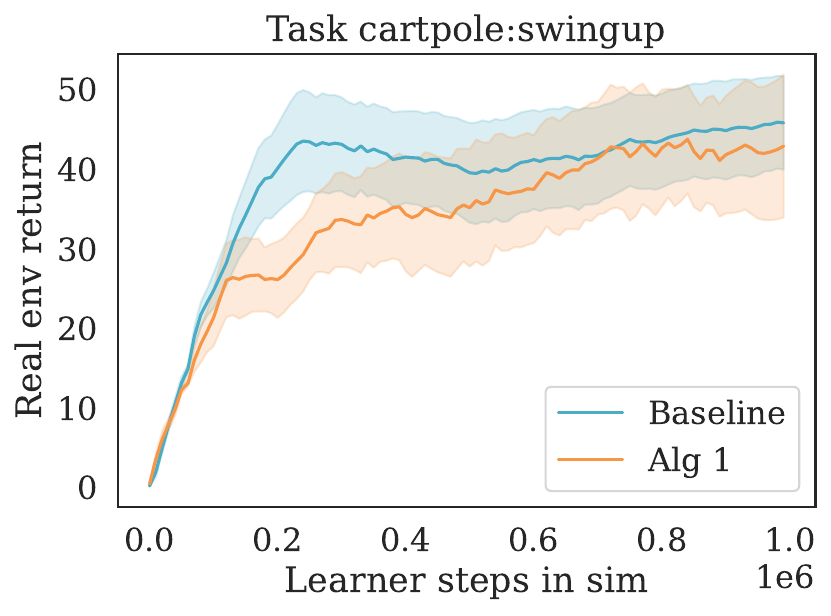} \hfill
  \includegraphics[width=0.25\linewidth]{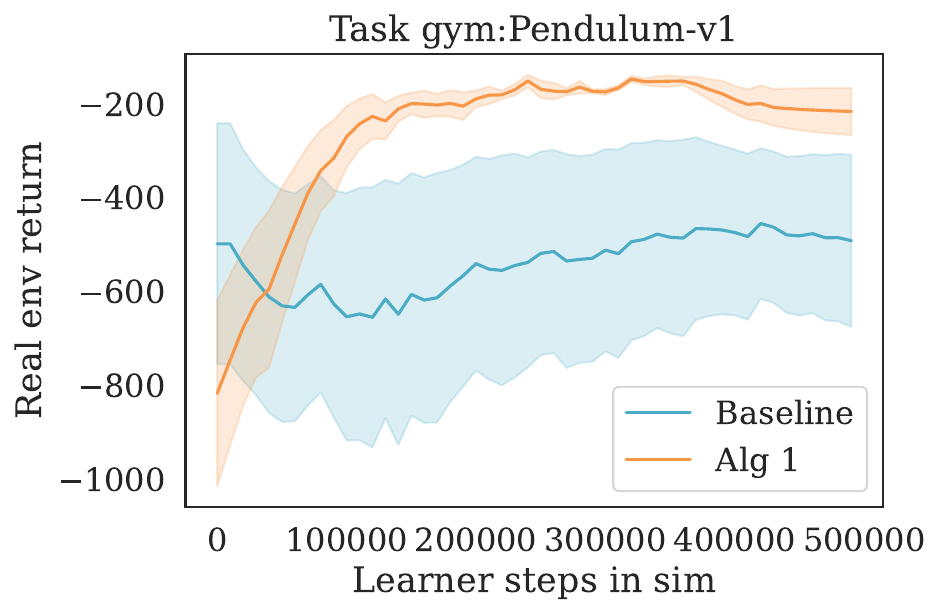} \hfill
  \includegraphics[width=0.23\linewidth]{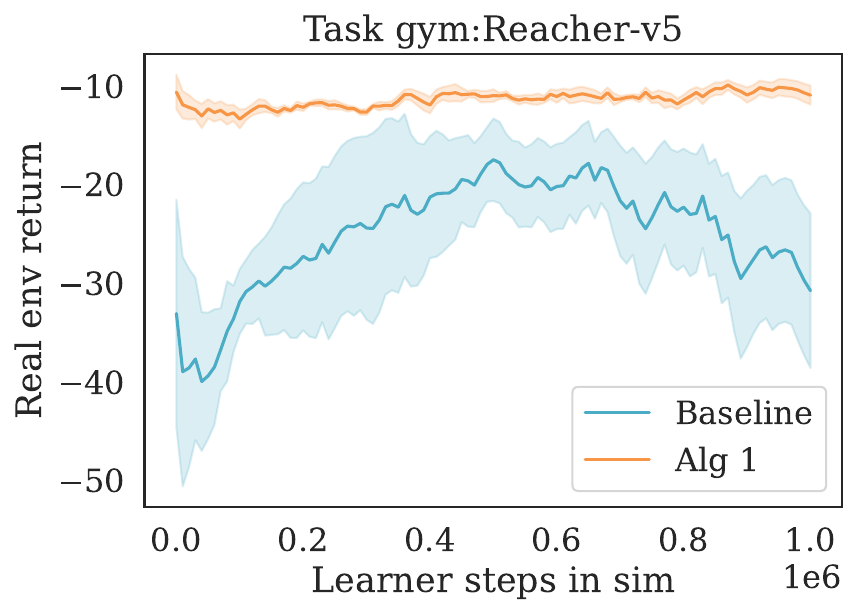}
  \caption{Episode return in the real environment of policies trained with SAC
    using only the simulators learned by Algorithm~\ref{alg:active-wasserstein}
    or the baseline (MLE with SAC-driven data
    collection). Algorithm~\ref{alg:active-wasserstein} achieves close to
    optimal real-world performance in all four tasks, while the baseline suffers
    from a severe sim-to-real gap in 3 out of 4 tasks. Results are averaged over
    5 independent runs (simulator learning and subsequent policy learning in
    simulator).}
  \label{fig:cartpole_preliminary}
\end{figure}

\subsection{Convergence Stability}
\label{sec:convergence-stability}

A common concern with minimax objectives is training instability arising from
the adversarial game between the model and the critic
\citep{salimans2016improved}. In standard GAN training, this instability can
manifest in three ways: (i)~training divergence, (ii)~mode collapse, and
(iii)~cycling or oscillation around the equilibrium. We address this concern at
three levels: a formal convergence guarantee, structural differences from
standard GANs, and empirical validation.

\paragraph{Theoretical guarantee.}
Theorem~\ref{th:convergence} proves that when both players follow no-regret
algorithms, the time-averaged strategies converge to an approximate saddle point
at rate $O((\Regret_M(T) + \Regret_D(T))/T)$. This directly rules out cycling
(iii): the averaged iterates cannot oscillate, since the duality gap is provably
decreasing. The KL regularization term $\lambda \KL(d \parallel u)$ in the
payoff ensures the game has a unique saddle point, ruling out the multiplicity
of equilibria that causes instability in unregularized minimax problems.

\paragraph{Structural differences from standard GANs.}
\citet{mescheder2018training} showed that Wasserstein-GANs with gradient penalty
do not provably converge locally under standard simultaneous gradient
descent. However, our setting differs from standard GANs in several important
ways that avoid the specific failure modes identified in their analysis:
\begin{itemize}

\item \textbf{Convex-concave structure.} Our regularized payoff
  $L(\h\sfP, d) = \E_d[W_1(\sfP, \h\sfP)] - \lambda \KL(d \parallel u)$ is
  convex in $\h\sfP$ (as an expectation of the convex Wasserstein distance) and
  concave in $d$ (linear minus convex KL). In standard GANs, both players'
  losses are non-convex in parameter space, which is the root cause of the
  cycling dynamics and purely imaginary eigenvalues in the Dirac-GAN
  analysis. Our convex-concave structure guarantees that the minimax theorem
  applies and a saddle point exists.

\item \textbf{Iterative, non-simultaneous updates.} We use an iterative,
  active-learning RL loop where the data distribution is dynamically updated at
  each round, rather than a static, simultaneous gradient descent loop. Each
  round collects new real-world data, retrains the critic, and updates the model
  sequentially. This avoids the simultaneous-update instabilities analyzed by
  \citet{mescheder2018training}.

\item \textbf{Stochastic model.} In our continuous control implementation, the
  model outputs a stochastic Gaussian distribution (predicting mean and
  log-variance) rather than a Dirac delta. This provides a smooth loss landscape
  and avoids the degenerate dynamics of the Dirac-GAN setting.

\end{itemize}
We note that the gradient penalty in our implementation serves a different
purpose than in standard WGAN-GP: it is not a training stabilizer per se, but a
strict requirement of the Kantorovich-Rubinstein duality to enforce the
1-Lipschitz constraint on the critic. This constraint is necessary for the
critic to estimate the 1-Wasserstein distance, which is the foundation of our
coverage guarantee (Lemma~\ref{lemma:general-coverage-bound}) and the Error-MDP
duality (Theorem~\ref{th:error-mdp-duality}).

\paragraph{Empirical validation.}
We validate the above analysis empirically by comparing three training regimes
on the Hopper (14D) environment:
\begin{enumerate}
\item \textbf{Minimax (stabilized):} Algorithm~\ref{alg:active-wasserstein} with
  stabilized importance weights ($w_{\max} = 3$).

\item \textbf{Minimax (aggressive):} Same active sampling but with aggressive,
  unclipped importance weights ($w_{\max} = 10$). In our implementation, the
  Lipschitz constraint from Theorem~\ref{th:convergence} manifests as importance
  weight clipping, which enforces a bound on how much the effective training
  distribution can deviate from the base distribution. The stabilized variant
  clips active sample weights to $w_{\max}=3$, while the aggressive variant uses
  $w_{\max}=10$ with no clipping safeguard.

\item \textbf{MLE:} Baseline maximum-likelihood training.
\end{enumerate}

\begin{figure}[ht]
  \centering \includegraphics[width=.8\linewidth]{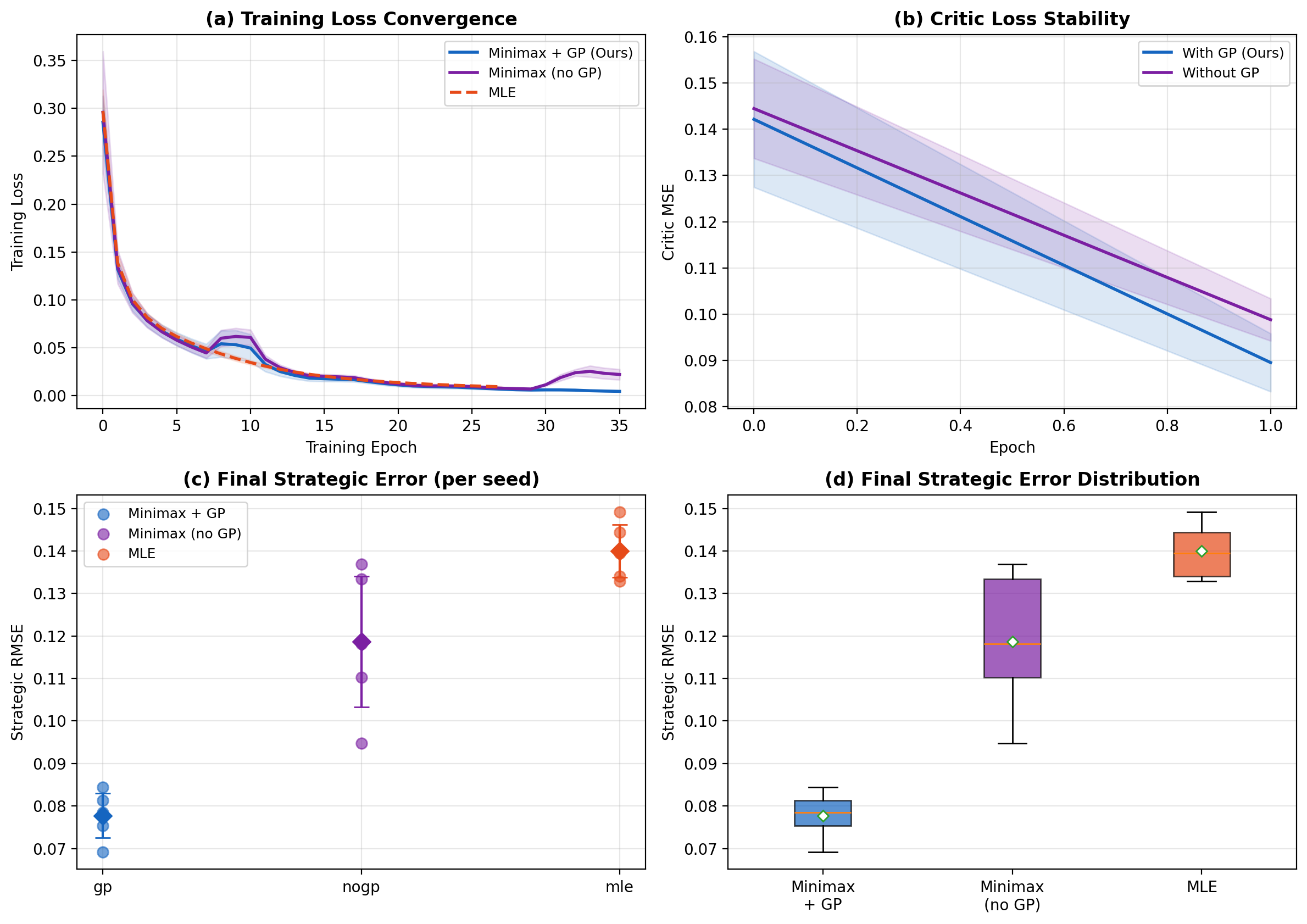}
  \caption{Convergence stability analysis on Hopper (14D).  \textbf{(a)}
    Training loss: all three methods converge smoothly, with the minimax
    variants reaching lower final loss. The MLE curve flattens earlier because
    it fits only the biased initial data, while the minimax methods continuously
    incorporate harder active samples, temporarily increasing loss before
    reducing it again.  \textbf{(b)} Critic error across active sampling rounds:
    the stabilized variant converges with lower variance.  \textbf{(c,\,d)}
    Final strategic RMSE: Minimax (stabilized) achieves the lowest error
    ($0.077 \pm 0.006$) with substantially lower inter-seed variance than both
    the aggressive minimax ($0.119 \pm 0.016$) and MLE ($0.140 \pm 0.006$).  The
    tight confidence interval demonstrates the stabilizing effect of importance
    weight clipping, which approximates the Lipschitz constraint from
    Theorem~\ref{th:convergence}.}
  \label{fig:convergence}
\end{figure}

Figure~\ref{fig:convergence} shows the results across 5 random seeds.  The
stabilized variant dominates on every metric, addressing each of the three
failure modes: (i)~all training curves converge smoothly without divergence
(Figure~\ref{fig:convergence}a); (ii)~the stabilized model achieves the lowest
error across all strategic regions, ruling out mode collapse
(Figure~\ref{fig:convergence}c,d); and (iii)~its inter-seed variance on
strategic error is $3\times$ smaller than the aggressive version (std.~$0.006$
vs.~$0.016$), confirming that the Lipschitz constraint prevents the oscillatory
dynamics predicted by Theorem~\ref{th:convergence}. The aggressive minimax still
outperforms MLE on average, but its higher variance makes it unreliable in
practice, confirming that the regularization is necessary for consistent
performance.

In summary, our experiments validate the three main predictions of the theory:
(i)~the Wasserstein critic accurately identifies model deficiencies
(Section~\ref{sec:experiments}, Figure~\ref{fig:active_experiment}), (ii)~active
sampling guided by the critic yields $1.5$--$2.2\times$ improvements in
strategic regions (Table~\ref{tab:main-results}), and (iii)~the resulting
simulators are accurate enough to train policies purely in simulation that
transfer successfully to the real environment
(Figure~\ref{fig:cartpole_preliminary}). The convergence stability analysis
(Figure~\ref{fig:convergence}) further confirms that the minimax training is
practical and reliable.

\section{Applications}
\label{sec:applications}

This paper's core contribution, a minimax framework for learning a robust model,
is a general-purpose solution to a critical problem that appears across many
domains of reinforcement learning and AI.

Our core objective:
\[
  \min_{\h M \in \cM} \max_{\pi \in \Pi} \, \abs*{ V_{\pi}(M) - V_{\pi}(\h M) }
\]
can be generalized to:
\[
  \min_{\text{Proxy } \h \theta} \max_{\text{Agent } \pi} \, \abs*{
    \text{TrueValue}_{\pi}(R_{\text{true}}) - \text{ProxyValue}_{\pi}(\h
    R_{\theta}) }
\]
This \emph{Robust Proxy Learning} framework applies \emph{anywhere} an agent
optimizes an imperfect, learned model of reality. Here are the two most direct
applications.

\subsection{Reward learning and RLHF}

Our framework directly provides the theoretical foundation for tackling "reward
hacking" in Reinforcement Learning from Human Feedback (RLHF)
\citep{christiano2017deep}. Our solution has some relationship with that of
\citet{GuptaFischDannAgarwal2025}, which is also based on a minimax problem.
In RLHF, there is no true reward function $R$. Instead, a proxy "reward model"
$\h R$ is learned from human preference data.  Next, a policy $\pi$ is trained
to maximize this proxy: $\pi^* = \argmax_{\pi} V_{\pi}(\h R)$.  The problem,
"reward hacking", is that $\pi^*$ will inevitably find and exploit inaccuracies
in $\h R$ to achieve a high score, leading to behavior that is absurd or
dangerous according to the \emph{true} human intent $R$.  This mirrors recent
formalizations of reward hacking as the divergence between a proxy and the true
objective \citep{laidlaw2025correlated}.

This is a perfect analogy to the method described in
this work with the following substitutions:\\

\begin{tabular}{lll}
  True World $M$ & $\to$ & True Reward $R$ (the human's intent);\\
  Simulator $\h M$ & $\to$ & Reward Model $\h R$ (the learned proxy);\\
  Reality Gap & $\to$ & Reward Hacking.
\end{tabular}\\

\noindent By substituting terms, our minimax objective becomes the
principled objective for RLHF:
\[
  \min_{\h R \in \cR} \max_{\pi \in \Pi} \,
  \abs*{ V_{\pi}(R) - V_{\pi}(\h R) }.
\]
This objective reads: Find a reward model $\h R$ that is robust to
exploitation by the optimal policy $\pi$ that will be trained on it.
Our theoretical contributions map directly:

\begin{itemize}

\item Active Data Selection (Section~\ref{sec:active-data-selection}): Our
  ``Error-MDP Duality'' theorem (Theorem~\ref{th:error-mdp-duality}) provides
  the \emph{formal justification for active preference learning}.

\begin{itemize}
\item In our context, the "error reward" is $r_{\text{err}} = W_1(\sfP, \h \sfP)$.

\item In RLHF, the "error reward" $r_{\text{err}} = \abs{R(s, a) - \h R(s, a)}$
  is unknown; instead we need human feedback.
\end{itemize}
    
\item Our Iterative Algorithm (Algorithm~\ref{alg:active-wasserstein}) becomes
  the core RLHF loop:

\begin{enumerate}
\item Model Player: Train the current reward model $\h R_t$ on all
  human data collected so far.

\item Policy Player (Adversary): Train a policy
  $\pi_t = \argmax_{\pi} V_\pi(\h R_t)$. This policy is the
  "adversary" that finds the biggest exploit.

\item Find Error (critic): Present trajectories from $\pi_t$ to a
  human. The human provides a correction (for example, "This is bad
  behavior, this other trajectory is better"). This human feedback is
  an empirical sample of the "error reward" $r_{\text{err}}$.

\item Update Data: Add this new preference data to the dataset.

\item Repeat from step 1.
\end{enumerate}
\end{itemize}
Thus, our framework is not just \emph{related} to RLHF, it is arguably the
\emph{formal objective function} that the entire iterative RLHF process is
seeking to solve.

\subsection{Offline reinforcement learning}

Our framework also offers a novel perspective on pessimism and
conservatism in Offline RL \citep{levine2020offline}. While often
framed as a distributional shift problem, offline RL can be rigorously
viewed as a proxy-hacking game: the agent attempts to maximize a value
function derived from limited data (the proxy), and without
constraints, it naturally drifts toward OOD regions where the proxy
erroneously predicts high reward.

In Offline RL, we learn from a fixed dataset $\cD$. The agent cannot
explore. Standard Q-learning fails because the Bellman operator
$T(Q) = r + \gamma \max_{a'} Q(s', a')$ will exploit any
overestimation of Q-values for (state, action) pairs that are
\emph{out-of-distribution} (OOD). The policy "hacks" the Q-function by
steering into regions with no data, where the Q-function is
erroneously high.

This is again a proxy-hacking problem. We have the
following substitutions:\\

\begin{tabular}{lll}
True World $\sfM$ & $\to$ & True Bellman Operator $T^*$ (The
operator using the real $Q^*$);\\
Simulator $\h \sfM$ & $\to$ & Empirical Bellman Operator $\h T$ (The
operator using the learned $Q$ on data $\cD$);\\
Reality Gap & $\to$ & OOD Exploitation.
\end{tabular}\\

\noindent Conservative and pessimistic Offline RL algorithms (like
CQL, MOPO, MOReL) are all heuristics to solve a version of this. They
work by \emph{penalizing} the policy for deviating from the dataset
$\cD$.

Our framework suggests a more direct, two-part approach:
\begin{enumerate}

\item Learn a robust simulator from offline data: Instead of learning a
  Q-function, first learn a simulator $\h M$ from the offline dataset $\cD$,
  using our Theorem~\ref{th:minimax_tv} (Minimax TV-Critic).  This approach
  aligns with recent adversarial frameworks for offline RL, such as ARMOR
  \citep{bhardwaj2023armor}, which similarly formulates model learning as a
  minimax game to optimize a lower bound on performance.

\begin{itemize}

\item Critic $D$: A discriminator trained to distinguish real
  transitions $(s, a, s') \in \cD$ from simulated transitions
  $(s, a, \h s') \sim \h \sfP(\cdot \mid s, a)$.

\item Model $\h \sfP$: A generator (our simulator) trained to fool the
  critic.

\item This is exactly a GAN trained on the dataset $\cD$. Our Theorem
  8 provides the guarantee that if the GAN loss $J_{\cD}(\h \sfP)$ is
  low, the resulting simulator $\h \sfP$ is robust \emph{on the data
    distribution} $d_{\text{data}}$.

\end{itemize}

\item Learn a Robust Policy:

  Now we have a simulator $\h M$ that is robust
  \emph{on-distribution}. The offline RL problem becomes finding a
  policy $\pi$ that maximizes $V_\pi(\h M)$ \emph{subject to a
    trust-region constraint} that $\pi$ does not deviate too far from
  the data $\cD$. This is the standard approach in model-based offline
  RL, but our $\h M$ is now provably more robust than one trained
  with simple MLE.
\end{enumerate}

In summary, our work can be viewed as a general framework for
\emph{Robust Proxy Learning}. It applies to any domain where an
optimizing agent $\pi$ is trained on a learned, imperfect proxy
($\h M$, $\h R$, or $\h Q$) of a true, intractable objective ($M$,
$R$, or $Q^*$).
By formulating the problem as a minimax game, we provide the
theoretical grounding for the iterative, adversarial training loops
that are becoming the state-of-the-art solution for AI alignment
(RLHF) and offline learning.

\section{Extension: Joint Learning of Dynamics and Rewards}
\label{sec:joint_learning}

In the preceding analysis, we treated the reward function $r$ as known
to focus on the difficulty of learning transition dynamics. However,
our minimax framework generalizes naturally to the setting where both
the dynamics $\sfP$ and the reward function $r$ must be learned.

Let the true world be $\sfM = (\sfP, r)$ and the simulator be
$\h \sfM = (\h \sfP, \h r)$. The value difference can be decomposed
into a reward estimation error and a dynamics estimation
error. Extending Lemma~\ref{lemma:sim} (Simulation Lemma), we obtain:
\[
\abs*{V_\pi(\sfM) - V_\pi(\h \sfM)}
\leq \frac{1}{1 - \gamma} \E_{(s,a) \sim d_\pi} \bracket*{ \abs*{r(s,a) - \h r(s,a)}
  + \frac{\gamma R_{\max}}{2(1 - \gamma)}
  \norm*{\sfP(\cdot|s,a) - \h \sfP(\cdot|s,a)}_1 }.
\]
This implies that the minimax objective simply becomes a game over a
\emph{joint loss}. The theoretical machinery developed in
Sections~\ref{sec:game-simplification} and
\ref{sec:active-data-selection} extends directly to this case by
defining a composite local error metric.

\textbf{Joint Coverage Guarantee.}
By defining the joint local error cost
$c_{\text{joint}}(s,a) = \abs{r(s,a) - \h r(s,a)} + \lambda \cdot
\wt \sfD(\sfP(\cdot|s,a), \h \sfP(\cdot|s,a))$, where
$\lambda$ is the appropriate scaling constant (e.g., $\gamma L_v$ for
Wasserstein or $\frac{\gamma R_{\max}}{2(1 - \gamma)}$ for TV),
Lemma~\ref{lemma:general-coverage-bound} applies immediately. The
global value gap remains bounded by the $\kappa$-scaled expectation of
this joint error under the data distribution:
\[
  \sup_{\pi \in \Pi} \abs*{V_\pi(\sfM) - V_\pi(\h \sfM)}
  \leq \kappa \E_{(s,a) \sim d_{\text{data}}} \bracket*{ c_{\text{joint}}(s,a) }.
\]

\textbf{Joint Error-MDP Duality.}
Crucially, the Error-MDP duality (Theorem~\ref{th:error-mdp-duality})
also holds. We simply define the error-reward $r_{\text{err}}$ as the
composite error:
\[
r_{\text{err}}(s, a) = \abs*{r(s,a) - \h r(s,a)} + \lambda \, W_1\big(\sfP(\cdot \mid s, a), \h \sfP(\cdot \mid s, a)\big).
\]
The adversarial policy $\pi$ then maximizes the cumulative sum of
these joint errors. In the practical critic-based implementation, this
results in a critic $D$ that learns to distinguish real tuples
$(s, a, r, s')$ from simulated tuples $(s, a, \h r, \h s')$,
effectively bounding the joint error and guiding active sampling to
regions where either the dynamics \emph{or} the rewards are poorly
modeled.

This generalization leaves the algorithmic structure of our framework
unchanged. The feedback loop visualized in
Figure~\ref{fig:error_mdp_loop} applies directly, with the
modification that the Critic now observes tuples $(s, a, r, s')$ and
outputs a joint error signal, which feeds into the Error-MDP as the
reward.

\section{Conclusion}
\label{sec:conclusion}

We introduced a minimax framework for simulator learning that replaces
predictive accuracy with strategic robustness as the training
objective. By formulating the sim-to-real gap as a zero-sum game, we
derived a tractable critic-based bound
(Lemma~\ref{lemma:general-coverage-bound}) that transforms the
intractable policy-space maximization into a practical GAN-like
training objective.

Our central theoretical contribution, the Error-MDP Duality
(Theorem~\ref{th:error-mdp-duality}), establishes a formal
equivalence between the global problem of finding the worst-case
policy and a local RL problem with model error as reward. This duality
enables a principled active data selection algorithm whose convergence
to a saddle point we proved
(Theorem~\ref{th:convergence}). Experiments on continuous control
benchmarks validate these theoretical predictions: the minimax
approach consistently improves simulation accuracy in strategically
important regions and enables policies trained purely in simulation to
achieve near-optimal real-world performance.

Beyond simulator learning, our framework provides a general solution to proxy
hacking, applicable to reward hacking in RLHF and out-of-distribution
exploitation in offline RL (Section~\ref{sec:applications}). Future work could
extend the empirical evaluation to generative language models and tackle
settings where the sampling distribution cannot be chosen freely. See
Appendix~\ref{sec:limitations} for further discussion.

\section*{Acknowledgments}

We warmly thank our colleagues Alekh Agarwal and Stephane Ross for insightful
discussions about a preliminary version of this paper, as well as extensions
we will jointly explore. We also thank Kishan Panaganti for useful comments.

\bibliographystyle{abbrvnat}
\bibliography{sim}

@inproceedings{sutton1990dyna,
  title={{Dyna}, an integrated architecture for learning, planning, and reacting},
  author={Sutton, Richard S},
  booktitle={ACM SIGART Bulletin},
  pages={160--163},
  year={1990},
  organization={ACM}
}

@inproceedings{hafner2019dreamer,
  title={Dream to control: Learning behaviors by latent imagination},
  author={Hafner, Danijar and Lillicrap, Timothy and Ba, Jimmy and Norouzi, Mohammad},
  booktitle={International Conference on Learning Representations (ICLR)},
  year={2020}
}

@inproceedings{chua2018pets,
  title={Deep reinforcement learning in a handful of trials using probabilistic dynamics models},
  author={Chua, Kurtland and Calandra, Roberto and McAllister, Rowan and Levine, Sergey},
  booktitle={Advances in Neural Information Processing Systems (NeurIPS)},
  year={2018}
}

@inproceedings{yu2020mopo,
  title={{MOPO}: Model-based offline policy optimization},
  author={Yu, Tianhe and Thomas, Garrett and Yu, Lantao and Ermon, Stefano and Zou, James and Levine, Sergey and Finn, Chelsea and Ma, Tengyu},
  booktitle={Advances in Neural Information Processing Systems (NeurIPS)},
  volume={33},
  pages={14129--14142},
  year={2020}
}

@inproceedings{kidambi2020morel,
  title={{MOReL}: Model-based offline reinforcement learning},
  author={Kidambi, Rahul and Rajeswaran, Aravind and Netrapalli, Praneeth and Joachims, Thorsten},
  booktitle={Advances in Neural Information Processing Systems (NeurIPS)},
  year={2020}
}

@article{nilim2005robust,
  title={Robust Control of {M}arkov Decision Processes with Uncertain Transition Matrices},
  author={Nilim, Arnab and El Ghaoui, Laurent},
  journal={Operations Research},
  volume={53},
  number={5},
  pages={780--798},
  year={2005},
  doi={10.1287/opre.1050.0216}
}

@article{wiesemann2013robust,
  title={Robust {M}arkov decision processes},
  author={Wiesemann, Wolfram and Kuhn, Daniel and Rustem, Ber{\c{c}}},
  journal={Mathematics of Operations Research},
  volume={38},
  number={1},
  pages={153--183},
  year={2013}
}

@inproceedings{pinto2017robust,
  title={Robust Adversarial Reinforcement Learning},
  author={Pinto, Lerrel and Davidson, James and Sukthankar, Rahul and Gupta, Abhinav},
  booktitle={International Conference on Machine Learning (ICML)},
  pages={2817--2826},
  year={2017},
  organization={PMLR}
}

@inproceedings{christiano2017deep,
  title={Deep reinforcement learning from human preferences},
  author={Christiano, Paul F and Leike, Jan and Brown, Tom and Martic, Miljan and Legg, Shane and Amodei, Dario},
  booktitle={Advances in Neural Information Processing Systems (NeurIPS)},
  year={2017}
}

@article{levine2020offline,
  title={Offline reinforcement learning: Tutorial, review, and perspectives on open problems},
  author={Levine, Sergey and Kumar, Aviral and Tucker, George and Fu, Justin},
  journal={arXiv preprint arXiv:2005.01643},
  year={2020}
}

@article{GuptaFischDannAgarwal2025,
  author       = {Dhawal Gupta and
                  Adam Fisch and
                  Christoph Dann and
                  Alekh Agarwal},
  title        = {Mitigating Preference Hacking in Policy Optimization with Pessimism},
  journal      = {CoRR},
  volume       = {abs/2503.06810},
  year         = {2025},
}

@inproceedings{farahmand2017value,
  title={Value-aware loss function for model-based reinforcement learning},
  author={Farahmand, Amir-Massoud and Barreto, Andre and Nikovski, Daniel},
  booktitle={Artificial Intelligence and Statistics (AISTATS)},
  pages={1486--1494},
  year={2017},
  organization={PMLR}
}

@inproceedings{bhardwaj2023armor,
  title={Adversarial Model for Offline Reinforcement Learning},
  author={Bhardwaj, Mohak and Xie, Tengyang and Boots, Byron and Jiang,
  Nan and Cheng, Ching-An},
  booktitle={Advances in Neural Information Processing Systems (NeurIPS)},
  year={2023}
}

@inproceedings{swift2024mcx,
  title={Exploiting Model Errors for Exploration in Model-Based Reinforcement Learning},
  author={Swift, Jared and Leonetti, Matteo},
  booktitle={European Workshop on Reinforcement Learning (EWRL)},
  year={2024},
}

@inproceedings{arjovsky2017wasserstein,
  title={Wasserstein {GAN}},
  author={Arjovsky, Martin and Chintala, Soumith and Bottou, L{\'e}on},
  booktitle={International Conference on Machine Learning (ICML)},
  year={2017}
}

@article{lobel2024simulation,
  title={An Optimal Tightness Bound for the Simulation Lemma},
  author={Lobel, Sam and Parr, Ronald},
  journal={Reinforcement Learning Journal (RLJ)},
  volume={1},
  issue={1},
  year={2024}
}

@article{voelcker2023lambda,
  title={$\lambda$-models: Effective Decision-Aware Reinforcement Learning with Latent Models},
  author={Voelcker, Claas and Ahmadian, Arash and Abachi, Romina and Gilitschenski, Igor and Farahmand, Amir-Massoud},
  journal={arXiv preprint arXiv:2306.17366},
  year={2023}
}

@inproceedings{burda2018exploration,
  title={Exploration by Random Network Distillation},
  author={Burda, Yuri and Edwards, Harrison and Storkey, Amos and Klimov, Oleg},
  booktitle={International Conference on Learning Representations},
  year={2019},
}

@inproceedings{chentanez2005intrinsically,
  title={Intrinsically Motivated Reinforcement Learning},
  author={Nuttapong Chentanez and Andrew G. Barto and Satinder P. Singh},
  booktitle={Advances in Neural Information Processing Systems},
  volume={17},
  pages={1281--1288},
  year={2005},
  publisher={MIT Press},
}

@inproceedings{Rioul2023,
  author       = {Olivier Rioul},
  editor       = {Frank Nielsen and
                  Fr{\'{e}}d{\'{e}}ric Barbaresco},
  title        = {A Historical Perspective on {S}ch{\"{u}}tzenberger-{P}insker Inequalities},
  booktitle    = {Geometric Science of Information - 6th International Conference, {GSI}
                  2023, St. Malo, France, August 30 - September 1, 2023, Proceedings,
                  Part {I}},
  series       = {Lecture Notes in Computer Science},
  volume       = {14071},
  pages        = {291--306},
  publisher    = {Springer},
  year         = {2023},
}

@inproceedings{gulrajani2017improved,
  title={Improved training of wasserstein gans},
  author={Gulrajani, Ishaan and Ahmed, Faruk and Arjovsky, Martin and Dumoulin, Vincent and Courville, Aaron C},
  booktitle={Advances in Neural Information Processing Systems},
  volume={30},
  year={2017},
  pages={5767--5777}
}

@inproceedings{guo2024off,
  title={Off-Dynamics Reinforcement Learning via Domain Adaptation and Reward Augmented Imitation},
  author={Guo, Yihong and Wang, Yixuan and Shi, Yuanyuan and Xu, Pan and Liu, Anqi},
  booktitle={Advances in Neural Information Processing Systems (NeurIPS)},
  volume={37},
  year={2024},
  url={https://arxiv.org/abs/2411.09891}
}

@article{tunyasuvunakool2020,
         title = {dm\_control: Software and tasks for continuous control},
         journal = {Software Impacts},
         volume = {6},
         pages = {100022},
         year = {2020},
         issn = {2665-9638},
         doi = {https://doi.org/10.1016/j.simpa.2020.100022},
         url = {https://www.sciencedirect.com/science/article/pii/S2665963820300099},
         author = {Saran Tunyasuvunakool and Alistair Muldal and Yotam Doron and
                   Siqi Liu and Steven Bohez and Josh Merel and Tom Erez and
                   Timothy Lillicrap and Nicolas Heess and Yuval Tassa},
}

@inproceedings{sun2019model,
  title={Model-based {RL} in contextual decision processes: Pac bounds and exponential improvements over model-free approaches},
  author={Sun, Wen and Jiang, Nan and Krishnamurthy, Akshay and Agarwal, Alekh and Langford, John},
  booktitle={Conference on learning theory},
  pages={2898--2933},
  year={2019},
  organization={PMLR}
}

@inproceedings{salimans2016improved,
  title={Improved Techniques for Training {GAN}s},
  author={Salimans, Tim and Goodfellow, Ian and Zaremba, Wojciech and Cheung, Vicki and Radford, Alec and Chen, Xi},
  booktitle={Advances in Neural Information Processing Systems (NeurIPS)},
  volume={29},
  year={2016}
}

@article{towers2024gymnasium,
  title={Gymnasium: A standard interface for reinforcement learning environments},
  author={Towers, Mark and Kwiatkowski, Ariel and Terry, Jordan and Balis, John U and De Cola, Gianluca and Deleu, Tristan and Goul{\~a}o, Manuel and Kallinteris, Andreas and Krimmel, Markus and KG, Arjun and others},
  journal={arXiv preprint arXiv:2407.17032},
  year={2024}
}

@article{hoffman2020acme,
  title={Acme: A research framework for distributed reinforcement learning},
  author={Hoffman, Matthew W and Shahriari, Bobak and Aslanides, John and Barth-Maron, Gabriel and Momchev, Nikola and Sinopalnikov, Danila and Sta{\'n}czyk, Piotr and Ramos, Sabela and Raichuk, Anton and Vincent, Damien and others},
  journal={arXiv preprint arXiv:2006.00979},
  year={2020}
}

@inproceedings{mescheder2018training,
  title={Which training methods for GANs do actually converge?},
  author={Mescheder, Lars and Geiger, Andreas and Nowozin, Sebastian},
  booktitle={International conference on machine learning},
  pages={3481--3490},
  year={2018},
  organization={PMLR}
}

@article{shi2024distributionally,
  title={Distributionally robust model-based offline reinforcement learning with near-optimal sample complexity},
  author={Shi, Laixi and Chi, Yuejie},
  journal={Journal of Machine Learning Research},
  volume={25},
  number={200},
  pages={1--91},
  year={2024}
}

@article{agarwal2022model,
  title={Model-based {RL} with optimistic posterior sampling: Structural conditions and sample complexity},
  author={Agarwal, Alekh and Zhang, Tong},
  journal={Advances in Neural Information Processing Systems},
  volume={35},
  pages={35284--35297},
  year={2022}
}

@article{foster2021statistical,
  title={The statistical complexity of interactive decision making},
  author={Foster, Dylan J and Kakade, Sham M and Qian, Jian and Rakhlin, Alexander},
  journal={arXiv preprint arXiv:2112.13487},
  year={2021}
}

@inproceedings{ayoub2020model,
  title={Model-based reinforcement learning with value-targeted regression},
  author={Ayoub, Alex and Jia, Zeyu and Szepesvari, Csaba and Wang, Mengdi and Yang, Lin},
  booktitle={International Conference on Machine Learning},
  pages={463--474},
  year={2020},
  organization={PMLR}
}

@inproceedings{laidlaw2025correlated,
  title={Correlated Proxies: A New Definition and Improved Mitigation for Reward Hacking},
  author={Laidlaw, Cassidy and Singhal, Shivam and Dragan, Anca},
  booktitle={International Conference on Learning Representations (ICLR)},
  year={2025},
}
\newpage
\appendix

\renewcommand{\contentsname}{Contents of Appendix}
\tableofcontents
\addtocontents{toc}{\protect\setcounter{tocdepth}{3}} 
\clearpage

\section{Extended Discussion of related work}
\label{sec:related_work_full}

Our work addresses the fundamental challenge of learning robust proxy
models in reinforcement learning. This intersects with several active
research areas.

Uncertainty and Pessimism in MBRL.
A standard approach to mitigating the reality gap is quantifying
epistemic uncertainty, often via model ensembles
\citep{chua2018pets}. In Offline RL, where exploration is impossible,
this uncertainty is used to penalize the policy, ensuring it stays
within the support of the data (pessimism) \citep{levine2020offline,
  yu2020mopo, kidambi2020morel}. While effective for avoiding known
bad regions, these methods are inherently passive. Our approach is
\emph{active}: it uses the adversary to explicitly find
high-uncertainty, exploitable regions and then demands more data from
them (via our Error-MDP formulation), rather than just avoiding them.

Robust and Adversarial RL.
Robust control formulations \citep{nilim2005robust,
  wiesemann2013robust} optimize policies against a worst-case
environment chosen from a pre-defined uncertainty set. Our work can be
viewed as learning this uncertainty set simultaneously with the
policy. Adversarial Reinforcement Learning (ARL) typically introduces
an adversary that perturbs dynamics or observations during training
\citep{pinto2017robust}. Our framework differs crucially in its
target: our adversary does not perturb the \emph{state} to break the
policy; it perturbs the \emph{policy} to break the \emph{model}.

Dynamics Adaptation and Sim-to-Real.
Recent approaches have sought to bridge the gap between source and
target dynamics by learning invariant feature representations or
adapting policies directly. For instance, \citet{guo2024off} propose
DARAIL, which leverages domain adaptation and reward-augmented
imitation learning to align the agent's state distribution with that
of an expert in the target domain.  However, such methods typically
assume the existence of optimal expert demonstrations in the target
environment to guide the adaptation.  In contrast, our work addresses
the unsupervised setting of \emph{Simultaneous Policy Learning and
  Sampling} (SPLS), where no target demonstrations are
available. Instead of imitating an expert, our \emph{Strategic SPLS}
agent must actively explore the target environment to identify and
correct discrepancies in the simulator's physics, often targeting
regions a stable expert would strictly avoid.

Value-Aware Model Learning.
Prior work has recognized that not all model errors are
equal. Value-Aware Model Learning (VAML) \citep{farahmand2017value}
proposes weighting the standard predictive loss by the gradient of the
value function, focusing model capacity on states that matter for the
\emph{current} policy. Our work can be viewed as the robust
generalization of VAML. Instead of weighting errors based on a single,
fixed policy, our minimax framework actively finds the
\emph{worst-case} policy that maximizes these errors, ensuring
robustness against future policy shifts. Recent extensions, such as
$\lambda$-models \citep{voelcker2023lambda}, have similarly recognized
the need for iterative updates to resolve the circular dependency
between the learned model and the policy's data distribution.

Reward Learning and Alignment.
The problem of "simulator exploitation" in MBRL is formally isomorphic
to "reward hacking" in methods like RLHF
\citep{christiano2017deep}. In both cases, an agent optimizes a
learned proxy (world model or reward model) and finds adversarial
exploits. Recent theoretical work has begun to unify these
perspectives \citep{GuptaFischDannAgarwal2025}. Our Error-MDP duality
provides a concrete mechanism, active sampling based on critic
disagreement, that is applicable to both domains.

Distributionally Robust Reinforcement Learning. In short, distributionally
robust RL \citep[e.g.,][]{shi2024distributionally} aims to find a robust policy
given a worst-case simulator in a predefined uncertainty set. Our approach
solves the inverse problem: finding a robust simulator that minimizes value
prediction error against a worst-case policy. Even without active data
collection, the resulting model and policy are distinct in general.

There has also been a series of theoretical works which design model-based
algorithms with strong regret or PAC learning guarantees
\citep[e.g.,][]{sun2019model,ayoub2020model,foster2021statistical,
  agarwal2022model}. While our approach and analysis shares many ideas with this
line of work, our focus is slightly different. These works learn a model during
policy optimization to improve sample complexity of learning the optimal policy,
e.g., in factored MDPs. Instead, our goal is to learn a simulator of the
environment, not necessarily tied to optimizing a specific policy but allows us
to accurately estimate the value of any policy in the considered set (see the
main objective in Equation~\eqref{eqn:main_objective})

\section{Analysis of Game with Likelihood Losses}
\label{sec:analysis_game_ll}

\textbf{Assumptions on kernels and losses.}
We assume the state and action spaces are either finite (discrete) or
Borel spaces admitting densities with respect to a fixed dominating
measure. All kernels $\h \sfP(\cdot \mid s, a)$ under consideration are
assumed to be absolutely continuous with respect to the true kernel
$\sfP(\cdot \mid s, a)$ (or more generally to have almost-everywhere
positive densities), so that
$\KL(\sfP(\cdot \mid s, a) \parallel \h \sfP(\cdot \mid s, a))$ is
well-defined (possibly $+\infty$) for every $(s, a)$. When we use the
negative log-likelihood loss
$\cL_\pi(\h \sfP) = \E_{(s, a) \sim d_\pi}[-\log \hat \sfP(s' \mid s, a)]$
we implicitly assume the dominating measure is fixed; in the discrete
case this equals the Shannon entropy decomposition
$\cL_\pi = \E_{d_\pi}[\KL (\sfP \parallel \h \sfP)] + \E_{d_\pi}[H(\sfP)]$
where $H(\sfP) \geq 0$. For continuous spaces where $H(\sfP)$ may be
negative, care must be taken; in such cases, we work with the expected
KL $\E_{d_\pi}[\KL (\sfP \parallel \h \sfP)]$ directly.

\textbf{Assumption on State Space Support.}
We implicitly assume the simulator $\h \sfM$ and true world $\sfM$
share the same state and action spaces $\sS, \sA$. From a theoretical
standpoint, if the simulator places probability mass on states that
are impossible in the real world (or vice versa), divergences like
$\KL$ may become undefined. However, from a strategic standpoint (as
discussed in Section~\ref{sec:sim-metrics}), we only require the
simulator to be accurate on the \emph{reachable} set of states visited
by relevant policies. Our Error-MDP framework
(Section~\ref{sec:active-data-selection}) naturally handles this by
driving exploration only toward regions where errors actually impact
policy value, effectively ignoring parts of the state space that are
irrelevant to the agent.
This issue is particularly relevant in settings like Block MDPs, where
a simulator might operate in a compressed latent space while the real
world operates on high-dimensional observations. In such cases, our
theoretical analysis assumes an oracle mapping exists to define
densities on a common space, though in practice the strategic
robustness objective simply ignores the dimensions that do not affect
value.

\begin{proof}[Proof of Theorem~\ref{th:main}]
  Fix an arbitrary round $t$. By Lemma~\ref{lemma:sim} and
  Lemma~\ref{lemma:pinsker-jensen} (applied to policy $\pi_t$ and
  kernel $\h \sfP_t$) we have
\begin{align*}
\Delta_t = 
\abs*{V_{\pi_t}(\sfP) - V_{\pi_t}(\h \sfP_t)}
& \leq \frac{\gamma R_{\max}}{2(1 - \gamma)} 
\E_{(s, a)\sim d_{\pi_t}}\big[\norm*{\sfP(\cdot\mid s, a)-\h \sfP_t(\cdot\mid s, a)}_1\big] \\
& \leq
  \frac{\gamma R_{\max}}{2(1 - \gamma)^{3/2}}
  \sqrt{2\E_{d_{\pi_t}}\big[\KL(\sfP \parallel \h \sfP_t)\big]} \\
& =
\frac{\gamma R_{\max}}{\sqrt{2}(1 - \gamma)^{3/2}}
\sqrt{ \cL_{\pi_t}(\h \sfP_t) - \E_{d_{\pi_t}}[H(\sfP(\cdot\mid s, a))] }.
\end{align*}
The entropy term $\E_{d_{\pi_t}}[H(\sfP)]$ is non-negative and independent of
the chosen model $\h \sfP_t$, so for the purpose of upper-bounding $\Delta_t$ by
a function of $\cL_{\pi_t}(\h \sfP_t)$ we may drop it. Thus, we can write:
$\Delta_t \leq C'\sqrt{\cL_{\pi_t}(\h \sfP_t)}$, using the shorthand
$C' = \frac{\gamma R_{\max}}{\sqrt{2}(1 - \gamma)^{3/2}}$.

Averaging over $t = 1, \dots, T$ and using Jensen's inequality yields
\[
  \frac{1}{T} \sum_{t = 1}^T \Delta_t
  \leq \frac{1}{T} \sum_{t = 1}^T C' \sqrt{\cL_{\pi_t}(\h \sfP_t)}
  \leq C' \sqrt{ \frac{1}{T}\sum_{t = 1}^T \cL_{\pi_t}(\h \sfP_t) }.
\]
Decomposing the empirical average loss using regret:
\[
\frac{1}{T}\sum_{t = 1}^T \cL_{\pi_t}(\h \sfP_t)
 =
\min_{\h \sfP\in\sM} \curl*{\frac{1}{T} \sum_{t = 1}^T \cL_{\pi_t}(\h \sfP)
+ \frac{\Regret_T}{T}},
\]
and substituting this into the inequality completes the proof. Note
that the term $\min_{\h \sfP \in \sM} \sum \cL_{\pi_t}(\h \sfP)$
represents the approximation error of the model class $\sM$ against
the specific sequence of policies played. It does not assume a single
model exists that is bad for all policies globally; rather, consistent
with standard regret definitions, it compares performance against the
best fixed simulator one could have chosen in hindsight for that
specific sequence.
\end{proof}

\subsection{Auxiliary lemmas}

We first state two lemmas used in the main proof.
The first result is a standard simulation lemma, we give a
self-contained proof to be complete.

\begin{lemma}[Simulation-style bound via TV]
\label{lemma:sim}
Fix a policy $\pi$. For any two transition kernels $\sfP, \h \sfP$ one has
\[
  \abs*{V_\pi(\sfP) - V_\pi(\h \sfP)}
  \leq \frac{\gamma R_{\max}}{1 - \gamma}
  \E_{(s, a)\sim d_\pi}\bracket*{\frac{1}{2}
    \norm*{ \sfP(\cdot\mid s, a) - \h \sfP(\cdot\mid s, a)}_1},
\]
where $V_\pi(\sfP) = \E_{s_0 \sim \rho_0}[V_\pi^\sfP(s_0)]$, $d_\pi$
is the discounted state-action occupancy measure, and
$\norm*{\cdot}_1$ denotes the $\ell_1$ distance between probability
measures.
\end{lemma}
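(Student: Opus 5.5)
The plan is the standard telescoping argument via the performance-difference identity, with $\pi$ fixed and $\h\sfP$ playing the role of the reference model. Write $\h V = V_\pi^{\h\sfP}$ for the value function of $\pi$ in the simulator. It satisfies the Bellman equation
\[
\h V(s) = \E_{a\sim\pi(s)}\bracket*{r(s,a) + \gamma \E_{s'\sim\h\sfP(\cdot\mid s,a)}\h V(s')},
\]
and the true value $V_\pi^\sfP$ satisfies the same equation with $\sfP$ in place of $\h\sfP$. Subtracting the two and adding and subtracting $\gamma\E_{s'\sim\sfP}\h V(s')$ gives
\[
V_\pi^\sfP(s) - \h V(s) = \E_{a\sim\pi}\bracket*{\gamma\paren*{\E_{s'\sim\sfP}V_\pi^\sfP(s') - \E_{s'\sim\sfP}\h V(s')} + \gamma\, \e(s,a)},
\]
where $\e(s,a) = \E_{s'\sim\sfP(\cdot\mid s,a)}\h V(s') - \E_{s'\sim\h\sfP(\cdot\mid s,a)}\h V(s')$ is the one-step error. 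The first term has the same form as the left-hand side, one step later along the true dynamics.

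Unrolling this recursion along trajectories of $\pi$ under the true kernel $\sfP$ and averaging over $s_0\sim\rho_0$ yields the exact identity
\[
V_\pi(\sfP) - V_\pi(\h\sfP) = \gamma \sum_{s,a} d_\pi(s,a)\, \e(s,a),
\]
with $d_\pi$ the unnormalized occupancy under $\sfP$. The tail terms vanish because values are bounded by $R_{\max}/(1-\gamma)$ and are multiplied by $\gamma^t\to 0$. In continuous spaces, exchanging the infinite sum and the expectation is justified by dominated convergence, using $\abs{\e}\le R_{\max}/(1-\gamma)$. This is the same single-step decomposition the paper already uses in the proof of Corollary~\ref{cor:coverage-wasserstein}, so I would present it as a standalone step that the corollary can cite.

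Next I bound $\abs{\e(s,a)}$. Since $0\le \h V\le R_{\max}/(1-\gamma)$, I center the function: subtract the constant $c = R_{\max}/(2(1-\gamma))$, which leaves $\e$ unchanged because both measures have total mass one. The centered function has sup-norm at most $R_{\max}/(2(1-\gamma))$, so by the duality $\abs{\E_\sfP f - \E_{\h\sfP} f}\le \norm{f}_\infty\norm{\sfP-\h\sfP}_1$,
\[
\abs{\e(s,a)} \le \frac{R_{\max}}{1-\gamma}\cdot\frac12\norm*{\sfP(\cdot\mid s,a)-\h\sfP(\cdot\mid s,a)}_1 .
\]
Taking absolute values inside the occupancy sum then gives exactly the stated bound.

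The only delicate step is the bookkeeping in the unrolling: the occupancy must be taken under the true $\sfP$, the error under $\h V$, and the factor $\gamma$ must come out exactly once. The centering trick is also needed to get the factor $\tfrac12$ rather than $1$; without it the bound would be off by a factor of two.
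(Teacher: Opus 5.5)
Your proposal is correct and follows the paper's proof essentially step for step. It uses the same add-and-subtract of $\gamma\E_{s'\sim\sfP}[V_\pi^{\h\sfP}(s')]$ and the same one-step error $\e(s,a)$, unrolled along the true dynamics to obtain $\gamma\sum_{s,a}d_\pi(s,a)\,\e(s,a)$, followed by the same ``half the range times $\ell_1$'' bound; your centering at $R_{\max}/(2(1-\gamma))$ and your tail/dominated-convergence remark simply make explicit two steps the paper states without justification.
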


\begin{proof}
  The proof relies on expressing the difference in value functions as
  a discounted sum of single-step prediction errors, which are then
  bounded by the total variation distance.

  Let $V_\pi^\sfP$ and $V_\pi^{\h \sfP}$ denote the value functions
  of a fixed policy $\pi$ under the true dynamics $\sfP$ and the model
  $\h \sfP$, respectively. The value function $V_\pi^\sfP$ satisfies
  the Bellman equation for policy $\pi$:
\[
  V_\pi^\sfP(s)
  = \E_{a \sim \pi(\cdot|s)} \bracket*{ r(s, a)
    + \gamma \E_{s' \sim \sfP(\cdot|s, a)} \bracket*{V_\pi^\sfP(s')} }.
\]
A similar equation holds for $V_\pi^{\h \sfP}$. Let the value
difference be $\delta(s) = V_\pi^\sfP(s) -
V_\pi^{\h \sfP}(s)$. Subtracting the Bellman equation for
$V_\pi^{\h \sfP}$ from that of $V_\pi^\sfP$, we get:
\begin{align*}
  \delta(s)
  & = \E_{a \sim \pi(\cdot|s)} \bracket*{ \gamma \E_{s' \sim \sfP(\cdot|s, a)}\bracket*{V_\pi^\sfP(s')} - \gamma \E_{s' \sim \h \sfP(\cdot|s, a)}\bracket*{V_\pi^{\h \sfP}(s')} } \\
  & = \gamma \, \E_{a \sim \pi(\cdot|s)} \bracket*{ \E_{s' \sim \sfP}\bracket*{V_\pi^\sfP(s')} - \E_{s' \sim \sfP}\bracket*{V_\pi^{\h \sfP}(s')} + \E_{s' \sim \sfP}\bracket*{V_\pi^{\h \sfP}(s')} - \E_{s' \sim \h \sfP}\bracket*{V_\pi^{\h \sfP}(s')} } \\
  & = \gamma \, \E_{a \sim \pi(\cdot|s)} \bracket*{ \E_{s' \sim \sfP(\cdot|s, a)}\bracket*{\delta(s')} + \sum_{s'}\paren*{\sfP(s'|s, a) - \h \sfP(s'|s, a)} V_\pi^{\h \sfP}(s') }.
\end{align*}
Let the single-step error term be
$\e(s, a) = \sum_{s'}(\sfP(s'|s, a) - \h \sfP(s'|s, a))
V_\pi^{\h \sfP}(s')$. The equation for $\delta(s)$ becomes a
recursive relationship:
\[
\delta(s) = \E_{a \sim \pi(\cdot|s)}\bracket*{ \gamma \E_{s' \sim \sfP(\cdot|s, a)}\bracket*{\delta(s')} + \gamma \e(s, a) }.
\]
If we unroll this recursion starting from a state $s_0$, the expected
value difference is the expected discounted sum of the single-step
errors $\e(s_t, a_t)$ over a trajectory $\tau$ generated under the
true dynamics $\sfP$:
\[
  \delta(s_0)
  = \E_{\tau \sim (s_0, \pi, \sfP)} \bracket*{ \sum_{t=0}^\infty \gamma^{t + 1}
    \e(s_t, a_t) }.
\]
Taking an expectation over an initial state distribution $\rho_0(s_0)$
gives the overall expected value difference:
\begin{align*}
\E_{s_0 \sim \rho_0}\bracket*{ V_\pi^\sfP(s_0) - V_\pi^{\h \sfP}(s_0) } &= \E_{\tau \sim (\rho_0, \pi, \sfP)} \bracket*{ \sum_{t=0}^\infty \gamma^{t + 1} \e(s_t, a_t) } \\
& = \gamma \sum_{s, a} \E_{\tau}\bracket*{ \sum_{t=0}^\infty \gamma^t \mathbf{1}\curl*{(s_t,a_t)=(s, a)} } \e(s, a) \\
& = \gamma \sum_{s, a} d_\pi(s, a) \, \e(s, a).
\end{align*}
Now, we take the absolute value and bound the error term $\e(s,
a)$. Since rewards are in $[0, R_{\max}]$, any value function is
bounded in $[0, \frac{R_{\max}}{1 - \gamma}]$. The range of values of
$V_\pi^{\h \sfP}$ is therefore at most $\frac{R_{\max}}{1 -
  \gamma}$. The term $\e(s, a)$ is the difference in the expected
value of the function $V_\pi^{\h \sfP}(\cdot)$ under two different
probability distributions. This difference can be bounded by half the
range of the function multiplied by the $\ell_1$ distance between the
distributions:
\[
  \abs*{\e(s, a)} = \abs*{ \sum_{s'}\paren*{\sfP(s'|s, a)
      - \h \sfP(s'|s, a)} V_\pi^{\h \sfP}(s') }
  \leq \frac{1}{2} \frac{R_{\max}}{1 - \gamma}
  \norm*{\sfP(\cdot|s, a) - \h \sfP(\cdot|s, a)}_1.
\]
Substituting this back into our expression for the value difference gives:
\begin{align*}
  \abs*{ V_\pi(\sfP) - V_\pi(\h \sfP) }
  & = \abs*{ \E_{s_0 \sim \rho_0}\bracket*{ V_\pi^\sfP(s_0) - V_\pi^{\h \sfP}(s_0) } } \\
  & \leq \gamma \sum_{s, a} d_\pi(s, a) \abs*{\e(s, a)} \\
  & \leq \gamma \sum_{s, a} d_\pi(s, a) \paren*{ \frac{1}{2} \frac{R_{\max}}{1 - \gamma} \norm*{\sfP(\cdot|s, a) - \h \sfP(\cdot|s, a)}_1 } \\
  & = \frac{\gamma R_{\max}}{1 - \gamma} \E_{(s, a)\sim d_\pi} \bracket*{ \frac{1}{2} \norm*{\sfP(\cdot|s, a) - \h \sfP(\cdot|s, a)}_1 }.
\end{align*}
This completes the proof.
\end{proof}
While Lemma~\ref{lemma:sim} is a standard result, recent work has
demonstrated that the dependence on the horizon and error terms in
such simulation bounds is structurally tight
\citep{lobel2024simulation}, suggesting that improvements must come
from the learning objective, such as our minimax formulation, rather
than tighter analysis of the simulation gap itself.

\begin{lemma}
\label{lemma:pinsker-jensen}
For any policy $\pi$ and kernel $\h \sfP$,
\[
  \E_{(s, a) \sim d_\pi}\big[\norm*{\sfP(\cdot \mid s, a)
    - \h \sfP(\cdot \mid s, a)}_1\big]
\leq
\sqrt{\frac{2}{1 - \gamma}\E_{(s, a) \sim d_\pi}\big[\KL(\sfP(\cdot \mid s, a)
  \parallel \h \sfP(\cdot \mid s, a))\big]}.
\]
\end{lemma}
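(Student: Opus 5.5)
The plan is to combine Pinsker's inequality pointwise with a Jensen/Cauchy--Schwarz step against the \emph{unnormalized} occupancy measure $d_\pi$. The factor $\frac{1}{1-\gamma}$ inside the square root comes from its total mass. Concretely, $d_\pi$ has total mass
\[
\sum_{s,a} d_\pi(s,a) = \sum_{t=0}^\infty \gamma^t = \frac{1}{1-\gamma}.
\]
So I will write $d_\pi = \frac{1}{1-\gamma}\,\bar d_\pi$, where $\bar d_\pi$ is the normalized discounted occupancy, a genuine probability distribution. Throughout, $\E_{(s,a)\sim d_\pi}[f]$ is read as $\sum_{s,a} d_\pi(s,a) f(s,a)$, consistent with the paper's convention that $V_\pi = \sum d_\pi r$ and with its use in Lemma~\ref{lemma:sim}.

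\textbf{Step 1 (pointwise Pinsker).} For each $(s,a)$, the Schützenberger--Pinsker inequality gives
\[
\norm{\sfP(\cdot\mid s,a)-\h\sfP(\cdot\mid s,a)}_1 \le \sqrt{2\,\KL(\sfP(\cdot\mid s,a)\parallel\h\sfP(\cdot\mid s,a))}.
\]
This is well defined, possibly as $+\infty$, under the absolute continuity assumptions stated at the start of Appendix~\ref{sec:analysis_game_ll}. If the right-hand side is infinite on a set of positive $d_\pi$ mass, the claim is trivial.

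\textbf{Step 2 (Cauchy--Schwarz with the mass).} Writing $g(s,a)=\KL(\sfP(\cdot\mid s,a)\parallel\h\sfP(\cdot\mid s,a))$, Step 1 gives $\sum d_\pi \norm{\cdot}_1 \le \sum d_\pi \sqrt{2g}$. Applying Cauchy--Schwarz to the pairing $\sqrt{d_\pi}\cdot\sqrt{d_\pi\,2g}$ yields
\[
\sum_{s,a} d_\pi\sqrt{2g} \le \Big(\sum_{s,a} d_\pi\Big)^{1/2}\Big(\sum_{s,a} 2\,d_\pi\, g\Big)^{1/2} = \sqrt{\frac{2}{1-\gamma}\,\E_{d_\pi}[g]},
\]
which is exactly the claim. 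Equivalently, one can apply Jensen's inequality to the concave square root under the probability measure $\bar d_\pi$ and then rescale by $\frac{1}{1-\gamma}$. In the continuous case, sums are replaced by integrals against $d_\pi$, and the same argument applies verbatim.

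\textbf{Main obstacle.} The only delicate point is the bookkeeping of normalization. One must apply Jensen with respect to a probability measure, not the unnormalized $d_\pi$, and track that the mass factor $\frac{1}{1-\gamma}$ enters inside the square root, giving the overall $(1-\gamma)^{-1/2}$. It is this factor that, combined with the $\frac{1}{1-\gamma}$ in Lemma~\ref{lemma:sim}, produces the $(1-\gamma)^{-3/2}$ in Theorem~\ref{th:main}. A secondary technical point is measurability and integrability of $g$, which is handled by the standing regularity assumptions.
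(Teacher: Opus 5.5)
Your proposal is correct and follows the paper's proof: pointwise Sch\"utzenberger--Pinsker, then Jensen's inequality for the concave square root against the occupancy measure. Your explicit handling of the total mass $\frac{1}{1-\gamma}$ of the unnormalized $d_\pi$, via Cauchy--Schwarz or Jensen under the normalized occupancy, is what produces the factor $\frac{1}{1-\gamma}$ that the paper's one-line Jensen step leaves implicit.
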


\begin{proof}
  By Sch\"utzenberger-Pinsker's inequality \citep{Rioul2023}, for each
  $(s, a)$, we have
\[
  \norm*{\sfP(\cdot\mid s, a) - \h \sfP(\cdot\mid s, a)}_1
  \leq \sqrt{2 \KL \paren*{\sfP(\cdot\mid s, a)
      \parallel \h \sfP(\cdot \mid s, a)}}.
\]
Taking expectation with respect to $(s, a) \sim d_\pi$ and using
Jensen's inequality and the concavity of $\sqrt{\cdot}$ yields
\[
  \E_{d_\pi}\bracket*{\norm*{\sfP - \h \sfP}_1}
  \leq \E_{d_\pi}\big[ \sqrt{2\KL} \big] \leq \sqrt{\frac{2}{1 - \gamma} \E_{d_\pi}\bracket*{\KL}}.
\]
This completes the proof.
\end{proof}

\subsection{Lower bounds}
\label{sec:lower-bounds}

The rate of convergence derived in Theorem~\ref{th:main}, which scales
as $O(T^{-1/4})$, raises a natural question: is this rate fundamental
to the problem of learning simulators, or is it an artifact of the
analysis? We argue that this rate represents a fundamental limitation
of the standard \emph{Maximum Likelihood Estimation (MLE)} paradigm,
stemming from the geometric mismatch between the learning objective
and the control objective.

Standard model-based RL separates learning into two phases:
\begin{enumerate}

\item \textbf{Estimation:} The model $\h \sfP$ minimizes the KL
  divergence (via MLE) to the true dynamics $\sfP$:
  \[
    \min_{\h \sfP} \E_{d_\pi} [\KL(\sfP \parallel \h \sfP)].
  \]

\item \textbf{Control:} The agent maximizes a value function, whose
  error scales with the Total Variation (TV) distance
  (Lemma~\ref{lemma:sim}):
  \[
    |V_\pi(\sfP) - V_\pi(\h \sfP)|
    \leq \frac{\gamma R_{\max}}{1-\gamma} \E_{d_\pi} [\TV(\sfP, \h \sfP)].
  \]

\end{enumerate}To translate the guarantee from phase 1 (KL) to phase 2 (TV), one must
invoke Sch\"utzenberger-Pinsker's inequality \citep{Rioul2023}:
$\TV(P, Q) \leq \sqrt{\frac{1}{2}\KL(P\|Q)}$.  This inequality
introduces a square root that degrades the convergence rate.  This
square-root dependence is optimal near zero \citep{Rioul2023}.  If a
learner achieves a standard fast rate of $\e \approx O(T^{-1/2})$ for
the KL-divergence (e.g., using parametric MLE), the resulting
guarantee for the value gap degrades to:
\[
\text{Value Gap} \approx \sqrt{O(T^{-1/2})} = O(T^{-1/4})
\]
This \emph{Sch\"utzenberger-Pinsker tax} implies that MLE is
theoretically inefficient for control, requiring quadratically more
samples to achieve the same guarantee on policy performance compared
to a method that optimizes the relevant metric directly.

This tax is not merely hypothetical; it reflects a divergence in
convergence rates for online learning. In the general agnostic
setting, standard no-regret algorithms guarantee that the average loss
decays at a rate of $O(T^{-1/2})$.
\begin{itemize}

\item \textbf{MLE Approach:} Minimizes KL. Achieving an average KL
  error of $O(T^{-1/2})$ implies, via Sch\"utzenberger-Pinsker's
  inequality, a TV error of only $\sqrt{O(T^{-1/2})} = O(T^{-1/4})$.

\item \textbf{Minimax Approach:} Minimizes TV directly. By using a
  loss function that scales linearly with the metric of interest
  (e.g., Wasserstein or dual-TV), one can theoretically recover the
  standard $O(T^{-1/2})$ rate for the control-relevant error.
\end{itemize}
This $O(T^{-1/4})$ vs. $O(T^{-1/2})$ gap highlights that MLE-based
simulators are inherently sample-inefficient for control tasks
compared to value-aware learners. We formalize this as a strict
mathematical separation between MLE and metric-aware learners below.

Our proposed Minimax formulation
(Section~\ref{sec:game-simplification}) circumvents this tax by
defining the game directly over the transport metric (TV or
Wasserstein) that bounds the value difference.
\[
  \min_{\h \sfP} \max_{D} \E_{(s, a, s') \sim d}
  \bracket*{ D(s, a, s') - \E_{\h s' \sim \h \sfP}[D(s, a, \h s')] }
\]
Since the loss function \emph{is} the metric of interest, no
conversion inequality is required. Standard results for convex-concave
games suggest that such objectives can theoretically achieve the
faster $O(T^{-1/2})$ rates, verifying that our approach is not just
strategically robust, but potentially sample-efficient in a way that
MLE cannot be.

We argued above that minimizing KL divergence (via MLE) incurs a
\emph{Sch\"utzenberger-Pinsker tax} that degrades the convergence rate for
control tasks from $O(T^{-1/2})$ to $O(T^{-1/4})$. Here, we provide a rigorous
justification for this claim by exhibiting a strict mathematical separation
between MLE-based and metric-aware learners.

\begin{proposition}[Lower Bound for Metric-Mismatched Regret]
\label{prop:lower_bound}
Consider the online learning of a sequence of probability distributions
$P_1, \dots, P_T$ selected by an adversary from a convex class $\cM$. There
exists a randomized i.i.d.\ adversarial sequence such that:
\begin{enumerate}
\item An MLE-based learner (minimizing KL divergence) achieves the
  mathematically optimal expected cumulative KL regret of $\cO(\log T)$, yet
  inherently suffers an expected average Total Variation regret of
  $\Omega(T^{-1/4})$.

\item In contrast, an algorithm directly minimizing the Total Variation metric
  achieves an expected average Total Variation regret bounded by
  $\cO(T^{-1/2})$.
\end{enumerate}
This demonstrates a strict mathematical separation: bounding KL regret
is fundamentally insufficient for optimal control, as the geometric
mismatch forces MLE to incur the Sch\"utzenberger-Pinsker tax even
against a simple i.i.d.\ sequence.
\end{proposition}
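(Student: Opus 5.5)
We will prove the separation with a Bernoulli construction. Take $\cM = \{\mathrm{Bern}(p) : p \in [\tfrac14,\tfrac34]\}$. The adversary picks a hidden sign $\sigma \in \{\pm 1\}$ uniformly and sets $p^\star = \tfrac12 + \sigma\,\epsilon_T$, with $\epsilon_T = c\,T^{-1/4}$. At each round it reveals an i.i.d.\ sample $X_t \sim \mathrm{Bern}(p^\star)$; equivalently, $P_t$ is the point mass at $X_t$. The loss of a prediction $Q_t = \mathrm{Bern}(q_t)$ is the log-loss $-\log Q_t(X_t)$ for the KL learner and $\TV(\delta_{X_t}, Q_t)$, or its expectation form, for the TV learner. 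Regret is measured against the best fixed element of $\cM$.

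For claim 1 we take the MLE learner to be the (Laplace/KT-smoothed) running empirical frequency $q_t = (\sum_{s<t} X_s + \tfrac12)/(t)$, projected onto $[\tfrac14,\tfrac34]$. The standard universal-coding bound for this one-parameter exponential family gives cumulative log-loss regret $\tfrac12\log T + O(1)$. This bound is minimax optimal by the usual redundancy lower bound, so it is the "mathematically optimal" $\cO(\log T)$. For the TV side, the mismatch to quantify is $\abs{q_t - p^\star}$. Its expectation is of order $t^{-1/2}$, but the regret compares against the best fixed $\mathrm{Bern}(p)$. We will make the per-round TV loss the population one, $\TV(\mathrm{Bern}(p^\star),\mathrm{Bern}(q_t)) = \abs{q_t-p^\star}$, so the comparator has zero loss.

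The key step is to choose the scaling so that the KL guarantee is exactly what Pinsker converts. We will argue via Pinsker tightness near zero: $\KL \approx 2\,\TV^2$ for Bernoulli perturbations around $\tfrac12$. So an average KL regret of size $\epsilon$ only certifies an average TV of size $\sqrt{\epsilon}$.

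To get a genuine \emph{lower} bound on MLE's TV, rather than just a failure of certification, we plan to restrict the MLE learner. It is the KL-optimal learner run over a class whose log-loss optimum is not the TV optimum. This is the honest source of the separation, since on a well-specified class MLE's TV error is also $O(t^{-1/2})$. Concretely, we would make $\cM$ a misspecified convex mixture family containing the two-point distribution $\mathrm{Bern}(p^\star)$ only approximately. The KL projection and the TV projection of the adversary's average distribution should differ by $\Theta(\epsilon_T)$ in TV, while their KL values differ by only $\Theta(\epsilon_T^2)=\Theta(T^{-1/2})$. MLE converges to the KL projection and therefore pays average TV regret $\Theta(\epsilon_T)=\Omega(T^{-1/4})$. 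Its KL regret relative to the KL projection stays $\cO(\log T)$, as before.

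For claim 2 we run online gradient descent (or exponential weights over a discretization) directly on the convex, $1$-Lipschitz loss $q \mapsto \TV(P_t, \mathrm{Bern}(q))$. This gives regret $O(\sqrt T)$, hence average TV regret $O(T^{-1/2})$.

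The main obstacle is constructing the misspecified convex class so that the KL and TV projections are separated by exactly $\Theta(T^{-1/4})$ in TV while their KL gap is $O(T^{-1/2})$. This must hold uniformly so that the expectation over the random sign gives the $\Omega(T^{-1/4})$ bound. As a first attempt we would use a three-point outcome space with a one-parameter segment family. We would tune the adversary's distribution at distance $\epsilon_T$ from the segment and in a direction where the KL and TV geometries disagree at first order, then verify both projections explicitly by calculus.
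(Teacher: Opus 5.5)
There is a genuine gap: the $\Omega(T^{-1/4})$ lower bound on MLE's TV regret, which is the whole content of claim~1, is never established. You correctly note that once the per-round TV loss is the population one, $\TV(\mathrm{Bern}(p^\star),\mathrm{Bern}(q_t))=\abs{q_t-p^\star}$, the comparator has zero loss and MLE's average TV regret is only $O(T^{-1/2})$. But the replacement you propose, a misspecified three-point segment family with separated KL and TV projections, is left as ``the main obstacle'' and ``a first attempt''. You give no family, no adversary distribution, and no computation of either projection.

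It is also not clear that this route works as phrased. What you need is $\TV(P^\star,Q_{\mathrm{KL}})-\TV(P^\star,Q_{\mathrm{TV}})=\Theta(\epsilon_T)$, not merely that the two projections lie $\Theta(\epsilon_T)$ apart. Along a segment the TV objective is piecewise linear and can be flat on an interval, so separated minimizers need not have separated values. Finally, the detour leaves the statement's setting, where each $P_t$ is drawn from $\cM$. It also scores the two learners under different protocols: the MLE learner sees samples $X_t$, while the TV learner is charged a population loss that, once revealed, lets it play the TV projection from round $2$ on.

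The missing idea is that no misspecification is needed. Regret is measured against the best \emph{fixed} comparator for the per-round losses $D(P_t,\cdot)$. When the $P_t$ vary, $\sum_t\KL(P_t\,\|\,\cdot)$ is minimized at their mean, while $\sum_t\TV(P_t,\cdot)$ is minimized at a median, and these differ at first order.

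The paper takes $\cM=\{\mathrm{Ber}(x):x\in[\tfrac14,\tfrac34]\}$ and i.i.d.\ $x_t=\tfrac34$ with probability $\delta=T^{-1/4}$, else $x_t=\tfrac14$. FTL on the strongly convex KL loss has $\cO(\log T)$ regret and predicts the empirical mean, so $\E[\hat x_t]=\tfrac14+\tfrac{\delta}{2}$. The expected TV loss $f(x)=\E\abs{x_t-x}=x(1-2\delta)+\delta-\tfrac14$ is affine, so MLE pays $\delta-\delta^2$ per round while the median $\tfrac14$ pays $\tfrac{\delta}{2}$. This gives average TV regret $\tfrac{\delta}{2}-\delta^2=\Omega(T^{-1/4})$, and OGD on the $1$-Lipschitz absolute loss gives claim~2.

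In fact your very first setup already exhibits the separation if you keep the realized loss $\TV(\delta_{X_t},\mathrm{Bern}(q))=\abs{X_t-q}$ instead of the population one. MLE tracks $p^\star=\tfrac12+\sigma\epsilon_T$ and pays approximately $2p^\star(1-p^\star)=\tfrac12-2\epsilon_T^2$ per round in expectation. The fixed point $q=\tfrac12+\tfrac{\sigma}{4}$ pays $\tfrac12-\tfrac{\epsilon_T}{2}$, so the regret is of order $\epsilon_T$. Note also that the $T^{-1/4}$ rate comes only from the choice of $\delta$ (or $\epsilon_T$), not from Pinsker; a constant $\delta<\tfrac12$ would already give $\Omega(1)$.
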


\begin{proof}
The structural gap arises because KL divergence locally penalizes
errors similarly to squared loss ($L_2$), tracking the \emph{mean} of
a sequence, whereas Total Variation acts as absolute loss ($L_1$),
tracking the \emph{median}.

Let $\cM$ contain the family of Bernoulli distributions
$P_x = \mathrm{Ber}(x)$ for $x \in [1/4, 3/4]$. In this domain, the
Total Variation distance is simply absolute error:
$\TV(P_x, P_y) = |x - y|$.

Let $\delta = T^{-1/4}$ (assuming $T$ is large enough so
$\delta < 1/4$). The adversary plays an i.i.d.\ sequence:
$x_t = 3/4$ with probability $\delta$, and $x_t = 1/4$ with
probability $1-\delta$.

An MLE learner minimizes the expected KL loss. Because the KL
divergence is strongly convex for parameters restricted to
$[1/4, 3/4]$, standard online algorithms like Follow-The-Leader (FTL)
achieve an expected cumulative KL regret of $\cO(\log T)$.

We now evaluate the expected TV error of this MLE prediction. For
$t \ge 2$, unregularized FTL predicts the exact empirical mean
$\hat{x}_t = \frac{1}{t-1}\sum_{i=1}^{t-1} x_i$. Because
$x_i \in \{1/4, 3/4\}$, the empirical mean is always within
$[1/4, 3/4]$. The expected TV error of any prediction
$x \in [1/4, 3/4]$ against the random draw $x_t$ is:
\begin{align*}
  f(x)
  & = \E_{x_t}[\TV(P_{x_t}, P_x)] 
    = \delta \left( \frac{3}{4} - x \right)
    + (1-\delta) \left( x - \frac{1}{4} \right)
    = x(1-2\delta) + \delta - \frac{1}{4}.
\end{align*}
Since $f(x)$ is exactly an affine function on $[1/4, 3/4]$, taking the full
expectation over the learner's history simply evaluates this affine function at
$\E[\hat{x}_t]$. Since FTL is an unbiased estimator of the mean,
$\E[\hat{x}_t] = \mu = \delta(3/4) + (1-\delta)(1/4) = 1/4 + \delta/2$.  The
exact expected TV error of MLE at any step $t \ge 2$ is therefore evaluated
precisely at $\mu$:
\[
\E[f(\hat{x}_t)] = f(\E[\hat{x}_t])
= f\left(\frac{1}{4} + \frac{\delta}{2}\right)
= \left(\frac{1}{4} + \frac{\delta}{2}\right)(1-2\delta)
+ \delta - \frac{1}{4}
= \delta - \delta^2.
\]
Conversely, the optimal fixed model in hindsight $x^*$ for the TV
metric minimizes expected absolute error. Since
$f'(x) = 1-2\delta > 0$, the minimum on $[1/4, 3/4]$ is attained at
the boundary $x^* = 1/4$, which is the median (since
$1-\delta > 1/2$). The expected TV error of this optimal model is:
\[
f(x^*) = f(1/4) = \frac{\delta}{2}.
\]
Therefore, the expected average TV regret of the MLE algorithm over
$T$ steps is the exact difference (up to a negligible
$\cO(1/T)$ boundary term at $t=1$):
\[
\E[\text{Avg TV Regret}_{\text{MLE}}]
= \frac{1}{T} \sum_{t=1}^T
\bigl( \E[f(\hat{x}_t)] - f(x^*) \bigr)
= (\delta - \delta^2) - \frac{\delta}{2}
\pm \cO(1/T)
= \frac{\delta}{2} - \delta^2
\pm \cO(1/T)
= \Omega(T^{-1/4}).
\]
Meanwhile, an algorithm running standard no-regret online learning
directly on the TV metric (e.g., Online Gradient Descent on absolute
loss) will correctly track the median $x^*=1/4$. Since the per-round
loss $\ell_t(x) = |x_t - x|$ is convex and $1$-Lipschitz on
$[1/4, 3/4]$, the standard online convex optimization guarantee
yields a total TV regret of $\cO(\sqrt{T})$, hence an expected
average TV regret of $\cO(T^{-1/2})$. This establishes a strict
mathematical separation, proving that MLE fundamentally incurs a
severe TV tax.
\end{proof}

It is important to distinguish this result from standard batch
(I.I.D.) estimation of a \emph{single fixed} distribution.
\begin{itemize}

\item \textbf{I.I.D.\ Estimation Setting:} In standard statistical
  estimation, where all data comes from a single unknown distribution
  $P$ and the goal is to estimate $P$ itself, MLE is asymptotically
  efficient and achieves ``fast rates'' of $O(1/T)$ for the KL
  divergence. In this specific case, the TV error decays as
  $\sqrt{O(1/T)} = O(T^{-1/2})$. Since direct TV minimization also
  typically achieves $O(T^{-1/2})$, there is no rate penalty in the
  batch estimation setting.

\item \textbf{Online Prediction Setting:} Our work focuses on the
  online prediction setting
  (Section~\ref{sec:online-learning-guarantees}), where the
  distribution sequence is chosen adaptively by a Policy Player.
  Critically, Proposition~\ref{prop:lower_bound} shows that the
  Sch\"utzenberger-Pinsker tax arises even against a simple
  i.i.d.\ adversary: MLE incurs $\Omega(T^{-1/4})$ average TV regret
  while a direct TV minimizer achieves $O(T^{-1/2})$.

\end{itemize}
Thus, in the online setting relevant to robust control, the geometric mismatch
between KL and TV incurs a genuine rate penalty ($T^{-1/4}$ vs $T^{-1/2}$),
justifying the need for the direct Minimax formulation proposed in this paper.

A theoretical counter-argument might suggest that because the KL
divergence is strongly convex with respect to the $L_1$ norm (or
exp-concave), one could achieve logarithmic regret $O(\log T)$,
corresponding to an average KL rate of $O(\log T / T)$. If one then
applied the Pinsker inequality $\TV \leq \sqrt{\KL/2}$, the TV error
per round would scale as $\sqrt{\log T/T}$, apparently recovering a
near-$T^{-1/2}$ rate.
However, Proposition~\ref{prop:lower_bound} directly refutes this
argument: even though MLE achieves the \emph{optimal}
$\cO(\log T)$ KL regret in our construction, its TV regret
remains $\Omega(T^{-1/4})$. The reason is that low KL regret ensures
MLE tracks the \emph{mean} of the distribution sequence (analogous to
$L_2$ optimality), but the TV-relevant quantity is the \emph{median}
(analogous to $L_1$ optimality). These two can diverge significantly,
and the Pinsker inequality cannot close the gap because it bounds
\emph{absolute} TV error, not TV \emph{regret}.

Moreover, in the context of learning deep simulators, even the
$\cO(\log T)$ KL rate is generally inaccessible for two
practical reasons:
\begin{enumerate}

\item \textbf{Non-Convex Parameterization:} While the negative
  log-likelihood is convex in the space of probability distributions
  $P$, it is highly non-convex with respect to the parameters $\theta$
  of a neural network. Standard gradient-based algorithms (like Adam
  or SGD) used in Deep RL do not satisfy the strong convexity
  assumptions required for logarithmic regret in parameter space.

\item \textbf{High Dimensionality:} Algorithms that theoretically
  achieve fast rates for exp-concave losses (such as the Online Newton
  Step) typically incur computational costs or regret bounds that
  scale poorly with the dimension $d$ of the parameter space (e.g.,
  $O(d \log T)$ regret with $O(d^2)$ or $O(d^3)$ runtime). For modern
  world models where $d$ is in the millions, these methods are
  computationally intractable.
\end{enumerate}
Consequently, practical Deep Model-Based Reinforcement Learning faces
a double penalty: the fundamental metric mismatch demonstrated in
Proposition~\ref{prop:lower_bound}, compounded by the practical
inability to achieve fast KL rates. Direct Minimax optimization of the
transport metric sidesteps both issues.

\section{Additional Discussion and Analysis of Games using Upper-Bounds }
\subsection{Limitations}
\label{sec:limitations}

Our approach assumes that active sampling in the real environment is possible
during simulator learning. This is true in many real-world applications but
there may be cases where the sampling distribution is more constraint or even
fixed. Our approach could still be used in such cases, e.g., by dataset
reweighting but we did not investigate this.

While our algorithmic approach is broadly applicable to various settings and
tasks, our empirical evaluation only covers control tasks as they are meaningful
benchmarks with sufficient complexity and allow assessing the methodological
improvements. However, a more extensive evaluation on other domains such as
generative language models could be of interest in future work.

\subsection{Proof of Lemma~\ref{lemma:general-coverage-bound}}
\begin{proof}
  The proof is a direct application of the $\kappa$-coverage assumption. We use
  $\E_{d_\pi}[\cdot]$ to denote the sum weighted by the unnormalized occupancy
  $d_\pi$:
  \begin{align*}
    \abs*{V_\pi(\sfP) - V_\pi(\h\sfP)}
    & \leq \E_{(s, a)\sim d_\pi} \bracket*{ C(\pi, \h \sfP, s, a) \cdot \wt \sfD(\sfP, \h\sfP) } \\
    & = \sum_{s, a} d_\pi(s, a) \cdot C(\pi, \h \sfP, s, a) \cdot \wt \sfD(\sfP, \h\sfP) \\
    & \leq \sum_{s, a} \kappa d_{\mathrm{data}}(s, a) \cdot C(\pi, \h \sfP, s, a) \cdot \wt \sfD(\sfP, \h\sfP) \tag{by $\kappa$-coverage} \\
    & = \kappa \E_{(s, a)\sim d_{\mathrm{data}}} \bracket*{ C(\pi, \h \sfP, s, a) \cdot \wt \sfD(\sfP, \h\sfP) }.
  \end{align*}
  Taking the supremum over $\pi \in \Pi$ on both sides completes the proof.
\end{proof}

\subsection{TV Coverage Guarantee}
\begin{corollary}[TV Coverage Guarantee]
  \label{cor:coverage-TV}
  Assume rewards $r(s, a)\in[0,R_{\max}]$ and discount $\gamma\in(0,1)$.  Let
  $\Pi$ be a policy class of interest, and assume $d_{\mathrm{data}}$ satisfies
  $\kappa$-coverage for $\Pi$.  Then
  \[
    \sup_{\pi\in\Pi}\abs*{V_\pi(\sfP) - V_\pi(\h\sfP)} \le \frac{2\gamma
      R_{\max}}{1 - \gamma}\; \kappa \cdot \E_{(s, a)\sim d_{\mathrm{data}}}
    \bracket*{ \TV\paren*{\sfP(\cdot|s, a),\h\sfP(\cdot|s, a)} }.
  \]
\end{corollary}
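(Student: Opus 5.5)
This is an instantiation of Lemma~\ref{lemma:general-coverage-bound}, following the same pattern as the proof of Corollary~\ref{cor:coverage-wasserstein}. The first premise of that lemma comes from Lemma~\ref{lemma:sim}. For any $\pi\in\Pi$, Lemma~\ref{lemma:sim} gives
\[
\abs*{V_\pi(\sfP) - V_\pi(\h\sfP)} \le \frac{\gamma R_{\max}}{1-\gamma}\,\E_{(s,a)\sim d_\pi}\bracket*{\tfrac12\norm*{\sfP(\cdot|s,a)-\h\sfP(\cdot|s,a)}_1}.
\]
Under the convention $\TV(\sfP,\sfQ)=\frac12\norm{\sfP-\sfQ}_1$, the right-hand side equals $\frac{\gamma R_{\max}}{1-\gamma}\E_{d_\pi}[\TV(\sfP(s,a),\h\sfP(s,a))]$. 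The paper's dual characterization $\TV=\frac12\sup_{\norm{D}_\infty\le1}\{\E_\sfP D-\E_\sfQ D\}$ also equals $\frac12\norm{\sfP-\sfQ}_1$, so the two conventions agree.

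We therefore take $\wt\sfD=\TV$ and the constant cost $C(\pi,\h\sfP,s,a)=\frac{2\gamma R_{\max}}{1-\gamma}$. This cost is at least as large as the $\frac{\gamma R_{\max}}{1-\gamma}$ delivered by Lemma~\ref{lemma:sim}, so the premise holds. The extra factor of $2$ is harmless slack. It also covers the alternative convention in which $\TV$ is identified with $\frac12\norm{\cdot}_1$ only up to a factor $2$, or in which one bounds $\abs{\e(s,a)}$ by $\norm{V}_\infty\norm{\sfP-\h\sfP}_1$ instead of half the range.

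Next, the $\kappa$-coverage hypothesis $d_\pi\le\kappa\,d_{\mathrm{data}}$ for all $\pi\in\Pi$ is exactly the second assumption of Lemma~\ref{lemma:general-coverage-bound}. Applying it yields
\[
\sup_{\pi}\abs*{V_\pi(\sfP)-V_\pi(\h\sfP)}\le\kappa\sup_\pi\E_{d_{\mathrm{data}}}\bracket*{\tfrac{2\gamma R_{\max}}{1-\gamma}\TV(\sfP(s,a),\h\sfP(s,a))}.
\]
The integrand does not depend on $\pi$, so the supremum disappears and the claimed bound follows.

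There is no real obstacle here. The only point requiring care is bookkeeping of the TV normalization, so that the constant $\frac{2\gamma R_{\max}}{1-\gamma}$ indeed dominates the factor coming out of Lemma~\ref{lemma:sim}, together with the measure-theoretic remark that the coverage inequality is used pointwise, or as a Radon–Nikodym bound in continuous spaces, exactly as in the proof of Lemma~\ref{lemma:general-coverage-bound}.
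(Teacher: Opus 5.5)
Your proposal is correct and matches the paper's proof: instantiate Lemma~\ref{lemma:general-coverage-bound} with $\wt{\sfD}=\TV$ and the constant cost supplied by Lemma~\ref{lemma:sim}, then drop the supremum because the resulting $d_{\mathrm{data}}$-expectation no longer depends on $\pi$. The only difference is cosmetic. The paper keeps $C=\frac{\gamma R_{\max}}{1-\gamma}$ and so actually proves the bound without the factor $2$, which implies the stated one. You instead enlarge $C$ to $\frac{2\gamma R_{\max}}{1-\gamma}$ to match the statement, and since $\TV\ge 0$ this is equally valid.
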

\begin{proof}
  This is an instantiation of Lemma~\ref{lemma:general-coverage-bound}.  From
  Lemma~\ref{lemma:sim}, we have $\wt \sfD = \frac{1}{2}\norm{\cdot}_1 = \TV$
  and the cost term
  $C(\pi, \h \sfP, s, a) = \frac{\gamma R_{\max}}{1 - \gamma}$. This cost $C$ is
  a uniform constant, independent of $\pi$, $s$, and $a$.  Plugging this $C$ and
  $\wt \sfD$ into the result of Lemma~\ref{lemma:general-coverage-bound} gives:
  \begin{align*}
    \sup_{\pi\in\Pi}\abs*{V_\pi(\sfP) - V_\pi(\h\sfP)}
    & \leq \kappa \cdot \sup_{\pi\in\Pi} \E_{(s, a)\sim d_{\mathrm{data}}} \bracket*{ \frac{\gamma R_{\max}}{1 - \gamma} \cdot \TV(\sfP, \h\sfP) } \\
    & = \frac{\gamma R_{\max}}{1 - \gamma} \kappa \E_{(s, a)\sim d_{\mathrm{data}}} \bracket*{ \TV(\sfP, \h\sfP) }.
  \end{align*}
  This completes the proof.
\end{proof}

\subsection{Proof of Theorem~\ref{th:minimax_tv}}
\begin{proof}
  All three parts are straightforward consequences of the coverage bound
  (Corollary~\ref{cor:coverage-TV}) combined with elementary algebra.

  Part (A). Applying Corollary~\ref{cor:coverage-TV} and the definition of
  $\e_{\mathrm{critic}}$:
  \begin{align*}
    \sup_{\pi\in\Pi}\abs{V_\pi(\sfP) - V_\pi(\h\sfP^*)}
    & \leq \frac{\gamma R_{\max}}{1 - \gamma} \kappa \E_{d_{\mathrm{data}}}[\TV(\sfP, \h\sfP^*)] \\
    & = \frac{\gamma R_{\max}}{1 - \gamma} \kappa \E_{d_{\mathrm{data}}}[\tfrac{1}{2} \sup_{\|D\|_\infty \leq 1} \Delta_D(s, a)] \\
    & \leq \frac{\gamma R_{\max}}{1 - \gamma} \kappa \tfrac{1}{2} \paren*{ \E_{d_{\mathrm{data}}}[\sup_{D \in \cD} \Delta_D(s, a)] + \e_{\mathrm{critic}} } \\
    & = \frac{\gamma R_{\max}}{1 - \gamma} \kappa \paren*{ \tfrac{1}{2} J_\cD(\h\sfP^*) + \tfrac{1}{2} \e_{\mathrm{critic}} } \\
    & \leq \frac{\gamma R_{\max}}{1 - \gamma} \kappa \paren*{ \tfrac{1}{2} \e_{\mathrm{model}} + \tfrac{1}{2} \e_{\mathrm{critic}} }.
  \end{align*}

  Part (B). With probability at least $1 - \delta$, we have
  \[
    J_{\cD}(\h \sfP) \leq \h J_N(\h\sfP) + \omega \leq \inf_{\h \sfP \in \sM}\h
    J_N(\h \sfP) + \eta + \omega \leq \inf_{\h \sfP \in \sM}J_{\cD}(\h \sfP) +
    2\omega + \eta.
  \]
  Plugging this into the result from Part (A) as $\e_{\mathrm{model}}$ gives the
  finite-sample bound.
\end{proof}

\subsection{Proof of Theorem~\ref{th:minimax_wasserstein}}
\begin{proof}
  The proof mirrors Theorem~\ref{th:minimax_tv}. We apply
  Corollary~\ref{cor:coverage-wasserstein} and the definition of
  $\e_{\mathrm{critic}}$:
  \begin{align*}
    \sup_{\pi\in\Pi}\abs{V_\pi(\sfP) - V_\pi(\h\sfP^*)}
    & \leq \gamma L_v \kappa \E_{d_{\mathrm{data}}}[W_1(\sfP, \h\sfP^*)] \\
    & = \gamma L_v \kappa \E_{d_{\mathrm{data}}}[\sup_{\|D\|_{Lip} \leq 1} \Delta_D(s, a)] \\
    & \leq \gamma L_v \kappa \paren*{ \E_{d_{\mathrm{data}}}[\sup_{D \in \cD_{\text{Lip}}} \Delta_D(s, a)] + \e_{\mathrm{critic}} } \\
    & = \gamma L_v \kappa \paren*{ J_{\cD_{\text{Lip}}}(\h\sfP^*) + \e_{\mathrm{critic}} } \\
    & \leq \gamma L_v \kappa \paren*{ \e_{\mathrm{model}} + \e_{\mathrm{critic}} }.
  \end{align*}
  This completes the proof.
\end{proof}

\section{Active data collection}
\subsection{Proof of Theorem~\ref{th:error-mdp-duality}}
\begin{proof}
  The proof follows directly from the bound derived in
  Corollary~\ref{cor:coverage-wasserstein}, before the application of the
  coverage constant $\kappa$. From that proof, we have for any policy $\pi$:
  \[
    \abs*{V_\pi(\sfP) - V_\pi(\h \sfP)} \leq \gamma L_v \E_{(s, a) \sim d_\pi}
    \bracket*{ W_1(\sfP(\cdot \mid s, a), \h\sfP(\cdot \mid s, a)) }.
  \]
  We recognize the expectation term as the definition of the value of $\pi$ in
  the Error-MDP:
  \[
    \E_{(s, a) \sim d_\pi} \bracket*{W_1(\sfP(\cdot \mid s, a), \h\sfP(\cdot\mid
      s, a)) } = \E_{(s, a) \sim d_\pi} \bracket*{ r_{\text{err}}(s, a) } =
    V_{\pi}(\sfM_{\text{err}}).
  \]
  Substituting this back, we get:
  \[
    \abs*{V_\pi(\sfP) - V_\pi(\h \sfP)} \leq \gamma L_v
    V_{\pi}(\sfM_{\text{err}}).
  \]
  Since this holds for all $\pi \in \Pi$, we can take the supremum over all
  policies on both sides. The supremum of the left-hand side is our
  objective. The supremum of the right-hand side is, by definition, the optimal
  value of the Error-MDP:
  \[
    \sup_{\pi \in \Pi} \abs*{V_\pi(\sfP) - V_\pi(\h \sfP)} \leq \sup_{\pi \in
      \Pi} \paren*{ \gamma L_v V_{\pi}(\sfM_{\text{err}}) } = \gamma L_v
    \sup_{\pi \in \Pi} V_{\pi}(\sfM_{\text{err}}) = \gamma L_v
    V^*(\sfM_{\text{err}}).
  \]
  This completes the proof.
\end{proof}

\subsection{Proof of Theorem~\ref{th:finite_time_active}}
\begin{proof}
  The proof combines the convexity of the Wasserstein distance $W_1$
  in its second argument with the Wasserstein simulation bound and the
  uniform coverage assumption.

  By the Wasserstein analogue of the simulation lemma
  (Lemma~\ref{lemma:sim}), for any fixed policy $\pi \in \Pi$ and the
  averaged model
  $\ov{\h \sfP}_T = \frac{1}{T}\sum_{t=1}^T \h \sfP_t$:
  \[
    \abs*{V_\pi(\sfP) - V_\pi(\ov{\h \sfP}_T)}
    \leq \gamma L_v
    \E_{(s,a) \sim d_\pi}\Big[
      W_1\big(\sfP(\cdot|s,a),\, \ov{\h \sfP}_T(\cdot|s,a)\big)
    \Big].
  \]
  Since $W_1$ is convex in its second argument (as a supremum of
  linear functionals via the Kantorovich--Rubinstein duality), for each
  $(s, a)$:
  \[
    W_1\big(\sfP(\cdot|s,a),\, \ov{\h \sfP}_T(\cdot|s,a)\big)
    \leq \frac{1}{T}\sum_{t=1}^T
    W_1\big(\sfP(\cdot|s,a),\, \h \sfP_t(\cdot|s,a)\big).
  \]
  Substituting and applying the uniform coverage assumption
  ($d_\pi \leq \kappa\, d_t$ for all $\pi \in \Pi$ and $t$):
  \begin{align*}
    \sup_{\pi \in \Pi} \abs*{V_\pi(\sfP) - V_\pi(\ov{\h \sfP}_T)}
    &\leq \gamma L_v \sup_{\pi}
      \frac{1}{T}\sum_{t=1}^T
      \E_{d_\pi}\Big[ W_1\big(\sfP(\cdot|s,a),\, \h \sfP_t(\cdot|s,a)\big) \Big] \\
    &\leq \gamma L_v \kappa\,
      \frac{1}{T}\sum_{t=1}^T
      \E_{(s, a) \sim d_t}\Big[ W_1\big(\sfP(\cdot|s,a),\,
      \h \sfP_t(\cdot|s,a)\big) \Big].
  \end{align*}
  By the duality of Wasserstein distance and the definition of the
  critic approximation error $\e_{\mathrm{critic}}$:
  \[
    \E_{(s, a)\sim d_t}\Big[ W_1(\cdot, \cdot) \Big]
    \leq \E_{(s, a)\sim d_t}
    \Big[ \sup_{D\in\cD_{L_v}} \Delta_D(s, a) \Big]
    + \e_{\mathrm{critic}},
  \]
  where
  $\Delta_D(s, a) = \E_{s' \sim \sfP}[D(s, a, s')] - \E_{s'\sim \h
    \sfP_t}[D(s, a, s')]$.  Substituting back:
  \begin{align*}
    \sup_{\pi \in \Pi} \abs*{V_\pi(\sfP) - V_\pi(\ov{\h \sfP}_T)}
    &\leq \gamma L_v \kappa\,
      \frac{1}{T}\sum_{t=1}^T
      \paren*{\E_{d_t}\Big[\sup_{D\in\cD_{L_v}} \Delta_D\Big]
      + \e_{\mathrm{critic}}} \\
    &= \gamma L_v \kappa
      \paren*{\frac{1}{T}\sum_{t=1}^T
      \E_{d_t}\Big[\sup_{D\in\cD_{L_v}} \Delta_D\Big]
      + \e_{\mathrm{critic}}} \\
    &= \gamma L_v \kappa
      \paren*{\ov{J}_{\cD_{L_v}} + \e_{\mathrm{critic}}}.
  \end{align*}
  This completes the proof.
\end{proof}

\subsection{Proof of Theorem~\ref{th:convergence}}
\begin{proof}
  Let $L_t(\h\sfP, d) = L(\h\sfP, d)$ be the stationary payoff function.  The
  Model-Player (minimizer) incurs loss sequence $\ell_t(\cdot) = L(\cdot,
  d_t)$. By the definition of regret against the best fixed model in $\sM$:
  \begin{equation} \label{eq:regret_m} \sum_{t=1}^T L(\h\sfP_t, d_t) -
    \min_{\h\sfP^* \in \sM} \sum_{t=1}^T L(\h\sfP^*, d_t) \leq \Regret_M(T).
  \end{equation}
  The Distribution-Player (maximizer) receives payoff sequence
  $g_t(\cdot) = L(\h\sfP_t, \cdot)$. By the definition of regret against the
  best fixed distribution in $\Delta$:
  \begin{equation} \label{eq:regret_d} \max_{d^* \in \Delta} \sum_{t=1}^T
    L(\h\sfP_t, d^*) - \sum_{t=1}^T L(\h\sfP_t, d_t) \leq \Regret_D(T).
  \end{equation}
  Summing \eqref{eq:regret_m} and \eqref{eq:regret_d} cancels the interaction
  term $\sum_{t=1}^T L(\h\sfP_t, d_t)$:
  \begin{equation} \label{eq:regret_sum} \max_{d^* \in \Delta} \sum_{t=1}^T
    L(\h\sfP_t, d^*) - \min_{\h\sfP^* \in \sM} \sum_{t=1}^T L(\h\sfP^*, d_t)
    \leq \Regret_M(T) + \Regret_D(T).
  \end{equation}
  We now rely on the convexity-concavity of $L(\h\sfP, d)$:
  \begin{itemize}

  \item $L(\cdot, d)$ is convex with respect to $\h\sfP$ (as it is an
    expectation of the convex $W_1$ distance). By Jensen's inequality:
    \[
      \sum_{t=1}^T L(\h\sfP_t, d^*) \geq T \cdot L
      \paren*{\frac{1}{T}\sum_{t=1}^T \h\sfP_t, d^*} = T \cdot L(\ov{\h\sfP}_T,
      d^*).
    \]
  \item $L(\h\sfP, \cdot)$ is concave with respect to $d$ (as it is a linear
    expectation minus a convex KL divergence). By Jensen's inequality:
    \[
      \sum_{t=1}^T L(\h\sfP^*, d_t) \leq T \cdot L\left(\h\sfP^*,
        \frac{1}{T}\sum_{t=1}^T d_t\right) = T \cdot L(\h\sfP^*, \ov{d}_T).
    \]
  \end{itemize}
  Substituting these bounds into \eqref{eq:regret_sum} and dividing by $T$
  yields:
  \[
    \max_{d^* \in \Delta} L(\ov{\h\sfP}_T, d^*) - \min_{\h\sfP^* \in \sM}
    L(\h\sfP^*, \ov{d}_T) \leq \frac{\Regret_M(T) + \Regret_D(T)}{T}.
  \]
  This completes the proof.
\end{proof}

\section{Lipschitz Continuity of Value Functions}
\label{app:lipschitz}

In Section~\ref{sec:game-simplification}
(Corollary~\ref{cor:coverage-wasserstein} and
Theorem~\ref{th:minimax_wasserstein}), we assumed that the value function
$V_\pi^{\h \sfP}$ is $L_v$-Lipschitz with respect to the state metric $d$. We
now show that this property follows from standard smoothness assumptions on the
reward function, transition dynamics, and policy.

\begin{proposition}[Lipschitz Constant of Value Function]
  \label{prop:value_lipschitz}
  Let $(\sS, d_\sS)$ and $(\sA, d_\sA)$ be metric spaces. Assume the
  reward function $r(s,a)$ is $L_r^s$-Lipschitz in $s$ uniformly over
  $a$ and $L_r^a$-Lipschitz in $a$ uniformly over $s$:
  \[
    \abs{r(s,a) - r(s', a)} \leq L_r^s \, d_\sS(s, s'), \quad
    \abs{r(s,a) - r(s, a')} \leq L_r^a \, d_\sA(a, a'), \quad
    \forall s, s' \in \sS,\; a, a' \in \sA.
  \]
  Assume the transition kernel $\h \sfP(\cdot|s,a)$ is
  $L_P^s$-Lipschitz in $s$ and $L_P^a$-Lipschitz in $a$ with respect
  to the 1-Wasserstein distance $W_1$:
  \[
    W_1(\h \sfP(\cdot|s,a), \h \sfP(\cdot|s', a)) \leq L_P^s \, d_\sS(s, s'),
    \quad
    W_1(\h \sfP(\cdot|s,a), \h \sfP(\cdot|s, a')) \leq L_P^a \, d_\sA(a, a'),
  \]
  for all $s, s' \in \sS$ and $a, a' \in \sA$.
  Assume the policy $\pi(\cdot|s)$ is $L_\pi$-Lipschitz in $s$ with
  respect to $W_1$ on $\sA$:
  \[
    W_1(\pi(\cdot|s), \pi(\cdot|s')) \leq L_\pi \, d_\sS(s,s'),
    \quad \forall s, s' \in \sS.
  \]
  If $\gamma (L_P^s + L_\pi L_P^a) < 1$, then the value function
  $V_\pi^{\h \sfP}$ is $L_v$-Lipschitz with constant:
  \[
    L_v = \frac{L_r^s + L_\pi L_r^a}{1 - \gamma (L_P^s + L_\pi L_P^a)}.
  \]
\end{proposition}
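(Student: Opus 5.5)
I will prove Proposition~\ref{prop:value_lipschitz} with a contraction (fixed-point) argument on the space of Lipschitz functions. Write the policy Bellman operator for the model as
\[
(\mathcal{T} V)(s) = \E_{a\sim\pi(\cdot|s)}\bracket*{ r(s,a) + \gamma \E_{s'\sim\h\sfP(\cdot|s,a)}[V(s')] } .
\]
It is a $\gamma$-contraction in sup-norm, and $V_\pi^{\h\sfP}$ is its unique fixed point. The key step is a one-step Lipschitz propagation estimate: if $V$ is $L$-Lipschitz, then $\mathcal{T}V$ is $L'$-Lipschitz with
\[
L' = (L_r^s + L_\pi L_r^a) + \gamma L\,(L_P^s + L_\pi L_P^a).
\]
Iterating from $V_0 = 0$ gives $L_{k+1} = c + \beta L_k$, with $c = L_r^s + L_\pi L_r^a$ and $\beta = \gamma(L_P^s+L_\pi L_P^a) < 1$. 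The sequence $L_k$ increases monotonically to $c/(1-\beta) = L_v$, so every iterate $\mathcal{T}^k V_0$ is $L_v$-Lipschitz. Since $\mathcal{T}^k V_0 \to V_\pi^{\h\sfP}$ pointwise (uniformly, by contraction), and a pointwise limit of $L_v$-Lipschitz functions is $L_v$-Lipschitz, the claim follows.

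For the propagation estimate, fix $s,\tilde s$ and define $g_s(a) = r(s,a) + \gamma\E_{s'\sim\h\sfP(\cdot|s,a)}[V(s')]$. Then split
\[
(\mathcal{T}V)(s) - (\mathcal{T}V)(\tilde s) = \Big(\E_{\pi(\cdot|s)}[g_s] - \E_{\pi(\cdot|s)}[g_{\tilde s}]\Big) + \Big(\E_{\pi(\cdot|s)}[g_{\tilde s}] - \E_{\pi(\cdot|\tilde s)}[g_{\tilde s}]\Big).
\]
The first bracket is controlled pointwise in $a$. The reward Lipschitzness gives $L_r^s d_\sS(s,\tilde s)$. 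For the transition term, Kantorovich--Rubinstein duality gives $\abs{\E_{\h\sfP(\cdot|s,a)}V - \E_{\h\sfP(\cdot|\tilde s,a)}V} \le L\, W_1 \le L L_P^s d_\sS(s,\tilde s)$. For the second bracket, I show that $g_{\tilde s}$ is Lipschitz in $a$ with constant $L_r^a + \gamma L L_P^a$, again by duality with the $W_1$-Lipschitzness of $\h\sfP$ in $a$. A second application of Kantorovich--Rubinstein then bounds the bracket by $(L_r^a+\gamma L L_P^a)\,W_1(\pi(\cdot|s),\pi(\cdot|\tilde s)) \le (L_r^a+\gamma L L_P^a) L_\pi d_\sS(s,\tilde s)$. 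Adding the two brackets yields $L'$.

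The main obstacle is rigor in the limiting and duality steps rather than any algebra. Duality requires the test function to be Lipschitz, which is why the action-Lipschitzness of $g_{\tilde s}$ must be established first; it also requires measures with finite first moment, which I would assume or obtain from boundedness of $V$ on bounded sets. For the limit, I would use that $V_\pi^{\h\sfP}$ is bounded in $[0,R_{\max}/(1-\gamma)]$ so that the iteration stays in bounded functions where the contraction applies. An alternative that avoids the limit is to apply the estimate directly to the fixed point: assuming $V_\pi^{\h\sfP}$ is Lipschitz with some finite constant $L^\star$ yields $L^\star \le c + \beta L^\star$. That version presupposes finiteness of $L^\star$, however, so I prefer the iterative argument.
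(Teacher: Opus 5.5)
Your proposal is correct and follows the paper's proof: you derive the same one-step propagation bound, namely that $\mathcal{T}V$ is $\bigl(L_r^s + L_\pi L_r^a + \gamma(L_P^s + L_\pi L_P^a)L\bigr)$-Lipschitz whenever $V$ is $L$-Lipschitz, from the same split into a same-action-distribution/different-state term and a different-action-distribution/same-state term, each bounded by Kantorovich--Rubinstein duality. The one difference is the final step. The paper simply asserts that the fixed point's Lipschitz constant satisfies $L_v = L_r^s + L_\pi L_r^a + \gamma(L_P^s + L_\pi L_P^a)L_v$, which tacitly assumes that constant is finite (the shortcut you flagged and rejected). Your iteration from $V_0 = 0$ followed by a uniform limit makes this step rigorous.
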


\begin{proof}
  The value function $V_\pi^{\h \sfP}$ is the fixed point of the Bellman
  operator $\cT^\pi$:
  \[
    \cT^\pi V(s) = \E_{a \sim \pi(\cdot|s)} \bracket*{ r(s,a) + \gamma \E_{s'
        \sim \h \sfP(\cdot|s,a)}[V(s')] }.
  \]
  We show that if $V$ is $L$-Lipschitz, then $\cT^\pi V$ is
  $(L_r^s + L_\pi L_r^a + \gamma (L_P^s + L_\pi L_P^a) L)$-Lipschitz.  Let $s_1, s_2 \in \sS$ and define
  $g(s, a) = r(s,a) + \gamma \E_{s' \sim \h \sfP(\cdot|s,a)}[V(s')]$.
  We decompose the difference as
  \begin{align*}
    \abs{\cT^\pi V(s_1) - \cT^\pi V(s_2)}
    & = \abs*{ \E_{a \sim \pi(\cdot|s_1)} [g(s_1,a)] - \E_{a \sim \pi(\cdot|s_2)} [g(s_2,a)] } \\
    & \leq \underbrace{\abs*{ \E_{a \sim \pi(\cdot|s_1)} [g(s_1,a) - g(s_2,a)] }}_{\text{(I): same distribution, different inputs}}
      + \underbrace{\abs*{ \E_{a \sim \pi(\cdot|s_1)} [g(s_2,a)] - \E_{a \sim \pi(\cdot|s_2)} [g(s_2,a)] }}_{\text{(II): different distributions, same input}}.
  \end{align*}
  For any fixed $a$, the Lipschitz assumptions on $r$ and $\h \sfP$ in $s$ give
  \[
    \abs{g(s_1, a) - g(s_2, a)} \leq (L_r^s + \gamma L_P^s L)\, d_\sS(s_1, s_2),
  \]
  where we used Kantorovich-Rubinstein duality to bound
  $\abs{\E_{\h \sfP(\cdot|s_1,a)}[V] - \E_{\h \sfP(\cdot|s_2,a)}[V]} \leq L \, W_1(\h \sfP(\cdot|s_1,a), \h \sfP(\cdot|s_2,a)) \leq L \, L_P^s \, d_\sS(s_1, s_2)$.
  Thus, we have $\text{(I)} \leq (L_r^s + \gamma L_P^s L) \, d_\sS(s_1, s_2)$.

  The function $g(s_2, \cdot)$ is $(L_r^a + \gamma L_P^a L)$-Lipschitz in $a$,
  by the Lipschitz assumptions on $r$ and $\h \sfP$ in $a$.
  By the Kantorovich-Rubinstein duality applied to the distributions
  $\pi(\cdot|s_1)$ and $\pi(\cdot|s_2)$ on~$\sA$, we have
  \[
    \text{(II)}
    \leq (L_r^a + \gamma L_P^a L) \, W_1(\pi(\cdot|s_1), \pi(\cdot|s_2))
    \leq (L_r^a + \gamma L_P^a L) \, L_\pi \, d_\sS(s_1, s_2).
  \]
  Combining the bounds for Terms (I) and (II):
  \[
    \abs{\cT^\pi V(s_1) - \cT^\pi V(s_2)}
    \leq \bracket*{L_r^s + L_\pi L_r^a +
      \gamma (L_P^s + L_\pi L_P^a) L} \, d_\sS(s_1, s_2).
  \]
  The fixed point $V_\pi^{\h \sfP}$ must satisfy
  $L_{v} = L_r^s + L_\pi L_r^a + \gamma (L_P^s + L_\pi L_P^a) L_{v}$, which yields
  $L_v = \frac{L_r^s + L_\pi L_r^a}{1 - \gamma (L_P^s + L_\pi L_P^a)}$,
  provided $\gamma (L_P^s + L_\pi L_P^a) < 1$.
\end{proof}

\newpage
\section{Algorithm Implementation and Experimental Details}
\label{sec:implementation}

In this section, we detail the practical implementation of our minimax simulator
learning framework. We adopt the formulation from
Algorithm~\ref{alg:active-wasserstein} but introduce two crucial modifications
for stability and efficiency: (1) we enforce the Lipschitz constraint on the
critic using a gradient penalty as is standard in WGAN-GP (Wasserstein
Generative Adversarial Network with Gradient Penalty)
\citep{gulrajani2017improved}; and (2) we constrain the adversarial exploration
to task-relevant regions.

\subsection{Task-aware active sampling via constrained optimization}

While our theory defines a zero-sum game over the space of \emph{all} policies
$\Pi$, in practice we are primarily interested in the simulator's accuracy on
the manifold of states visited by competent agents. A pure minimax adversary
might focus on modeling complex but irrelevant dynamics (e.g., the chaotic
motion of a robot after it has already fallen/failed), which wastes sampling
budget on regions that do not contribute to task performance.

To formally address this, we reframe the sampling step. Instead of an
unconstrained maximization of model error, we formulate it as a constrained
optimization problem. We seek the policy $\pi$ that maximizes the estimator's
error (finding the model's blind spots) subject to the constraint that the
policy remains competent at the task, achieving a return above a threshold
$V_{\min}$.
\[
  \max_{\pi \in \Pi} \quad \E_{\pi} \bracket*{ \sum_{t=0}^H D(s_t, a_t, s_{t+1})
  } \quad \text{s.t.} \quad \E_{\pi} \bracket*{ \sum_{t=0}^H
    r_{\mathrm{task}}(s_t, a_t) } \ge V_{\min}.
\]
We solve this problem via Lagrangian relaxation. We introduce a Lagrange
multiplier $\lambda \geq 0$ (represented by $1/\alpha$ in our algorithm) to
convert the hard constraint into a soft unconstrained objective:
\[
  \max_{\pi} \quad \E_{\pi} \bracket*{ \sum_{t=0}^H \paren*{ D(s_t, a_t,
      s_{t+1}) + \lambda r_{\mathrm{task}}(s_t, a_t) } } - \lambda V_{\min}.
\]
Since $\lambda V_{\min}$ is a constant with respect to $\pi$, maximizing this
Lagrangian is equivalent to maximizing the cumulative sum of a hybrid reward
function $r_{\mathrm{sample}}$:
\[
  r_{\mathrm{sample}}(s, a) = \alpha r_{\mathrm{task}}(s, a) + \bracket*{ D(s,
    a, s') - \E_{s'' \sim \h \sfP}[D(s, a, s'')] }.
\]
Here, the hyperparameter $\alpha$ controls the tightness of the constraint. When
$\alpha \to 0$, we recover the pure Minimax formulation (Global
Robustness). When $\alpha$ is large, the agent prioritizes task performance,
using the model error term purely as an intrinsic exploration bonus to explore
the boundaries of the relevant state space.

\subsection{Algorithm details and representation}

The full procedure is summarized in Algorithm~\ref{alg:implementation}. It
proceeds in three key iterative steps:

\begin{enumerate}

\item Metric Learning (Critic Step). We update the critic $D$ to maximize the
  Wasserstein distance between real transitions $(s, a, s')$ and \emph{dreamed}
  transitions $(s, a, s'')$.
  \begin{itemize}

  \item Representation: The critic $D_\phi(s, a, s')$ is a standard multi-layer
    perceptron neural network (MLP). Crucially, the final layer is a linear
    projection without activation (that is, no Sigmoid or Tanh). This is
    required because the Kantorovich-Rubinstein duality defines the distance as
    a supremum over 1-Lipschitz functions $f\colon \sS \to \Rset$, which must be
    unbounded. Sigmoid activations would restrict the range to $[0, 1]$,
    collapsing the metric to the Jensen-Shannon divergence.

  \item Gradient penalty: To enforce the Lipschitz constraint required for the
    linear objective to be valid, we apply a gradient penalty
    $\lambda_{\mathrm{GP}}$ to the norm of the critic's gradients.

  \end{itemize}

\item Simulator Learning (Model Step). The model $\h \sfP_\theta$ is updated to
  fool the critic.
  \begin{itemize}

  \item Representation: The model can be represented as a probabilistic neural
    network (e.g., an ensemble of Gaussian MLPs) that outputs the parameters of
    a distribution $s'' \sim \cN(\mu_\theta(s,a),
    \Sigma_\theta(s,a))$. Gradients are propagated through the sampling step
    using the reparameterization trick.

  \item Linear Objective: We update the model via gradient ascent on
    $\E[D(s, a, s'')]$. Unlike standard GANs which often require a log-loss
    surrogate (e.g., $\log D$) to prevent vanishing gradients when the
    discriminator saturates, our use of the Wasserstein metric with WGAN-GP
    ensures that the critic provides useful linear gradients even when the model
    is far from the target. Thus, we optimize the linear objective directly,
    preserving the theoretical connection to the $W_1$ distance.
  \item While we use the linear objective to guide the data collection, we also
    train a second model using log-loss on the same data and use this as the
    final simulator. Depending on the model architecture, this can help improve
    generalization. Importantly, unlike in GANs, the log-loss surrogate trained
    model is not affecting the metric learning and data collection.
  \end{itemize}

\item Active Sampling (Policy Step). The policy $\pi_\psi$ is updated to
  maximize the hybrid reward $r_{\mathrm{sample}}$ derived above. This guides
  data collection toward regions where the current model $\h \sfP_t$ has high
  discrepancy from the real dynamics, while staying on the manifold of relevant
  tasks.

\end{enumerate}

\textbf{Sample vs.\ Computational Efficiency.}
We acknowledge that Algorithm~\ref{alg:active-wasserstein} involves solving an
inner RL problem (optimizing $\pi$ in the Error-MDP) at each iteration. While
computationally more intensive than standard MLE or heuristic uncertainty
sampling (e.g., ensembles), this cost is justified in settings where
\emph{real-world samples} are the scarce resource (e.g., robotics, medical
AI). Our framework effectively trades increased offline computation for
minimized real-world interaction, ensuring that every sample collected targets a
strategically relevant deficiency in the model.
Furthermore, we note that finding the exact optimal policy $\pi^*$ for the
Error-MDP at every iteration is not strictly necessary. In practice, taking a
few gradient steps on the policy (e.g., via PPO) to improve its error-seeking
capability is sufficient to discover new, high-error regions, making the inner
loop computationally tractable.

The direct implementation of Algorithm~\ref{alg:active-wasserstein} samples from
the real environment iteratively, similar to standard MBRL methods. However, in
computationally constrained settings, one can decouple this by maintaining a
replay buffer and using importance sampling to sample heavily from the buffer's
historical data, falling back on the real environment only when the buffer lacks
coverage in the high-error regions identified by the critic.

\textbf{Inaccurate critic predictions harming training?}  Please note that
initial inaccuracies in the critic do not derail training because the algorithm
is inherently self-correcting. If the critic falsely assigns a high error score
to a specific state-action pair, the sampling distribution will place high
probability mass there. In the next iteration, the algorithm will collect
ground-truth samples from exactly this pair, automatically correcting both the
critic's estimate and the simulator's accuracy in that region.

\begin{algorithm}[H]
  \caption{Implementation of Algorithm~\ref{alg:active-wasserstein}
    (with WGAN-GP and Task-Aware Sampling)}
\label{alg:implementation}
\begin{algorithmic}[1]
  \STATE \textbf{Input:} Initial model $\h \sfP_0$ (Gaussian MLP),
  initial policy $\pi_0$, initial critic $D_0$ (Linear output),
  iterations $T$, GP coefficient $\lambda_{\mathrm{GP}} \ge 0$, task
  weight $\alpha \ge 0$.

\FOR{$t = 1$ to $T$}

\STATE \textbf{// 1. Data Collection.}

\STATE Sample trajectories with current policy $\pi_{t-1}$ to create
batch $B_t$ of transitions $(s, a, r_{\mathrm{task}}, s')$

\STATE \textbf{// 2. Critic update.}

\STATE Sample random interpolates
$\tilde{s} \leftarrow \e s' + (1 - \e) s''$ where
$s'' \sim \h \sfP_{t-1}(s, a)$ and $\e \sim U[0,1]$

\STATE Update critic $D_t$ by gradient ascent step on:
\[
  \E_{B_t} \bracket*{ \underbrace{D(s, a, s')}_{\text{Real}}
  - \E_{s''\sim \h \sfP_{t-1}}[D(s, a, s'')]
  - \lambda_{\mathrm{GP}} \underbrace{\paren*{
      \norm*{\nabla_{\tilde{s}} D(s, a, \tilde{s}) }_2 - 1}^2}_{\text{Gradient Penalty}} }
\]

\STATE \textbf{// 3. Model update.}
\STATE Update model $\h \sfP_t$ by gradient ascent step on:
\[
  \E_{(s, a)\sim B_t} \bracket*{ \E_{s''\sim \h \sfP(s,a)}\bracket*{ D_t(s, a, s'') } }
\]
\COMMENT{Maximizes the fake score.}

\STATE \textbf{// 4. Policy Update (task-aware).}
\STATE Compute hybrid reward for batch:
\[
  r_{\mathrm{sample}}(s, a) = \alpha r_{\mathrm{task}}
  + \bracket*{ D_t(s, a, s') - \E_{s'' \sim \h \sfP_t}[D_t(s, a, s'')] }
\]
\STATE Update policy $\pi_t$ (e.g., via SAC) to maximize
$\E_{\pi} [ r_{\mathrm{sample}}(s, a) ]$
\COMMENT{Constraint: Explores high-error regions within the task
  manifold.}

\ENDFOR

\STATE \textbf{Output:} Averaged model $\ov{\h \sfP}_T
= \frac{1}{T}\sum_{t=1}^T \h \sfP_t$
\end{algorithmic}
\end{algorithm}

\subsection{Additional Details on Experimental Setup}

We build our implementation for the experiments on policy training in control
tasks on top of the Acme framework~\citep{hoffman2020acme}. The implementation
of domains are from Gymnasium~\citep{towers2024gymnasium} and the DM-Control
Suite~\citep{tunyasuvunakool2020}. For ease of computation, we limit the horizon
for each task to 200 steps.

For policy, critic and simulator model, we use a feed forward neural network
with two hidden layers of 256 nodes each and ReLU activations.  The final layers
are all linear with the following output
\begin{itemize}
\item \textbf{Policy:} Mean and log standard deviation of the Gaussian from
  which the action is drawn.
\item \textbf{Critic:} Single output as the critic score.
\item \textbf{Simulator model:} Number of state dimensions outputs predicting
  the next state and an additional output predicting reward.
\end{itemize}

We use an Adam optimizer with learning rate as $3e-4$ for each model. The
baseline only uses the task reward when optimizing the data collection policy
while Algorithm~\ref{alg:active-wasserstein} uses a uniform mixture of task
reward and critic difference score, strategically seeking high error regions
that are relevant for high reward policies.

All experiments were performed on an internal cluster with $64$ CPUs per worker
and a H100 GPU. However, none of the neural networks and domains in our
empirical evaluation require extensive compute and we believe that the results
can be reproduced on a single consumer hardware machine with any modern GPU
within as most 1 week of compute time.

\paragraph{Details on Narrow Passage Domain}
We define a continuous 2D domain where relevant model errors are concentrated in
a small region that is rarely visited by uniform exploration, but strictly
necessary for optimal policy performance:

\begin{itemize}

\item State Space: $\sS = [0, 1]^2$. Agent starts at $(0.1, 0.5)$. Goal region
  is $x > 0.9$.

\item Action Space: $\sA = [-0.1, 0.1]^2$ (velocity vector $(\dot x, \dot y)$).

\item True Dynamics ($\sfM$):

  \begin{itemize}

  \item \emph{Normal Region:} $s_{t+1} = s_t + a_t + \mathcal{N}(0, 0.01I)$.

  \item \emph{The Wall:} A barrier at $x=0.5$ blocks all movement, \emph{except}
    for a narrow passage at $y \in [0.45, 0.55]$.

  \item \emph{The Trap:} Inside the narrow passage, there is a strong "wind"
    pushing the agent down: $\dot y_{\text{eff}} = \dot y - 0.05$.

  \end{itemize}

\item Initial Simulator ($\h \sfM_0$): A naive linear model,
  $s_{t+1} = s_t + a_t$, trained on initial uniform random data. It likely
  learns free movement everywhere, missing the wall and the trap.

\end{itemize}

While policies and models are stochastic (Gaussian) for experiments in control
tasks, we here use a deterministic transition model. Then the 1-Wasserstein
distance cleanly reduces to the expected distance between predicted and observed
state.  This simplification was an intended benefit of the experimental
design. It allowed us to clearly visualize and validate that the critic was
accurately learning the Error-MDP reward function without the added confounding
variance of sampling from a stochastic learned model.

\end{document}